\begin{document}

\title{\Large A Space-Efficient Algebraic Approach to Robotic Motion Planning}
\author{%
Matthias Bentert\thanks{University of Bergen, Norway}\and
Daniel Coimbra Salomao\thanks{University of Utah, USA}\and
Alex Crane\footnotemark[2]\and
Yosuke Mizutani\footnotemark[2]\and
Felix Reidl\thanks{Birkbeck, University of London, UK} \and
Blair D. Sullivan\footnotemark[2]
}

\date{}

\maketitle







\begin{abstract}
  We consider efficient route planning for robots in applications such as infrastructure inspection and automated surgical imaging.
  These tasks can be modeled via the combinatorial problem \gi{}.
  The best known algorithms for this problem are limited in practice by exponential space complexity.
  In this paper, we develop a memory-efficient approach using algebraic tools related to monomial testing on the polynomials associated with certain arithmetic circuits.
  Our contributions are two-fold. 
  We first repair a minor flaw in existing work on monomial detection using a new approach we call tree certificates. We further show that, in addition to detection, these tools allow us to efficiently \emph{recover} monomials of interest from circuits, opening the door for significantly broadened application of related algebraic tools.
  For \gi{}, we design and evaluate a complete algebraic pipeline. 
  Our engineered implementation demonstrates that circuit-based algorithms are indeed memory-efficient in practice, thus encouraging further engineering efforts.
\end{abstract}

\section{Introduction}\label{sec:intro}

We are motivated by robotic inspection planning~\cite{englot2017planning,fu2021computationally}, where a robot is tasked with inspecting ``points of interest'' by traveling a route in its configuration space.
This problem is modeled combinatorially as \gi{}:
given an edge-weighted and
vertex-multi-colored graph, find a minimum-weight closed walk from a given
starting vertex $s$ that collects at least~$t$ colors.
The colors allow us to model a ``collection'' problem,
generalizing\footnote{Related is the \textsc{Generalized Traveling Salesman Problem} (\textsc{GTSP})~\cite{pop2023comprehensive}, which asks for a simple path rather than a walk. Our results extend to \textsc{GTSP} restricted to complete metric graphs.\looseness=-1}
the \textsc{Traveling Salesman Problem} by, in our setting, allowing points of interest to be observable from multiple robot configurations. 
Traditionally, roboticists have relied upon heuristic solvers, as exact solutions
were viewed as prohibitively expensive to compute.
Recently, however, Mizutani \emph{et al.}~\cite{mizutani2024leveraging} improved upon the state-of-the-art planning heuristic~\cite{fu2021computationally} by using exact integer linear programming (ILP) and dynamic programming (DP) solvers for \gi{} as subroutines.
They demonstrated improved solution quality with comparable running times on simulated robotic tasks.
Mizutani \emph{et al.} noted two limitations of their solvers which constrain their scalability.
The ILP solver fails on large networks\footnote{The experiments in~\cite{mizutani2024leveraging} were limited to instances with roughly $2,000$ vertices and $40,000$ edges.} in part because the ILP itself scales with the number of edges in the network times the number $|\colorset|$ of colors.
In contrast, the DP solver's running time scales linearly in the network size but exponentially in~$|\colorset|$.
Crucially, this exponential dependence on~$|\colorset|$ is also true of the DP's memory consumption, creating a sharp limit for its use case, especially given increasing interest in reducing the hardware (and economic) requirements of motion planning~\cite{ichnowski2020cloud,ichnowski2014cache,ichnowski2020economic,noreen2016optimal}.

At a high level, the goal of this paper is to determine whether an algebraic circuit-based framework~\cite{koutis2016limitsapplications} might
overcome these shortcomings. Toward this end,
we first contribute new machinery which allows us to (a) repair a minor flaw in the framework, and (b) use the framework \emph{constructively}, rather than only to answer decision problems.
These contributions apply to the framework as a whole, not just our specific application. 
Second, we use the newly-augmented framework to design, analyze, implement, and evaluate an algorithm for \gi{} which
has \emph{polynomial} space complexity and similar theoretical running time to the DP solver~\cite{mizutani2024leveraging}. Our implementation
fits seamlessly into the pipeline of~Mizutani \emph{et al.} \cite{mizutani2024leveraging} and uses significantly less memory.
Though the empirical running time of our algorithm remains slow,
our experiments reveal that it is highly sensitive to a variety of algorithmic and implementation optimizations.
This motivates further research on the engineering of algebraic techniques.

\bigskip
\noindent\textbf{Algebraic Simulation of Dynamic Programming}.
One theoretical remedy to dynamic programming algorithms with exponential space complexity are algebraic approaches which ``simulate'' dynamic programming by constructing compact arithmetic circuits and testing the polynomials represented by these circuits for the presence of multilinear monomials~\cite{guillemot2013findingcounting,koutis2008fasteralgebraic,koutis2016limitsapplications,williams2009findingpaths}. We can conceptualize this as a polynomial-time (and thus polynomial-space) reduction from the source problem to an instance of \textsc{Multilinear Detection}, in which we are given an arithmetic circuit and are tasked with deciding whether a multilinear monomial exists (see~\Cref{sec:prelims} for definitions). 
Our basic algorithmic strategy, outlined in~\Cref{sec:gi-algorithm}, is to design such a reduction from \gi{} and then apply algorithms for \MLD{}, which crucially require only polynomial space despite running times similar to the direct dynamic programming approach.
However, we must overcome numerous challenges.\looseness=-1

\bigskip
\noindent\textbf{Tree Certificates}.
Our main theoretical contribution is the introduction of \emph{tree certificates}, which serve two purposes.
First, they allow us to repair a newly-discovered flaw in the best-known algorithm for \MLD{} (see~\Cref{sec:MLD}).
Second, they enable efficient \emph{solution recovery}.
From the viewpoint of a theorist, it is appropriate to design a reduction from one's source problem to \MLD{} (a decision problem),
and then simply note that a solution may be constructed via self-reduction.
In practice, however, it is essential to avoid this latter step if there is to be any hope of acceptable running times.
Toward this end, in~\Cref{sec:solution-recovery-mld} we present two
randomized algorithms (one Monte Carlo and one Las Vegas) which recover tree certificates for certain linear monomials of degree $k$ in
time~$\tilde{\Oh}(2^k \alpha m)$ in circuits with~$m$ edges,
where $\alpha$ is the maximum number of addition gates in a tree certificate.
In~\Cref{sec:two-phase}, we refine our approach to
provide a faster solution recovery method specific to~\gi{}.\looseness=-1

\bigskip
\noindent\textbf{Refining our Reduction}.
Our initial reduction from \gi{} to \MLD{} is non-trivial, but since the running time
of a \MLD{} algorithm depends on various structural aspects of the constructed circuit,
we optimize the reduction further.
In~\Cref{sec:circuit}, we give four circuit constructions (i.e., reductions) which all guarantee correctness, but have varying asymptotic guarantees on the structure of the constructed circuits.
A similar challenge is that when constructing a circuit to represent an instance of~\gi{}, we must repeatedly guess
the weight of an optimal solution walk. In~\Cref{sec:search-strategies}, we present three strategies, each with randomized correctness guarantees.
Putting it all together, we prove in~\Cref{sec:proof-main-thm} that our techniques yield an algorithm which solves \gi{} in~$\tilde{\Oh}(2^t (\ell t^3 n^2 + t^3 \colorsetsize n))$ time and uses~$\tilde{\Oh}(\ell t n^2 + t|C| n)$ space, where~$\ell$ is an upper bound on the solution weight, $\colorsetsize$ is the number of colors in the instance, and~$t$ is the minimum number of colors we must collect.


\vspace*{1em}
\noindent\textbf{Empirical Evaluation}.
In~\Cref{sec:experiments}, we present the results of a comprehensive evaluation of our techniques
on two real-world datasets from Fu \emph{et al.}~\cite{fu2021computationally}, one for a bridge inspection scenario and another for a surgical imaging scenario.
We test different combinations of
strategies for circuit construction, finding the optimal solution weight, and recovering solution walks.
By comparing directly against the DP-based implementation of Mizutani \emph{et al.}~\cite{mizutani2024leveraging},
we demonstrate that our circuit-based approach uses significantly less memory once at least~$19$ colors must be collected, and that the memory consumption trends for both software solutions match the asymptotics, suggesting that this gap will rapidly grow.
Moreover, while the circuit-based software remains slower than dynamic programming, the advances made
in~\Cref{sec:solution-recovery-mld,sec:circuit,sec:search-strategies,sec:two-phase} produce
large improvements. We also detail several findings related to parameter tuning and multithreading. 

\section{Preliminaries}\label{sec:prelims}

\begin{figure*}[h]
    \pgfdeclarelayer{bg}
    \pgfsetlayers{bg, main}

    \tikzstyle{bigblacknode} = [circle, fill=gray, text=white, draw, thick, scale=1, minimum size=0.6cm, inner sep=1.5pt]
    \tikzstyle{bigwhitenode} = [circle, fill=white, text=black, draw, thick, scale=1, minimum size=0.6cm, inner sep=1.5pt]

    \tikzstyle{blacknode} = [circle, fill=gray, draw, thick, scale=1, minimum size=0.2cm, inner sep=1.5pt]
    \tikzstyle{whitenode} = [circle, fill=white, draw, thick, scale=1, minimum size=0.2cm, inner sep=1.5pt]

    \tikzstyle{hugewhitenode} = [circle, fill=white, text=black, draw, thick, scale=1, minimum size=1.5cm, inner sep=1.5pt, font=\large]
    \tikzstyle{directed} = [color=black, arrows=- triangle 45]

    \tikzset{
        old inner xsep/.estore in=\oldinnerxsep,
        old inner ysep/.estore in=\oldinnerysep,
        double circle/.style 2 args={
            circle,
            old inner xsep=\pgfkeysvalueof{/pgf/inner xsep},
            old inner ysep=\pgfkeysvalueof{/pgf/inner ysep},
            /pgf/inner xsep=\oldinnerxsep+#1,
            /pgf/inner ysep=\oldinnerysep+#1,
            alias=sourcenode,
            append after command={
            let     \p1 = (sourcenode.center),
                    \p2 = (sourcenode.east),
                    \n1 = {\x2-\x1-#1-0.5*\pgflinewidth}
            in
                node [inner sep=0pt, draw, circle, minimum width=2*\n1,at=(\p1),#2] {}
            }
        },
        double circle/.default={-3pt}{black}
    }

    \tikzmath{\yunit = 0.6;}

    \centering
    \begin{minipage}[m]{0.90\linewidth}
        \vspace{0pt}
        \centering
        \begin{tikzpicture}
            \node[bigblacknode] (x) at (0, 0) {$y$};
            \node[bigblacknode] (y) at (0, 2 * \yunit) {$x$};
            \node[bigwhitenode] (a1) at (1, \yunit) {$+$};
            \node[bigwhitenode] (m1) at (2.4, \yunit) {$\times$};
            \node[bigwhitenode] (a2) at (3.4, 0) {$+$};
            \node[bigwhitenode] (a3) at (3.4, 2 * \yunit) {$+$};
            \node[bigwhitenode, double circle] (m2) at (4.4, \yunit) {$\times$};

            \draw[directed] (x) -- (a1);
            \draw[directed] (y) -- (a1);
            \draw[directed] (a1) -- (m1);
            \draw[directed] (m1) -- (a2);
            \draw[directed] (m1) -- (a3);
            \draw[directed] (a2) -- (m2);
            \draw[directed] (a3) -- (m2);

            \node () at (0.8, \yunit + 0.5) {$a_1$};
            \node () at (0.8, \yunit - 0.5) {$a_2$};
            \node () at (2.8, \yunit + 0.6) {$a_3$};
            \node () at (2.8, \yunit - 0.6) {$a_4$};

            \draw[rounded corners, dashed, gray] (-0.4, -0.4) rectangle ++ (0.8, 2 * \yunit + 0.8);
            \node () at (0, -0.7) {\emph{variables}};
            \node () at (5.1, \yunit + 0.6) {\emph{output node}};
            \node () at (3.2, -0.7) {Arithmetic circuit $\C$};
            \node[white] () at (-1.0, 0) {.};

            \node[align=left,anchor=west] () at (5.8, 0.6) {
                \begin{minipage}{0.65\textwidth}
                \begin{align*}
                    P_\C(X) &= (x+y)^2=x^2 + 2xy + y^2\\
                    \\
                    P_\C(X,A) &= ((a_1 x+ a_2 y)\cdot a_3)((a_1 x+ a_2 y)\cdot a_4)\\
                &= a_3 a_4 a_1^2 x^2 + 2 a_1 a_2 a_3 a_4 xy + a_3 a_4 a_1^2 y^2
                \end{align*}
                \end{minipage}
            };
            \draw[gray] (7.1, -0.7) rectangle ++ (7.8, 2.2);
        \end{tikzpicture}
    \end{minipage}
    \caption{%
        An arithmetic circuit (left)
        with 2 variables, 3 addition gates, and 2 multiplication gates, and its associated polynomial $P_\C(X)$ and fingerprint polynomial $P_\C(X,A)$
        (right).
        This is a counterexample of a claim in \cite{koutis2016limitsapplications}.\looseness=-1
    }
    \label{fig:example-circuit}
\end{figure*}

We refer to the textbook by Diestel~\cite{diestel2005graph} for standard graph-theoretic definitions and notation.
For a set $S$, the notation $2^S$ indicates the power set of $S$.
Unless otherwise specified, all graphs $G = (V, E)$ in this work are undirected, with edges weighted
by a function $w \colon E \rightarrow \R_{\geq 0}$ and vertices
multi-colored by a function ${\col \colon V \rightarrow 2^\colorset}$, where $\colorset$ is the \emph{color set}.
Given a vertex subset $S \subseteq V$, we use the notations $\col(S)$ for $\bigcup_{v \in S} \col(v)$,
$G[S]$ for the subgraph induced by $S$, and $G-S$ for ${G[V \setminus S]}$. If $S = \{v\}$, we write $G-v$
instead of $G-\{v\}$.\looseness=-1

A (simple) \emph{path} $P = (v_1, v_2, \ldots, v_p)$ is a sequence of
distinct vertices with $v_iv_{i+1} \in E$ for all $i < p$. A \emph{walk} is defined similarly, but in this case a vertex
may appear more than once. A walk is \emph{closed} if it starts and ends at the same vertex.
The \emph{weight} of a walk is the sum of the weights of its edges, i.e., $\sum_{i = 1}^{p-1}w(v_iv_{i+1})$.
The \emph{distance} between two vertices $u, v \in V$, denoted by $d(u, v)$, is the minimum weight
across all walks between $u$ and $v$. The distance between a vertex $v$ and a vertex set $S \subseteq V$
is the minimum distance between $v$ and any vertex in $S$, i.e., $d(v, S) = d(S, v) = \min_{u \in S} d(v, u)$.

The $\Oh^*(\cdot)$ and $\tilde{\Oh}(\cdot)$ notations are variants on the standard $\Oh(\cdot)$, hiding polynomial and polylogarithmic factors, respectively.
Now, we define our problem:

\begin{problembox}{Graph Inspection}
    \Input & A graph~$G=(V,E)$, a color set $\colorset$, edge-weights $w \colon E \rightarrow \R_{\geq 0}$,
    vertex-colors $\col \colon V \rightarrow 2^\colorset$, a vertex $s$, and an integer $t$. \\
    \Prob  & Find a minimum-weight closed walk $P= (v_0,v_1,\ldots,v_{p})$ in $G$
    with $v_0=v_p=s$ and $|\bigcup_{i=1}^p \col(v_i)|\geq t$.
\end{problembox}

For the sake of simplicity, we may assume that~$G$ is connected, $t \leq |\colorset |$, and~$\col(s) = \emptyset$.


\smallskip
\noindent\textbf{Polynomials and arithmetic circuits.} 
A \emph{monomial} over a set of variables~$X$ is a (commutative) product of variables from~$X$. We call a monomial \emph{multilinear} if no variable appears more than once. A polynomial is a linear combination of monomials with coefficients from~$\mathbb Z_+$.\looseness=-1

An \emph{arithmetic circuit}~$\C$ over~$X$ is a directed acyclic graph (DAG) for which every source is labelled either by a constant from~$\mathbb Z_+$ (a \emph{scalar}) or by a variable from~$X$, and furthermore every internal node is labelled as either an \emph{addition} or a \emph{multiplication} node. The internal nodes and sinks of the DAG are called \emph{gates} and \emph{outputs}, respectively, of the circuit $\C$. 

For a node~$v \in \C$, we define~$\C[v]$ to be the arithmetic circuit induced by all
nodes that can reach~$v$ in~$\C$, including~$v$.
We further define~$P_{\C[v]}(X)$ to be the polynomial that results from expanding the arithmetic expression of~$\C[v]$
into a sum of products.
For a circuit~$\C$ with a single output~$r \in V(\C)$, we write~$P_\C(X)$ for the polynomial~$P_{\C[r]}(X)$.
\Cref{fig:example-circuit} illustrates an example of an arithmetic circuit.

We now formalize the problem of checking whether 
the polynomial representation of an output node includes
a multilinear monomial.

\begin{problembox}{\MLD}
  \Input & An arithmetic circuit $\C$ over a set $X$ of variables
  and an integer $k$.\\
  \Prob  & For each output node $r \in V(\C)$,
  determine if $P_{\C[r]}(X)$ contains a multilinear monomial of degree at most $k$.\\
\end{problembox}

\section{Solving \MLD}\label{sec:MLD}


Our strategy of reducing \gi{} to \MLD{} will only be effective if we can efficiently solve the latter problem.
The following lemma was claimed previously by Koutis and Williams~\cite{koutis2016limitsapplications}.

\begin{restatable}[$\bigstar$]{lemma}{mldlemma}\label{lem:williams}
  \MLD{} can be solved in randomized ${\Oh^*}(2^k)$-time and polynomial space.
  This is a one-sided error Monte Carlo algorithm with a constant success probability.
\end{restatable}

Koutis and Williams begin their argument by showing the result for a restricted class of circuits.
Let $P_\C(X)$ be a polynomial represented by an (arithmetic) circuit $\C$;
Koutis and Williams define $P_\C(X,A)$ to be a \emph{fingerprint} polynomial derived from~$\C$ 
as follows:
for every addition gate~$v \in V(\C)$ and input edge~$uv \in E(\C)$,
they annotate the edge~$uv$ with a dedicated variable~$a_{uv} \in A$.
The semantic of this annotation is that the output of node~$u$ is multiplied by~$a_{uv}$
before it is fed to~$v$.
Koutis and Williams prove the following:

\begin{lemma}[\cite{koutis2016limitsapplications, williams2009findingpaths}]\label{lem:williams-poly}
  Let $\C$ be a connected arithmetic circuit over a set $X$ of variables,
  and let $k$ be an integer.
  Suppose the coefficient of each monomial in $P_\C(X,A)$
  is $1$.
  %
  Then, there exists a randomized $\Oh^*(2^k)$-time polynomial-space
  algorithm for \prob{Multilinear Detection} on $(\C, X, k)$.
  %
  This is a one-sided error Monte Carlo algorithm with a constant success probability.
\end{lemma}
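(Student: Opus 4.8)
The plan is to give a self-contained version of the Koutis--Williams randomized algorithm, being careful to pinpoint where the coefficient-$1$ hypothesis on the fingerprint polynomial is used. Fix a field $\mathbb{F}=\mathbb{F}_{2^\ell}$ of characteristic two with $\ell=\Theta(\log(k\,|\C|))$ and pass to the group algebra $\mathbb{F}[\Z_2^k]$, with basis $\{\chi(v):v\in\Z_2^k\}$; the only properties needed are $\chi(u)\chi(v)=\chi(u+v)$ and, since $2\cdot(\,\cdot\,)=0$ in characteristic two, $\bigl(\chi(\mathbf 0)+\chi(v)\bigr)^2=\chi(\mathbf 0)+\chi(\mathbf 0)=0$ for every $v$. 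Draw, independently and uniformly, a vector $v_x\in\Z_2^k$ and a scalar $z_x\in\mathbb{F}$ for each $x\in X$, and a scalar $\alpha_{uv}\in\mathbb{F}$ for each fingerprint variable $a_{uv}\in A$; substitute $x\mapsto z_x\bigl(\chi(\mathbf 0)+\chi(v_x)\bigr)$ and $a_{uv}\mapsto\alpha_{uv}$. A single bottom-up evaluation of $\C$ under this assignment produces the value $Z_r\in\mathbb{F}[\Z_2^k]$ of every output $r$ at once, and the algorithm reports \true for $r$ iff $Z_r\neq 0$.

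For correctness, expand $P_{\C[r]}(X,A)=\sum_j m_j(X)\,n_j(A)$ — by hypothesis every coefficient is $1$, so in particular the pairs $(m_j,n_j)$ are pairwise distinct — and track each term under the substitution. If $m_j$ is not multilinear then some variable occurs squared and the relation $(\chi(\mathbf 0)+\chi(v))^2=0$ kills the term; if $m_j$ is multilinear of degree $d$, its image is $\bigl(\prod_{x\in m_j}z_x\bigr)\sum_{U\subseteq m_j}\chi\!\bigl(\sum_{x\in U}v_x\bigr)$, which is forced to vanish when $d>k$ (then the $v_x$'s are linearly dependent and the sum collapses in characteristic two), and when $d\le k$ is nonzero exactly if $\{v_x:x\in m_j\}$ is linearly independent over $\Z_2$, an event of probability at least $\prod_{i\ge 1}(1-2^{-i})>1/4$. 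Hence if $P_{\C[r]}(X)$ has no multilinear monomial of degree at most $k$ then $Z_r=0$ deterministically. Conversely, fix one multilinear $m^\star$ of degree at most $k$ occurring in $P_{\C[r]}$, condition on the constant-probability event that $\{v_x:x\in m^\star\}$ is independent, and read the coefficient of $\chi(\mathbf 0)$ in $Z_r$ as a polynomial $q$ in the scalars $\{z_x\}\cup\{\alpha_{uv}\}$. The key point is that the coefficient of the $z$-monomial $\prod_{x\in m^\star}z_x$ in $q$ equals $\sum_{j:\,m_j=m^\star}n_j(A)$ — every other $m_j$ contributes a different $z$-monomial (or $0$) — and this is a sum of \emph{distinct} $A$-monomials, each with coefficient $1$ by hypothesis, hence a nonzero polynomial; so $q\not\equiv 0$. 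Since $\deg q=\mathrm{poly}(|\C|)$, Schwartz--Zippel over the large field $\mathbb{F}$ gives $q\neq 0$, whence $Z_r\neq 0$, with probability $1-\mathrm{poly}(|\C|)/|\mathbb{F}|$; combining with the $1/4$ above, and amplifying and union-bounding over the at most $|\C|$ outputs, yields a one-sided algorithm with constant success probability. (Some such hypothesis is genuinely needed: without it a multilinear monomial whose coefficient in $P_{\C[r]}(X)$ is even would be invisible in characteristic two.)

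For the running time, $\C$ is evaluated once in topological order over $\mathbb{F}[\Z_2^k]$; using fast multiplication in this algebra — or, equivalently, the characteristic-two sieve reformulation described next — the evaluation costs $\Oh^*(2^k)$.

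The step I expect to need the most care is the \emph{polynomial}-space claim, since a direct gate-by-gate evaluation keeps up to $\mathrm{poly}(|\C|)$ group-algebra elements live, each of bit-length $\Theta(2^k\ell)$. Following Williams, one instead rewrites the test ``$Z_r\neq 0$'' as a characteristic-two inclusion--exclusion: a signed sum over the $2^{O(k)}$ relevant subsets of $[k]$, each summand being a single evaluation of $\C$ over the field $\mathbb{F}$ and hence computable in $\mathrm{poly}(|\C|,\ell)$ space, with the running total reusing that space. This keeps the time at $\Oh^*(2^k)$ while using only polynomial space. Making the equivalence between the group-algebra test and this sieved form precise, and re-verifying that the coefficient-$1$ hypothesis continues to block cancellations in the sieved form, is the crux; the remaining bookkeeping (choice of $\ell$, the multiple outputs, amplification) is routine.
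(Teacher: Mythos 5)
First, a framing point: the paper does not prove this lemma at all --- it is imported by citation from Koutis and Williams --- so you are reconstructing the external argument rather than competing with an in-paper proof. Your reconstruction has the right architecture: the group algebra $\mathbb{F}_{2^\ell}[\Z_2^k]$, the annihilation $(\chi(\mathbf 0)+\chi(v))^2=0$ of non-multilinear terms in characteristic two, the $\prod_{i\ge 1}(1-2^{-i})>1/4$ survival probability for a linearly independent vector set, and --- most importantly --- you correctly locate the one place the coefficient-$1$ hypothesis is used: in characteristic two a term with even coefficient vanishes identically, and the hypothesis guarantees that the $A$-polynomial multiplying the $z$-monomial $\prod_{x\in m^\star}z_x$ is a sum of \emph{distinct} monomials each with coefficient $1$, hence nonzero. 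This is exactly the failure mode the paper exhibits in Section~\ref{sec:MLD} with its $2a_1a_2a_3a_4xy$ counterexample, so you have understood why the hypothesis is there.

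Two gaps remain. The substantive one is the polynomial-\emph{space} claim, which is part of the lemma statement and is precisely the technical contribution of Williams' paper: you describe the sieved reformulation ($2^{O(k)}$ field evaluations of $\C$ with space reused between them) but explicitly defer the equivalence between it and the group-algebra test, and the re-verification that the coefficient-$1$ hypothesis still blocks cancellation after sieving. As written, your argument establishes correctness only of an algorithm that stores $2^k$-dimensional group-algebra elements at every live gate, i.e., exponential space. The secondary issue is the Schwartz--Zippel step: you assert $\deg q=\mathrm{poly}(|\C|)$, but a general arithmetic circuit can compute a polynomial of degree exponential in its size (iterated squaring), so the degree in the $A$-variables must be bounded explicitly (e.g., by truncating to homogeneous components or bounding the number of addition-edges a parse tree can use) before the field size can be fixed. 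Finally, note that the route the paper actually builds on downstream (\Cref{lem:williams-poly-weak}) realizes the evaluation via Koutis' algorithm over the integers modulo $2^{k+1}$, detecting odd coefficients, rather than your $GF(2^\ell)$-plus-Schwartz--Zippel route; both are standard realizations of the same idea, but if you want your write-up to mesh with the rest of the paper, the $\Z_{2^{k+1}}$ formulation is the one to make precise.
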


In an attempt to generalize this result to arbitrary circuits and thereby prove~\Cref{lem:williams},
Koutis and Williams defined $\mathcal A$-circuits
as circuits where addition and multiplication gates alternate,
addition gates have an out-degree of one, and all scalar inputs are either~0 or~1.
They claim that the associated fingerprint polynomials of $\mathcal A$-circuits only contain monomials with coefficient one.
However, our initial implementation showed that this claim does not hold,
and in fact the $\mathcal A$-circuit in \Cref{fig:example-circuit}
does not have the claimed property:
in the associated fingerprint polynomial $P_\C(X,A)$,
the only multilinear monomial ($2 a_1 a_2 a_3 a_4 xy$) has coefficient $2$.
This leaves \Cref{lem:williams} open\footnote{We observe that~\MLD{} can also be solved via determinantal sieving~\cite{eiben2024determinantal}, but this technique does not achieve polynomial space complexity on general circuits. Further, the running time is worse than the claim in~\Cref{lem:williams}, unless the matrix multiplication exponent $\omega = 2$.}.

\begin{figure}[t]
    \pgfdeclarelayer{bg}
    \pgfsetlayers{bg, main}

    \tikzstyle{bigblacknode} = [circle, fill=gray, text=white, draw, thick, scale=1, minimum size=0.6cm, inner sep=1.5pt]
    \tikzstyle{bigwhitenode} = [circle, fill=white, text=black, draw, thick, scale=1, minimum size=0.6cm, inner sep=1.5pt]

    \tikzstyle{blacknode} = [circle, fill=gray, draw, thick, scale=1, minimum size=0.2cm, inner sep=1.5pt]
    \tikzstyle{whitenode} = [circle, fill=white, draw, thick, scale=1, minimum size=0.2cm, inner sep=1.5pt]

    \tikzstyle{hugewhitenode} = [circle, fill=white, text=black, draw, thick, scale=1, minimum size=1.5cm, inner sep=1.5pt, font=\large]
    \tikzstyle{directed} = [color=black, arrows=- triangle 45]

    \tikzmath{\yunit = 0.8;}
    \definecolor{myblue}{RGB}{5,113,176}
    \definecolor{mypurple}{RGB}{123,50,148}
    \definecolor{myred}{RGB}{202,0,32}
    
    \tikzset{
        old inner xsep/.estore in=\oldinnerxsep,
        old inner ysep/.estore in=\oldinnerysep,
        double circle/.style 2 args={
            circle,
            old inner xsep=\pgfkeysvalueof{/pgf/inner xsep},
            old inner ysep=\pgfkeysvalueof{/pgf/inner ysep},
            /pgf/inner xsep=\oldinnerxsep+#1,
            /pgf/inner ysep=\oldinnerysep+#1,
            alias=sourcenode,
            append after command={
            let     \p1 = (sourcenode.center),
                    \p2 = (sourcenode.east),
                    \n1 = {\x2-\x1-#1-0.5*\pgflinewidth}
            in
                node [inner sep=0pt, draw, circle, minimum width=2*\n1,at=(\p1),#2] {}
            }
        },
        double circle/.default={-3pt}{black}
    }

    \centering
    \begin{minipage}[m]{\linewidth}
        \vspace{0pt}
        \centering
        \begin{tikzpicture}
            \node[bigblacknode] (x) at (0, 0) {$y$};
            \node[bigblacknode] (y) at (0, 2 * \yunit) {$x$};
            \node[bigwhitenode] (a1) at (1.6, 0) {$+$};
            \node[bigwhitenode] (a2) at (1.6, 2 * \yunit) {$+$};
            \node[bigwhitenode] (a3) at (3.2, \yunit) {$+$};
            \node[bigwhitenode] (m1) at (3.2, 2 * \yunit) {$\times$};
            \node[bigwhitenode] (m2) at (4.8, 0) {$\times$};
            \node[bigwhitenode, double circle] (out) at (6.4, \yunit) {$+$};

            \draw[line width=0.4mm,directed,draw=myblue] (y) -- (a1);
            \draw[line width=0.4mm,directed,draw=myred] (y) -- (a2);
            \draw[line width=0.4mm,directed] (x) -- (a2);
            \draw[line width=0.4mm,directed,draw=myblue] (x) -- (a1);
            \draw[line width=0.4mm,directed,draw=myred] (x) -- (m1);
            \draw[line width=0.4mm,directed,draw=myblue] (a1) -- (a3);
            \draw[line width=0.4mm,directed,draw=myblue] (a1) -- (m2);
            \draw[line width=0.4mm,directed,draw=myred] (a2) -- (m1);
            \draw[line width=0.4mm,directed,draw=myblue] (a3) -- (m2);
            \draw[line width=0.4mm,directed,draw=myred] (m1) -- (out);
            \draw[line width=0.4mm,directed,draw=myblue] (m2) -- (out);

            \node () at (1.6, 2 * \yunit + 0.5) {$x+y$};
            \node () at (3.2, 2 * \yunit + 0.5) {$(x+y)y$};
            \node () at (1.6, -0.6) {$x+y$};
            \node () at (4.0, \yunit) {$x+y$};
            \node () at (4.8, -0.6) {$(x+y)^2$};
            \node () at (6.0, \yunit + 0.7) {$(x+y)y+(x+y)^2$};
        \end{tikzpicture}
    \end{minipage}
    \caption{%
      Tree (red) and non-tree (blue) certificates for the same circuit,
      drawn with polynomials.
    }
    \label{fig:example-certificates}
\end{figure}

We prove~\Cref{lem:williams} by showing that the presence of multilinear monomials with coefficient $1$
in the fingerprint polynomial is directly related to the presence of
tree-shaped substructures in the circuit.
We state our result in \Cref{lem:tree-cert} and defer\footnote{
  Proofs of results marked with $\bigstar$ can be found in the appendix.
}
the proof to \Cref{sec:proof-tree-cert}. First, we need some machinery.

Given a circuit $\C$ with single output $r$ such that
$P_\C(X,A)$ contains a multilinear monomial,
we define a \emph{certificate} $\ccert$ to be a minimal sub-circuit of $\C$ with the same output node
such that $P_{\ccert}(X,A)$ contains a multilinear monomial, and
every multiplication gate in~$\ccert$ takes the same inputs in~$\C$.
In other words, we require $N_{\ccert}^-(v)=N_{\C}^-(v)$ for every multiplication gate~$v \in V(\ccert)$.
We say a certificate $\ccert$ is a \emph{tree certificate} if the underlying graph of $\ccert$ is a tree.
\Cref{fig:example-certificates} illustrates two certificates for the same circuit,
where the red one is a tree certificate.
In several lemmas, for simplicity, we assume that $\ccert$ does not contain scalar inputs\footnote{%
In linear time, one can identify the output nodes whose polynomial contains a constant term and transform $\C$ into an equivalent scalar-free sub-circuit of $\C$ (\Cref{lem:preprocess-scalar}).},
and call it \emph{scalar-free}.
We now state our characterization.\looseness-1

\begin{restatable}[$\bigstar$]{lemma}{treecertlemma}\label{lem:tree-cert}
  Let $\C$ be a scalar-free circuit.
  For a node $v \in V(\C)$, 
  $P_{\C[v]}(X,A)$ contains a multilinear monomial
  of coefficient $1$
  if and only if
  there exists a tree certificate for $\C$ with output node $v$.
\end{restatable}

It is simple to verify that the circuit in \Cref{fig:example-circuit}
does \emph{not} contain a tree certificate since any certificate must include the cycle.
The following lemma helps to show the existence of tree certificates by construction.

\begin{restatable}[$\bigstar$]{lemma}{lemsimplecircuit}\label{lem:simple-circuit}
  If every multiplication gate in a scalar-free circuit $\C$ has at most one non-variable in-neighbor,
  then every certificate in $\C$ is a tree certificate.
\end{restatable}

\Cref{lem:williams-poly} requires 
the coefficients of \emph{all} multilinear monomials in $P_\C(X,A)$ to be $1$,
but it is sufficient to have \emph{at least one} multilinear monomial in $P_\C(X,A)$ with coefficient $1$.
Hence, we relax \Cref{lem:williams-poly} as follows.\looseness=-1

\begin{lemma}\label{lem:williams-poly-weak}
  Let $\C$ be a connected arithmetic circuit with $m$ edges over a set $X$ of variables,
  and $k$ be an integer.
  Suppose either \emph{there exists} a multilinear monomial
  in 
  $P_\C(X,A)$ whose coefficient is~$1$,
  or $P_\C(X)$ does not contain a multilinear monomial of degree at most~$k$.
  Then, there exists a randomized $\tilde{\Oh}(2^k k m)$-time $\tilde{\Oh}(m + k|X|)$-space
  algorithm for \prob{Multilinear Detection} with $(\C, X, k)$.
  This is a one-sided error Monte Carlo algorithm with a constant success probability.
\end{lemma}

\begin{proof}
  We use an algorithm by Koutis for \prob{Odd Multilinear $k$-Term} \cite{koutis2008fasteralgebraic},
  where we want to decide if the polynomial represented by an arithmetic circuit
  contains a multilinear monomial of degree at most $k$ with odd coefficient.
  If $P_\C(X)$ does not contain a multilinear monomial of degree at most $k$,
  then the algorithm always decides correctly.
  Suppose $(\C,X,k)$ is a yes-instance.
  Then, by definition, if $P_\C(X,A)$ contains a multilinear monomial of coefficient $1$,
  then $P_\C(X,A)$ contains a multilinear monomial with odd coefficient;
  notice that not \emph{all} multilinear monomials must have coefficient $1$.
  The algorithm works in $\Oh(2^k(k|X| + T))$ time and $\Oh(k|X| + S)$ space,
  where $T$ and $S$ are the time and the space
  taken for evaluating the circuit over the integers modulo~$2^{k+1}$, respectively.

  In our case, each node stores an $\Oh(\log k)$-size vector of integers up to $2^{k+1}$,
  representing the coefficients of a polynomial with degree $\Oh(\log k)$.
  This requires $\tilde{\Oh}(k)$ bits of information.
  The most time-consuming operation is multiplication of these coefficient vectors,
  which can be done in $\tilde{\Oh}(k)$ time with a Fast-Fourier-Transform style algorithm~\cite{williams2009findingpaths}.
  Hence, $T \in \tilde{\Oh}(k m)$ and the running time is $\Oh(2^k(k|X|+T)) \subseteq \tilde{\Oh}(2^k(k|X|+k m))=\tilde{\Oh}(2^k k m)$.

  Storing the circuit and fingerprint polynomial requires $\tilde{\Oh}(m)$ space and
  each computation requires $\tilde{\Oh}(k)$ space at a time.
  Hence, $S \in \tilde{\Oh}(m+k)$ and $\Oh(k|X|+S) \subseteq \tilde{\Oh}(m + k|X|)$.
\end{proof}

We complete our proof of \Cref{lem:williams} in \Cref{sec:proof-tree-cert}
via a transformation of a general circuit into a circuit
where every certificate is a tree certificate.\looseness=-1

\section{\MLD: Solution Recovery}\label{sec:solution-recovery-mld}


We say a fingerprint circuit $\C$ over $(X,A)$ is \emph{recoverable}
with respect to an output node $r$ and an integer $k$
if $P_{\C[r]}(X,A)$ contains a multilinear monomial
of degree at most $k$ with coefficient $1$.
Once we have determined that $\C$ is recoverable,
we want to construct a solution by finding a tree certificate of $\C$
with the single output $r$.\looseness-1

We designed two algorithms for finding a tree certificate: \rmc and \rlv.
The basic idea common in both algorithms is backtracking from the output node of a circuit.
When seeing a multiplication gate, we keep all in-edges.
For an addition gate, we use binary search to find exactly one in-edge
that is included in a tree certificate.
Specifically, for every addition gate $v$ encountered during the solution recovery
process, let $E'$ be the set of in-edges of $v$, i.e. $E'=\{uv \colon u \in N_{\C}^-(v)\}$.
Then, we say a partition $(A,B)$ of $E'$ is a \emph{balanced partition of the in-edges of~$v$}
if $0 \leq |A|-|B| \leq 1$.
We then run an algorithm for \MLDshort with either $\C - A$ or $\C - B$
to decide which edges to keep.
We present our pseudocode and proofs of the following in \Cref{sec:tree-cert-alg}.

\begin{restatable}[$\bigstar$ \rmc]{lemma}{recoverymclemma}\label{lem:recovery-mc}
  Let $\C$ be a scalar-free connected recoverable circuit of $m$ edges with respect to
  degree $k$ and output $r$.
  Also, let $\alpha$ be the maximum number of addition gates in the tree certificates of $\C$.
  %
  %
  There exists a one-sided error Monte Carlo algorithm
  that finds a tree certificate of $\C$
  in $\tilde{\Oh}(2^k \alpha m)$ time with a constant success probability.
\end{restatable}

\begin{restatable}[$\bigstar$ \rlv]{lemma}{recoverylvlemma}\label{lem:recovery-lv}
  Let $\C$ be a scalar-free connected recoverable circuit of $m$ edges with respect to
  degree $k$ and output $r$.
  Also, let $\alpha$ be the maximum number of addition gates in the tree certificates of $\C$.
  %
  %
  There exists a Las Vegas algorithm
  that finds a tree certificate of $\C$
  with an expected running time of $\tilde{\Oh}(2^k \alpha m)$.\looseness=-1
\end{restatable}

The running times of the algorithms above depend on $\alpha$,
the number of addition gates in tree certificates.
We note that Algorithm \textnormal{\textsf{SimplifyCircuit}} in \Cref{sec:tree-cert-alg} transforms a scalar-free recoverable circuit
to another scalar-free recoverable instance with at most $\Oh(k)$ addition gates,
without increasing the number of nodes (the number of edges may increase).


\begin{restatable}[$\bigstar$]{lemma}{recoverypreprocess}\label{lem:recovery-preprocess}
  \simplifycircuit transforms a scalar-free recoverable circuit~$\C$
  of $n$ nodes with respect to degree~$k$ and output~$r$
  into a scalar-free recoverable circuit $\C'$ of $n' \leq n$ nodes
  such that every tree certificate of $\C'$ contains at most $(2k-1)$ addition gates.
  The algorithm runs in $\Oh(n^2)$-time.
\end{restatable}

The other direction, converting a tree certificate of~$\C'$
into a tree certificate of $\C$, is not trivial.

\section{Algorithm for \texorpdfstring{\gi{}}{}}\label{sec:gi-algorithm}

The following describes a high-level algorithm \algipa (ALGebraic Inspection Planning Algorithm)
for \gi{}.
There are three key subroutines: (1) circuit construction, (2) search, and (3) solution recovery.
We designed and engineered several approaches to each, described in 
Sections \ref{sec:circuit}, \ref{sec:search-strategies}, and \ref{sec:solution-recovery-mld} and \ref{sec:two-phase}, respectively.
%

\br
\noindent \textbf{Algorithm \algipa{}:}

\noindent \textit{Input:} 
A complete metric graph $G=(V,E)$,
a color set $\colorset$,
an edge-weight function $w\colon E \to \R_{\geq 0}$,
a vertex-coloring function $\col\colon V \to 2^\colorset$,
a vertex $s \in V$ such that $\col(s)=\emptyset$,
a number $t$ of colors to collect,
and a failure count threshold $\theta \in \N$.

\noindent \textit{Output:}
A minimum-weight closed walk in $G$,
starting at $s$ and collecting at least $t$ colors.
%
\begin{enumerate}[label=(\arabic*)]
  \item \label{algipa:step:1} \emph{Finding bounds.}
  Using the algorithms from Mizutani \emph{et al.}~\cite{mizutani2024leveraging},
  find lower ($\lo$) and upper bounds ($\hi$) for the solution weight.
  If the lower bound is fractional, then round it up to the nearest integer.

  \item \label{algipa:step:2} \emph{Search for the optimal weight.}
  Find the minimum weight~$\tilde{\ell}$
  such that there exists a closed walk from~$s$ with weight~$\tilde{\ell}$,
  collecting at least $t$ colors.
  We repeat the following steps until finding the optimal weight.

  \begin{enumerate}
    \item \emph{Construction of an arithmetic circuit.}
    Construct an arithmetic circuit
    for a target weight $\ell$ ($\lo \leq \ell \leq \hi$).
    %
    %

    \item \emph{Evaluation of the arithmetic circuit.}
    Solve \MLDshort for the constructed arithmetic circuit.
    If the output for $\ell$ contains a multilinear monomial,
    then we can immediately conclude that $\ell$ is feasible
    and update $\hi$.
    Otherwise, we repeatedly solve \MLDshort $\theta$ times
    until we conclude that $\ell$ is infeasible and update $\lo$.\looseness-1
  \end{enumerate}

  \item \label{algipa:step:3} \emph{Solution recovery.}
  Recover and output a solution walk with weight $\tilde{\ell}$.
  This can be done by reconstructing an arithmetic circuit for $\tilde{\ell}$,
  obtaining a tree certificate $\ccert$ for it,
  and constructing a walk from vertex $s$ in $G$ based on $\ccert$.
\end{enumerate}

Let $\lambda \in \R$ be a \emph{scaling factor}.
We perform the following preprocessing steps in our implementation.
Given a graph $G=(V,E)$, first create the transitive closure of $G$
by computing all-pairs shortest paths.
Remove all vertices $v \in V \setminus \{s\}$
that are unreachable from $s$ or have no colors (i.e. $\col(v)=\emptyset$).
For every edge $e$, update its edge weight to $\lambda \cdot w(e)$
and round to the nearest integer%
\footnote{For accuracy, round weights on the transitive closure.}.
Again, compute all-pairs shortest paths to make $G$ a metric graph%
\footnote{Rounding may turn $G$ into a non-metric graph.}.
The solution quality may incur a penalty based on rounding errors.

Given a solution walk $W$ from \algipa,
we simply replace every edge~$uv$ in~$W$ with any shortest $u$-$v$ path in $G$.
The result is a solution for \gi{}.

We prove in Theorem \ref{thm:algebra} that \algipa is a randomized $\Oh^*(2^t)$-time and polynomial space algorithm.

\section{Circuit Construction}\label{sec:circuit}

Here, we present four constructions for solving \gi via \MLD:
\cnaive, \cstandard, \ccompact, and \csemi.
Each circuit consists of the following nodes:

\begin{itemize}
  \item \emph{Variables}: Variable node $x_c$ for each color $c \in \colorset$.

  \item \emph{Internal nodes}: 
  We conceptually create $t$ computational layers corresponding to the degree of a polynomial.
  Each layer contains two types of nodes: \emph{transmitters} and \emph{receivers}.
  A transmitter, denoted by $T_{t',v,d}$ or $T_{t',v,d,i}$, is a gate that transfers information to the next layer
  and is identified by
  a layer $1 \leq t' \leq t$,
  vertex $v \in V \setminus \{s\}$,
  the weight $d$ of a walk from $s$ to $v$,
  and (optionally) an index~$i$.

  A receiver, denoted by $R_*$ (indices vary with construction type),
  is a gate that receives information from the previous layer
  and sends it to the transmitter in the same layer.

  \item \emph{Output nodes}:
  The construction of output nodes (sinks that matter)
  depends on the search algorithm,
  but in every case they aggregate information
  from the transmitters in the last layer, i.e. layer~$t$.
    
  If using \sunified (see~\Cref{sec:unified-search}),
  then there are addition nodes $O_{\ell}$ for every target value $\ell$
  ($\ell_{\text{lo}} \leq \ell \leq \ell_\text{hi}$)
  as output nodes, and the edge from $T_{t,v,d}$ in layer~$t$ to $O_\ell$
  exists if $\ell = d + w(v,s)$.
  
  If the search algorithm is \sstandard or \sprob (\Cref{sec:standard-binary-search,sec:prob-binary-search}),
  then a target weight $\ell$ is given when creating a circuit.
  In this case, an addition node $O_\ell$ is the only output node,
  and the edge from $T_{t,d,i}$ in layer $t$ to $O_\ell$
  exists if $\ell_\text{lo} \leq d+w(v,s) \leq \ell$.

  \item \emph{Auxiliary nodes}:
  \ccompact and \csemi (\Cref{sec:compact-circuit,sec:semi-compact-circuit}) have another set of nodes
  located in-between variable nodes and computational layers.
\end{itemize}

\begin{observation}
  There are $\colorsetsize$ variable nodes
  and at most ($\ell_\textnormal{hi} - \ell_\textnormal{lo}+1$) output nodes
  for all constructions.
\end{observation}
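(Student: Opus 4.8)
The plan is to read off both bounds directly from the node inventory given above, observing that neither count depends on which of the four circuit constructions (\cnaive, \cstandard, \ccompact, or \csemi) is used. For the variable nodes, the construction introduces exactly one node $x_c$ for each color $c \in \colorset$ and no others; the auxiliary, transmitter, receiver, and output nodes are all internal or output nodes, none of which are variable nodes. Hence the number of variable nodes is precisely $\colorsetsize$, regardless of the construction type.

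For the output nodes, I would case on the search strategy, since the description states that the output layer depends only on this choice. If the strategy is \sunified, the output nodes are exactly the addition nodes $O_\ell$ for integers $\ell$ with $\lo \le \ell \le \hi$; there are at most $\hi - \lo + 1$ such integers (exactly that many before any trivial $O_\ell$ with no in-edges is discarded). If the strategy is \sstandard or \sprob, then there is a single output node $O_\ell$ for the current target weight $\ell$, and $1 \le \hi - \lo + 1$ since we always maintain $\lo \le \hi$. In both cases the bound $\hi - \lo + 1$ holds, and since this argument never referenced the auxiliary-node layer or the internal layers, it applies uniformly to all four constructions.

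I do not expect any real obstacle: the statement is a bookkeeping consequence of how the node sets are defined, and the only mild subtlety — that ``at most'' rather than ``exactly'' is the correct quantifier — is fully accounted for by the single-output case and by the possibility of discarding output nodes with no incoming edges.
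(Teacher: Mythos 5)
Your proposal is correct and matches the paper's reasoning: the observation is an immediate bookkeeping consequence of the node inventory (one variable node per color, and either a single output node or one $O_\ell$ per integer $\ell \in [\lo, \hi]$ depending on the search strategy), and the paper accordingly states it without further proof. Your case split on the search strategy and the note about why ``at most'' is the right quantifier are exactly the justification the paper leaves implicit.
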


Now we present how we construct computational layers.
We then analyze the size of each circuit and its correctness.
We argue that a construction is \emph{correct} when
the following conditions are met:
(1) the circuit contains a tree certificate for \MLDshort
with degree at most~$t$ and the output node corresponding to objective $\ell$
if and only if there exists a solution walk
with weight at most~$\ell$
in an input graph for \algipa, and
(2) every tree certificate contains at most $\Oh(t)$ addition nodes.
In the following arguments, we write $k$ for $\colorsetsize$,
and set $w(v,v)=0$.

\subsection{\cnaive{}.}
In this construction, transmitters are indexed
by a pair of a vertex and one of its colors,
i.e., we split each color of a vertex into a distinct entity.

For the first layer, create a multiplication gate
$T_{1,v,w(s,v),c}$ as a transmitter for every vertex $v$ for every color $c \in \col(v)$
if $w(s,v) \leq \ell$.
Every transmitter in the first layer has only one input $x_c$.

In other layers $1 < t' \leq t$, proceed as follows.
For every vertex $v$ for every color $c \in \col(v)$,
and for every transmitter~$T_{t'-1,v',d',c'}$ in the previous layer,
let $d=d'+w(v',v)$.
We continue only if $d \leq \ell$ and $c \neq c'$.

First, create a new multiplication gate $r$ as a receiver taking
$T_{t'-1,v',d',c'}$ and $x_c$ as input.
Next, create an addition gate $T_{t',v,d,c}$ as a transmitter if this does not exist.
Finally, add an edge from $r$ to the transmitter $T_{t',v,d,c}$.
Notice that each receiver has $2$ in-neighbors, and
each transmitter has at most $k(n-2)$ in-neighbors.\looseness=-1

\begin{restatable}[$\bigstar$]{lemma}{lemconstructnaive}\label{lem:construct-naive}
  \cnaive is correct and creates a circuit of $\Oh(\ell_\textnormal{hi} t k^2 n^2)$ nodes
  and $\Oh(\ell_\textnormal{hi} t k^2 n^2)$ edges.
\end{restatable}


\begin{figure*}[h]
    \pgfdeclarelayer{bg}
    \pgfsetlayers{bg, main}

    \tikzstyle{bigblacknode} = [circle, fill=gray, text=white, draw, thick, scale=1, minimum size=0.6cm, inner sep=1.5pt]
    \tikzstyle{bigwhitenode} = [circle, fill=white, text=black, draw, thick, scale=1, minimum size=0.6cm, inner sep=1.5pt]

    \tikzstyle{blacknode} = [circle, fill=gray, text=white, draw, thick, scale=1, minimum size=0.2cm, inner sep=1.5pt]
    \tikzstyle{whitenode} = [circle, fill=white, draw, thick, scale=1, minimum size=0.2cm, inner sep=0pt]

    \tikzstyle{hugewhitenode} = [circle, fill=white, text=black, draw, thick, scale=1, minimum size=1.5cm, inner sep=1.5pt, font=\large]
    \tikzstyle{directed} = [color=gray, arrows=- triangle 45]
    \tikzstyle{thickedge} = [color=black, line width=0.8mm, arrows=-{Latex[length=2mm,width=3mm]}]

    \definecolor{mygreen}{RGB}{0,136,55}
    \definecolor{myblue}{RGB}{5,113,176}
    \definecolor{mypurple}{RGB}{123,50,148}
    \definecolor{myred}{RGB}{202,0,32}

    \tikzmath{\offset = 0.2;}
    \tikzmath{\layershrink = 0.3;}
    \tikzmath{\vshrink = 0.1;}

    \tikzset{
        old inner xsep/.estore in=\oldinnerxsep,
        old inner ysep/.estore in=\oldinnerysep,
        double circle/.style 2 args={
            circle,
            old inner xsep=\pgfkeysvalueof{/pgf/inner xsep},
            old inner ysep=\pgfkeysvalueof{/pgf/inner ysep},
            /pgf/inner xsep=\oldinnerxsep+#1,
            /pgf/inner ysep=\oldinnerysep+#1,
            alias=sourcenode,
            append after command={
            let     \p1 = (sourcenode.center),
                    \p2 = (sourcenode.east),
                    \n1 = {\x2-\x1-#1-0.5*\pgflinewidth}
            in
                node [inner sep=0pt, draw, circle, minimum width=2*\n1,at=(\p1),#2] {}
            }
        },
        double circle/.default={-3pt}{myred}
    }

    \begin{minipage}[m]{.98\linewidth}
        \centering
        \begin{tikzpicture}
          \begin{pgfonlayer}{main}
            \node[bigblacknode] (s) at (-8.5, 8-2) {$s$};
            \node[bigwhitenode] (u) at (-7, 8-2) {$u$};
            \node[bigwhitenode] (v) at (-7, 6.5-2) {$v$};
            \node[bigwhitenode] (w) at (-8.5, 6.5-2) {$w$};
            \draw (s) -- (u) node[midway, above] {$2$};
            \draw (s) -- (v) node[midway, xshift=-10, yshift=15] {$4$};
            \draw (s) -- (w) node[midway, left, xshift=2] {$3$};
            \draw (u) -- (v) node[midway, right, xshift=-2] {$2$};
            \draw (u) -- (w) node[midway, xshift=-10, yshift=-4]{$2$};
            \draw (v) -- (w) node[midway, below]{$1$};
            \draw (u) node[above, yshift=6] {$\{c_1,c_2\}$};
            \draw (w) node[below, yshift=-5] {$\{c_3\}$};
            \draw (v) node[below, yshift=-5] {$\{c_2,c_3\}$};
            \node[] () at (-7.6, 3.3) {Graph instance $G$};

            \node[] () at (-5.8 + 3 * \layershrink, 3) {$2$};
            \node[] () at (-5.8 + 3 * \layershrink, 4) {$3$};
            \node[] () at (-5.8 + 3 * \layershrink, 5) {$4$};
            \node[] () at (-5.8 + 3 * \layershrink, 6) {$5$};
            \node[] () at (-5.8 + 3 * \layershrink, 7) {$6$};
            \node[] () at (-5.8 + 3 * \layershrink, 8) {$7$};

            \draw[dashed, gray] (-5.8 + 3 * \layershrink, 2.5) -- (7.4, 2.5);
            \draw[dashed, gray] (-5.8 + 3 * \layershrink, 3.5) -- (7.4, 3.5);
            \draw[dashed, gray] (-5.8 + 3 * \layershrink, 4.5) -- (7.4, 4.5);
            \draw[dashed, gray] (-5.8 + 3 * \layershrink, 5.5) -- (7.4, 5.5);
            \draw[dashed, gray] (-5.8 + 3 * \layershrink, 6.5) -- (7.4, 6.5);
            \draw[dashed, gray] (-5.8 + 3 * \layershrink, 7.5) -- (7.4, 7.5);
            \draw[dashed, gray] (-5.8 + 3 * \layershrink, 8.5) -- (7.4, 8.5);

            \draw[dashed, gray] (-4.5 + 2 * \layershrink, 1 + 4 * \vshrink) -- (-4.5 + 2 * \layershrink, 9);
            \draw[dashed, gray] (-3.5 + \layershrink    , 1 + 4 * \vshrink) -- (-3.5 + \layershrink, 9);
            \draw[dashed, gray] (-0.5                   , 1 + 4 * \vshrink) -- (-0.5, 9);
            \draw[dashed, gray] (+0.5                   , 1 + 4 * \vshrink) -- (+0.5, 9);
            \draw[dashed, gray] (+3.5                   , 1 + 4 * \vshrink) -- (+3.5, 9);
            \draw[dashed, gray] (+4.5                   , 1 + 4 * \vshrink) -- (+4.5, 9);

            \node[blacknode] (c1) at (-1, 0 + 7 * \vshrink) {$x_{c_1}$};
            \node[blacknode] (c2) at (0,  0 + 7 * \vshrink) {$x_{c_2}$};
            \node[blacknode] (c3) at (1,  0 + 7 * \vshrink) {$x_{c_3}$};

            \node[whitenode, draw=mygreen] (a11) at (-5 + 2.5 * \layershrink, 1.5 + 3 * \vshrink) {$+$};
            \node[whitenode, draw=mygreen] (a12) at (-4 + 1.5 * \layershrink, 1.5 + 3 * \vshrink) {$+$};
            \node[whitenode, draw=mygreen] (a13) at (-3 + 0.5 * \layershrink, 1.5 + 3 * \vshrink) {$+$};
            \node[whitenode, draw=mygreen] (a21) at (-1                     , 1.5 + 3 * \vshrink) {$+$};
            \node[whitenode, draw=mygreen] (a22) at (+0                     , 1.5 + 3 * \vshrink) {$+$};
            \node[whitenode, draw=mygreen] (a23) at (+1                     , 1.5 + 3 * \vshrink) {$+$};
            \node[whitenode, draw=mygreen] (a31) at (3                      , 1.5 + 3 * \vshrink) {$+$};
            \node[whitenode, draw=mygreen] (a32) at (4                      , 1.5 + 3 * \vshrink) {$+$};
            \node[whitenode, draw=mygreen] (a33) at (5                      , 1.5 + 3 * \vshrink) {$+$};

            \node[whitenode, draw=myblue] (y112) at (-5 + 2.5 * \layershrink, 3) {$\times$};
            \node[whitenode, draw=myblue] (y124) at (-4 + 1.5 * \layershrink, 5) {$\times$};
            \node[whitenode, draw=myblue] (y133) at (-3 + 0.5 * \layershrink, 4) {$\times$};

            \node[whitenode, draw=mypurple] (z212) at (-1 - \offset, 3 - \offset) {$+$};
            \node[whitenode, draw=myblue] (y212) at (-1 + \offset, 3 + \offset) {$\times$};
            \node[whitenode, draw=mypurple] (z215) at (-1 - \offset, 6 - \offset) {$+$};
            \node[whitenode, draw=myblue] (y215) at (-1 + \offset, 6 + \offset) {$\times$};
            \node[whitenode, draw=mypurple] (z216) at (-1 - \offset, 7 - \offset) {$+$};
            \node[whitenode, draw=myblue] (y216) at (-1 + \offset, 7 + \offset) {$\times$};

            \node[whitenode, draw=mypurple] (z224) at ( 0 - \offset, 5 - \offset) {$+$};
            \node[whitenode, draw=myblue] (y224) at ( 0 + \offset, 5 + \offset) {$\times$};

            \node[whitenode, draw=mypurple] (z233) at ( 1 - \offset, 4 - \offset) {$+$};
            \node[whitenode, draw=myblue] (y233) at (1  + \offset, 4 + \offset) {$\times$};
            \node[whitenode, draw=mypurple] (z234) at ( 1 - \offset, 5 - \offset) {$+$};
            \node[whitenode, draw=myblue] (y234) at (1  + \offset, 5 + \offset) {$\times$};
            \node[whitenode, draw=mypurple] (z235) at ( 1 - \offset, 6 - \offset) {$+$};
            \node[whitenode, draw=myblue] (y235) at (1  + \offset, 6 + \offset) {$\times$};

            \node[whitenode, draw=mypurple] (z312) at (3 - \offset, 3 - \offset) {$+$};
            \node[whitenode, draw=myblue] (y312) at (3 + \offset, 3 + \offset) {$\times$};
            \node[whitenode, draw=mypurple] (z315) at (3 - \offset, 6 - \offset) {$+$};
            \node[whitenode, draw=myblue] (y315) at (3 + \offset, 6 + \offset) {$\times$};
            \node[whitenode, draw=mypurple] (z316) at (3 - \offset, 7 - \offset) {$+$};
            \node[whitenode, draw=myblue] (y316) at (3 + \offset, 7 + \offset) {$\times$};
            \node[whitenode, draw=mypurple] (z317) at (3 - \offset, 8 - \offset) {$+$};
            \node[whitenode, draw=myblue] (y317) at (3 + \offset, 8 + \offset) {$\times$};

            \node[whitenode, draw=mypurple] (z324) at (4 - \offset, 5 - \offset) {$+$};
            \node[whitenode, draw=myblue] (y324) at (4 + \offset, 5 + \offset) {$\times$};
            \node[whitenode, draw=mypurple] (z325) at (4 - \offset, 6 - \offset) {$+$};
            \node[whitenode, draw=myblue] (y325) at (4 + \offset, 6 + \offset) {$\times$};
            \node[whitenode, draw=mypurple] (z326) at (4 - \offset, 7 - \offset) {$+$};
            \node[whitenode, draw=myblue] (y326) at (4 + \offset, 7 + \offset) {$\times$};
            \node[whitenode, draw=mypurple] (z327) at (4 - \offset, 8 - \offset) {$+$};
            \node[whitenode, draw=myblue] (y327) at (4 + \offset, 8 + \offset) {$\times$};

            \node[whitenode, draw=mypurple] (z333) at (5 - \offset, 4 - \offset) {$+$};
            \node[whitenode, draw=myblue] (y333) at (5 + \offset, 4 + \offset) {$\times$};
            \node[whitenode, draw=mypurple] (z334) at (5 - \offset, 5 - \offset) {$+$};
            \node[whitenode, draw=myblue] (y334) at (5 + \offset, 5 + \offset) {$\times$};
            \node[whitenode, draw=mypurple] (z335) at (5 - \offset, 6 - \offset) {$+$};
            \node[whitenode, draw=myblue] (y335) at (5 + \offset, 6 + \offset) {$\times$};
            \node[whitenode, draw=mypurple] (z337) at (5 - \offset, 8 - \offset) {$+$};
            \node[whitenode, draw=myblue] (y337) at (5 + \offset, 8 + \offset) {$\times$};

            \node[bigwhitenode,double circle, draw=myred] (r7) at (6.5, 8) {$+$};
            \node[bigwhitenode,double circle, draw=myred] (r6) at (6.5, 7) {$+$};
            \node[bigwhitenode,double circle, draw=myred] (r5) at (6.5, 6) {$+$};
            \node[bigwhitenode,double circle, draw=myred] (r4) at (6.5, 5) {$+$};
            \draw (r7) node[right,xshift=10] {\color{myred}$O_7$};
            \draw (r6) node[right,xshift=10] {\color{myred}$O_6$};
            \draw (r5) node[right,xshift=10] {\color{myred}$O_5$};
            \draw (r4) node[right,xshift=10] {\color{myred}$O_4$};

            \draw[directed,thickedge] (c1) -- (a11);
            \draw[directed,thickedge] (c2) -- (a21);
            \draw[directed,thickedge] (c3) -- (a33);
            \draw[directed,thickedge] (a11) -- (y112);
            \draw[directed,thickedge] (a21) -- (y212);
            \draw[directed,thickedge] (y112) -- (z212);
            \draw[directed,thickedge] (y212) -- (z334);
            \draw[directed,thickedge] (z212) -- (y212);
            \draw[directed,thickedge] (z334) -- (y334);
            \draw[directed,thickedge] (a33) -- (y334);
            \draw[directed,thickedge] (y334) -- (r7);
          \end{pgfonlayer}

          \begin{pgfonlayer}{bg}
            \draw[directed] (c1) -- (a21);
            \draw[directed] (c1) -- (a31);
            
            \draw[directed] (c2) -- (a11);
            \draw[directed] (c2) -- (a31);

            \draw[directed] (c2) -- (a12);
            \draw[directed] (c2) -- (a22);
            \draw[directed] (c2) -- (a32);

            \draw[directed] (c3) -- (a12);
            \draw[directed] (c3) -- (a22);
            \draw[directed] (c3) -- (a32);

            \draw[directed] (c3) -- (a13);
            \draw[directed] (c3) -- (a23);

            \draw[directed] (a12) -- (y124);
            \draw[directed] (a13) -- (y133);

            \draw[directed] (a21) -- (y215);
            \draw[directed] (a21) -- (y216);
            \draw[directed] (a22) -- (y224);
            \draw[directed] (a23) -- (y233);
            \draw[directed] (a23) -- (y234);
            \draw[directed] (a23) -- (y235);

            \draw[directed] (a31) -- (y312);
            \draw[directed] (a31) -- (y315);
            \draw[directed] (a31) -- (y316);
            \draw[directed] (a31) -- (y317);
            \draw[directed] (a32) -- (y324);
            \draw[directed] (a32) -- (y325);
            \draw[directed] (a32) -- (y326);
            \draw[directed] (a32) -- (y327);
            \draw[directed] (a33) -- (y333);
            \draw[directed] (a33) -- (y335);
            \draw[directed] (a33) -- (y337);

            \draw[directed] (y112) -- (z224);
            \draw[directed] (y112) -- (z234);

            \draw[directed] (y124) -- (z216);
            \draw[directed] (y124) -- (z224);
            \draw[directed] (y124) -- (z235);

            \draw[directed] (y133) -- (z215);
            \draw[directed] (y133) -- (z224);
            \draw[directed] (y133) -- (z233);

            \draw[directed] (z215) -- (y215);
            \draw[directed] (z216) -- (y216);
            \draw[directed] (z224) -- (y224);
            \draw[directed] (z233) -- (y233);
            \draw[directed] (z234) -- (y234);
            \draw[directed] (z235) -- (y235);

            \draw[directed] (y212) -- (z312);
            \draw[directed] (y212) -- (z324);

            \draw[directed] (y215) -- (z315);
            \draw[directed] (y215) -- (z327);
            \draw[directed] (y215) -- (z337);
            \draw[directed] (y216) -- (z316);

            \draw[directed] (y224) -- (z316);
            \draw[directed] (y224) -- (z324);
            \draw[directed] (y224) -- (z335);

            \draw[directed] (y233) -- (z315);
            \draw[directed] (y233) -- (z324);
            \draw[directed] (y233) -- (z333);
            \draw[directed] (y234) -- (z316);
            \draw[directed] (y234) -- (z325);
            \draw[directed] (y234) -- (z334);
            \draw[directed] (y235) -- (z317);
            \draw[directed] (y235) -- (z326);
            \draw[directed] (y235) -- (z335);

            \draw[directed] (z312) -- (y312);
            \draw[directed] (z315) -- (y315);
            \draw[directed] (z316) -- (y316);
            \draw[directed] (z317) -- (y317);
            \draw[directed] (z324) -- (y324);
            \draw[directed] (z325) -- (y325);
            \draw[directed] (z326) -- (y326);
            \draw[directed] (z327) -- (y327);
            \draw[directed] (z333) -- (y333);
            \draw[directed] (z335) -- (y335);
            \draw[directed] (z337) -- (y337);

            \draw[directed] (y312) -- (r4);
            \draw[directed] (y315) -- (r7);
            \draw[directed] (y333) -- (r6);

          \draw[rounded corners, gray] (-5.6 + 3 * \layershrink, 2.4) rectangle ++ (3.2 - 3 * \layershrink, 6.2);
          \draw[rounded corners, gray] (-1.6, 2.4) rectangle ++ (3.2, 6.2);
          \draw[rounded corners, gray] (2.4, 2.4) rectangle ++ (3.2, 6.2);
          \node[] at (-4 + 1.5 * \layershrink, 9.3) {Layer $1$};
          \node[] at ( 0, 9.3) {Layer $2$};
          \node[] at ( 4, 9.3) {Layer $3$};
          \node[] at ( 6.5, 9.2) {Output};
          \node[] at ( 6.5, 8.8) {nodes};
          \node[] at (-5 + 2.5 * \layershrink, 8.8) {$u$};
          \node[] at (-4 + 1.5 * \layershrink, 8.8) {$v$};
          \node[] at (-3 + 0.5 * \layershrink, 8.8) {$w$};
          \node[] at (-1, 8.8) {$u$};
          \node[] at ( 0, 8.8) {$v$};
          \node[] at ( 1, 8.8) {$w$};
          \node[] at ( 3, 8.8) {$u$};
          \node[] at ( 4, 8.8) {$v$};
          \node[] at ( 5, 8.8) {$w$};

          \node[rotate=90] at (-6.1 + 3 * \layershrink - 0.1, 5.5) {walk weight from $s$};

          \draw[rounded corners, mygreen] (-5.6 + 3 * \layershrink, 1 + 4 * \vshrink) rectangle ++ (3.2 - 3 * \layershrink, 1 - 2 * \vshrink);
          \draw[rounded corners, mygreen] (-1.6                   , 1 + 4 * \vshrink) rectangle ++ (3.2, 1 - 2 * \vshrink);
          \draw[rounded corners, mygreen] (2.4                    , 1 + 4 * \vshrink) rectangle ++ (3.2, 1 - 2 * \vshrink);
          \node[] at ( 5.1, 0.7 + 4 * \vshrink) {\color{mygreen} $a_{3,w}$};
          \node[] at ( -1, 7.5 + \offset) {\color{myblue}$T_{2,u,6}$};
          \node[] at (-1.5, 7.0 + \offset) {\color{mypurple}$R_{2,u,6}$};

          \node[] () at (-5.0, 0.7 + 4 * \vshrink) {Auxiliary nodes};
          \end{pgfonlayer}

        \end{tikzpicture}
    \end{minipage}
    \caption{
      An example of an arithmetic circuit (\ccompact with \sunified),
      encoding the graph instance $G$ with colors $\colorset=\{c_1,c_2,c_3\}$,
      illustrated on the left,
      with $t=3$.
      The circuit consists of variable nodes (gray) for each color in $\colorset$,
      auxiliary nodes (green),
      receivers (purple), transmitters (blue), and output nodes (red).
      Notice that each receiver/transmitter pair is identifiable by
      a layer, index ($V \setminus \{s\}$ for \ccompact), and walk weight from the starting vertex $s$.
      A tree certificate, corresponding to walk $(s,u,w,s)$ with weight $7$, is highlighted in bold.\looseness-1}

  \label{fig:construction}
\end{figure*}

\subsection{\cstandard{}.}
Intuitively, this type of circuit is built by switching addition and multiplication nodes
in the computational layers of \cnaive, as described in more detail below.
The indexing scheme is unchanged,
and the first layer is identical to that of \cnaive.

For layers $1 < t' \leq t$, proceed as follows.
For each vertex $v$, for each color $c \in \col(v)$,
and for each transmitter $T_{t'-1,v',d',c'}$ in the previous layer,
let $d=d'+w(v',v)$.
We continue only if $d \leq \ell$ and $c \neq c'$.

First, create an addition gate $R_{t',v,d,c}$ as a receiver if it does not exist.
Next, create a multiplication gate $T_{t',v,d,c}$ as a transmitter if it does not exist.
Lastly, add the following edges:
$(T_{t'-1,v',d',c'}, R_{t',v,d,c})$,
$(R_{t',v,d,c}, T_{t',v,d,c})$,
and $(x_c, T_{t',v,d,c})$.
Note that in this construction, a transmitter has at most $2$ in-neighbors,
and a receiver has at most $k(n-2)$ in-neighbors.

\begin{restatable}[$\bigstar$]{lemma}{lemconstructstandard}\label{lem:construct-standard}
  \cstandard is correct and creates a circuit of $\Oh(\ell_\textnormal{hi} t k n)$ nodes
  and $\Oh(\ell_\textnormal{hi} t k^2 n^2)$ edges.
\end{restatable}


\subsection{\ccompact{}.}\label{sec:compact-circuit}
This is designed to have an asymptotically smaller number of nodes than the others.
In this construction, we do not split vertex colors.
We have transmitters $T_{t',v,d}$, receivers $R_{t',v,d}$, and auxiliary nodes $a_{t',v}$
for layer $t'$, vertex $v \in V \setminus \{s\}$, and weight $d$.\looseness=-1

First, construct auxiliary nodes.
For each $1 \leq t' \leq t$
and for each vertex $v$,
add an addition gate $a_{t',v}$ and take as input $\{x_c \colon c \in \col(v)\}$.

For $t'=1$, add an edge from $a_v$ to $T_{1,v,w(s,v)}$ if $w(s,v) \leq \ell$.
For $1 < t' \leq t$, for each vertex $v$,
and for each transmitter $T_{t'-1,v',d'}$ in the previous layer,
let $d=d'+w(v',v)$.
Notice that it is possible that~$v=v'$.
Intuitively, this means collecting a new color without moving.
We add the following edges if $d \leq \ell$:
$(T_{t'-1,v',d'}, R_{t',v,d})$,
$(R_{t',v,d}, T_{t',v,d})$,
and $(a_{t',v}, T_{t',v,d})$.
\Cref{fig:construction} illustrates an example.

\begin{restatable}[$\bigstar$]{lemma}{lemconstructcompact}\label{lem:construct-compact}
  \ccompact is correct and creates a circuit of $\Oh(\hi tn + k)$ nodes
  and $\Oh(\hi tn^2 + tkn)$ edges.
\end{restatable}


\subsection{\csemi{}.}\label{sec:semi-compact-circuit}
This construction is similar to \cstandard,
but instead of splitting vertex colors into vertex-color tuples,
we keep track of vertex-multiplicity tuples.
Here, the multiplicity means how many colors are collected at the same vertex.

First, add addition gates $a_v^{(i)}$ as auxiliary nodes
that takes as input $\{x_c \colon c \in \col(v)\}$
for every vertex $v \in V \setminus \{s\}$
and every multiplicity $1 \leq i \leq \min\{t, |\col(v)|\}$.

The first layer contains multiplication gates $T_{1,v,d,1}$ as transmitters
that takes $a_v^{(1)}$ as the only input.
The other layers ($1 < t' \leq t$) contain addition gates $R_{t',v,d,i}$ as receivers
 and multiplication gates $T_{t',v,d,i}$ as transmitters.
$T_{t',v,d,i}$ takes two inputs, $R_{t',v,d,i}$ and $a_v^{(i)}$.

For every vertex $v \in V \setminus \{s\}$ add the following edges:
(1) $(T_{t'-1,v,d,i}, R_{t',v,d,i+1})$ for $i < \min\{t, |\col(v)|\}$, and
(2) $(T_{t'-1,v',d',i}, R_{t',v,d'+w(v',v),1})$
for $v' \neq v$, ${d'+w(v',v) \leq \ell}$, and $1 \leq i \leq \min\{t, |\col(v')|\}$.
The former represents collecting another color at the same vertex,
thus preserving the walk length and incrementing the multiplicity by one.
The latter represents moving to another vertex, thus
resetting the multiplicity to one.

\begin{restatable}[$\bigstar$]{lemma}{lemconstructsemicompact}\label{lem:construct-semicompact}
  \csemi is correct and creates a circuit of $\Oh(\ell_\textnormal{hi} t^2 n + k)$ nodes
  and $\Oh(\ell_\textnormal{hi} t^2 n^2 + tkn)$ edges.
\end{restatable}

We evaluate these constructions in \Cref{sec:circuit-type-experiment}.

\section{Search Strategies}\label{sec:search-strategies}

In this section, we describe three search algorithms.
We prove that each algorithm correctly finds the optimal weight
with probability $1-(1-p)^\theta$,
where $p$ is the constant success probability from \Cref{lem:williams-poly},
and $\theta$ is the failure count threshold greater than $p^{-1}$.
Also, we analyze the expected running time under simplistic assumptions:
the optimal weight $\tilde{\ell}$ is uniformly distributed between $\lo$ and $\hi$;
a single run of construction and evaluation of a circuit takes (non-decreasing) $f(\ell)$ time;
and the evaluation results in \true if $\tilde{\ell} \leq \ell$ with probability~$p$
and \false otherwise.
We write $T(\lo,\hi)$ for the expected running time with lower and upper bounds
$\lo$ and $\hi$, respectively.
We also define $\diff := \hi - \lo + 1$.

\subsection{\sstandard{}.}\label{sec:standard-binary-search}

We first implemented the standard binary search.
Given $\lo$ and $\hi$,
we examine the middle value $\ell = \lfloor (\ell_\text{lo} + \ell_\text{hi})/2 \rfloor$
to see if $\ell$ is feasible.
If $\ell$ is feasible,
then we update $\ell_\text{hi}$ to $\ell$.
On the other hand, if the circuit evaluates only to \false for
$\theta'$ trials, then we set $\ell_\text{lo}$ to $\ell + 1$.
Here we set $\theta'$ as a number in $\Oh(\theta \log\log(\diff))$ such that 
${(1-(1-p)^{\theta'})^{\lfloor \log_2(\diff) \rfloor} \geq 1 - (1-p)^\theta}$.
The algorithm terminates when $\ell_\text{lo}=\ell_\text{hi}$,
and outputs this number.

\begin{restatable}[$\bigstar$]{lemma}{lemSearchBS}\label{lem:search-bs}
  Algorithm \sstandard{} correctly finds the optimal weight
  in expected running time~$\tilde{\Oh}(\theta \cdot f(\hi))$
  with probability $1-(1-p)^\theta$.
\end{restatable}

\subsection{\sprob{}.}\label{sec:prob-binary-search}

The performance of the previous algorithm degrades when the optimal value is close to $\ell_\text{lo}$
because concluding that the value $\ell$ is infeasible requires $\theta$ evaluations of a circuit.
To mitigate this penalty, \sprob{} evaluates each circuit once at a time
and randomly goes higher \emph{before} concluding that the value is infeasible.
We set this probability $p'$ to $1-p/2$.
We maintain midpoints and failure counts as a stack.
See \Cref{alg:search-prob} in \Cref{sec:proof-search}.\looseness=-1

\begin{restatable}[$\bigstar$]{lemma}{lemSearchProbBS}\label{lem:search-prob-bs}
  Algorithm \sprob{} correctly finds the optimal weight
  in expected running time $\tilde{\Oh}((\theta + \diff) \cdot f(\hi))$
  with probability $1-(1-p)^\theta$.
\end{restatable}

\subsection{\sunified{}.}\label{sec:unified-search}

The previous approach aims to reduce the number of evaluations for infeasible circuits,
but in most cases, evaluating circuits with large $\ell$
(more time consuming than the evaluation of a circuit with small $\ell$)
multiple times is unavoidable.

We resolve this by tweaking a circuit to have multiple output nodes.
Now, we assume that a circuit returns $\textsf{out}(\ell) \in \{\true, \false\}$ for each $\ell_\text{lo} \leq  \ell \leq \ell_\text{hi}$.
As described in \Cref{sec:circuit} and illustrated in \Cref{fig:construction},
we create $(\hi-\lo+1)$ output nodes connected from the last layer of internal nodes.
Other nodes are the same as the other search strategies.

\sunified is shown in \Cref{alg:search-unified}.
Once the output for $\ell$ is evaluated to $\true$,
it is safe to update $\hi$ with $\ell$.
We then construct a smaller circuit for the new $\hi$,
which may speed up circuit evaluation.

\begin{algorithm2e}[h]
  \SetAlgoLined
  \KwInput{An instance $\mathcal{I}$ of \gi{},
  bounds of the optimal weight $\lo \leq \hi$,
  and a failure count threshold $\theta$.\\
  }
  \KwOutput{Optimal weight.}

  \BlankLine
  \For{$i \gets 1$ \KwTo $\theta$}{
    \If{$\lo = \hi$}{
      \KwRet{$\hi$}\;
    }
    Create a multi-output circuit $\C$ of $\mathcal{I}$ for all~$\ell$
    between $\lo$ and $\hi$, inclusive.\;
    Solve \MLDshort with~$(\C, k)$ and obtain results $\textsf{out}(\ell): \ell \mapsto \{\false, \true\}$.\;
    Let $\hi \gets \min\{\ell \colon \textsf{out}(\ell)=\true \}$.
  }
  \KwRet{$\hi$}\;
  \caption{\sunified}
  \label{alg:search-unified}
\end{algorithm2e}


\begin{restatable}[$\bigstar$]{lemma}{lemSearchUnified}\label{lem:search-unified}
  Algorithm \sunified{} correctly finds the optimal weight
  in expected running time $\Oh(\theta \cdot f(\hi))$
  with probability $1-(1-p)^\theta$.
\end{restatable}

We evaluate these search strategies in \Cref{sec:exp:search}.

\section{Two Phase Solution Recovery}\label{sec:two-phase}

We introduced solution recovery algorithms for general circuits in \Cref{sec:solution-recovery-mld}.
Once we find a tree certificate for the output node corresponding to weight $\ell$,
it is trivial to construct a solution walk collecting at least $t$ colors with
weight $\ell$ (see the proof of \Cref{lem:construct-compact}).
However, finding a tree certificate necessitates many circuit evaluations,
often requiring more time than that for finding the solution weight.
To cope with this issue, we propose \rtp, a solution recovery strategy specific to \gi.
In this section, we extend the definition of a walk to allow consecutively repeated vertices.
For example, $(v,v,u)$ can be considered as a walk,
and we treat the weight from $v$ to $v$ as $0$.

For an instance $(G=(V,E),\colorset,w,\chi,s,t)$ of \gi
such that $G$ is a complete metric graph,
we say an ordered set $C=\{c_1,\ldots,c_t\} \subseteq \colorset$ is
an \emph{optimal color order} if
there exists an optimal closed walk $W=(s, v_1, \ldots, v_t, s)$
with possibly (consecutively) repeated vertices in $G$
satisfying $c_i \in \chi(v_i)$ for each~$1 \leq i \leq t$.
We also say that the walk~$W$ is \emph{consistent with} the color order~$C$.
The following lemma guarantees that any instance has an optimal color order of size $t$
and an optimal walk consistent with that color order.

\begin{restatable}[$\bigstar$]{lemma}{lemconsistentwalk}\label{lem:consistent-walk}
  Let $(G=(V,E),\colorset,w,\chi,s,t)$ be an instance of \gi
  such that $G$ is a complete metric graph and $\colorsetsize \geq t$.
  Suppose $\ell$ is the weight of an optimal walk.
  Then, there exist an ordered color set~$C=\{c_1,\ldots,c_t\}$
  and a walk $W=(s,v_1,\ldots,v_t,s)$ with weight $\ell$ that is
  consistent with $C$.
\end{restatable}

Our strategy is based on the following observation:
if we fix an \emph{ordered} set of $t$ colors to be collected,
then we can recover a minimum-weight walk in polynomial time.\looseness=-1

The \rtp algorithm takes as input
a recoverable circuit for \gi with respect to an optimal weight $\ell$
and proceeds with the following two phases.
First, it finds an optimal color order from the given circuit.
Second, the algorithm constructs a minimum-weight walk $W$ in $G$ consistent with the obtained color order.
From \Cref{lem:consistent-walk}, $W$ must be an optimal walk with weight~$\ell$ and collect at least $t$ colors.

\vspace*{0.8em}
\subparagraph{Phase 1: Finding an optimal color order.}
We determine an optimal color order in reverse.
Intuitively,
instead of using binary search to identify the previous vertex in a walk,
we use it to identify the previous color.
This enables an asymptotic speed-up of $\log_2 n / \log_2 \colorsetsize$.

\begin{restatable}[$\bigstar$]{lemma}{lemtpcolorrecovery}\label{lem:tp-color-recovery}
  Let $\F$ be a recoverable circuit of $m$ edges for an instance $(G=(V,E),\colorset,w,\chi,s,t)$ of \gi
  where $G$ is a complete metric graph.
  There exists a Las Vegas algorithm that finds an optimal color order
  with expected running time $\tilde{\Oh}(2^t tm)$.
\end{restatable}

Specifics depend on the circuit constructions.
Take \ccompact, for example.
Let $Y$ be candidates for the final color collected and initialize it to $\colorset$.
We know that there must exist such a color $y \in \colorset$.
Now, we modify the circuit so that only the variables for $Y$ are directly fed into the last layer.
This can be done by removing edges from $X \setminus Y$ to the auxiliary nodes in the last layer,
where $X$ denotes the set of variable nodes.
We run binary search on $Y$ until $|Y|=1$, i.e., we find $y$.\looseness-1

Lastly, we remove all edges from $y$ to all the auxiliary nodes except those in the last layer.
This ensures that we collect ($t-1$) other colors before collecting color $y$.
We then proceed to the ($t-1$)-th layer,
and by continuing this process, we can determine an ordered set of $t$ colors
collected in an optimal walk.

\vspace*{0.8em}
\subparagraph{Phase 2: Reconstructing a walk from ordered colors.}

Given ordered colors $\{c_1,\ldots,c_t\}$, we can find a walk with the minimum weight in polynomial time
by a reduction to the classic \prob{Shortest Path} problem.

\begin{restatable}[$\bigstar$]{lemma}{lemtpwalkrecovery}\label{lem:tp-walk-recovery}
  Let $(G=(V,E),\colorset,w,\chi,s,t)$ be an instance of \gi
  where $G$ is an $n$-vertex complete metric graph.
  There exists a $\tilde{\Oh}(tn^2)$-time algorithm that,
  for a given color order $(c_1,\ldots,c_t)$,
  finds a walk $W=(s,v_1, \ldots, v_t, s)$ such that $c_i \in \chi(v_i)$ for each $1 \leq i \leq t$,
  and its weight is minimized.
\end{restatable}

Using \Cref{lem:tp-color-recovery} and \Cref{lem:tp-walk-recovery} together,
we obtain the following. We defer the proofs to \Cref{appendix:two-phase}.

\begin{restatable}{lemma}{lemtprecovery}\label{lem:tp-recovery}
  Let $\F$ be a recoverable circuit of $m$ edges for an instance $(G=(V,E),\colorset,w,\chi,s,t)$ of \gi
  where $G$ is a complete metric graph, and the target weight is $\ell$.
  There exists a Las Vegas algorithm that finds a closed walk from $s$ in $G$ with weight at most $\ell$,
  collecting at least $t$ colors,
  with an expected running time of $\tilde{\Oh}(2^t tm + tn^2)$.
\end{restatable}

\begin{figure*}[t]
  \centering
  \begin{subfigure}{0.48\textwidth}
    \includegraphics[width=\linewidth]{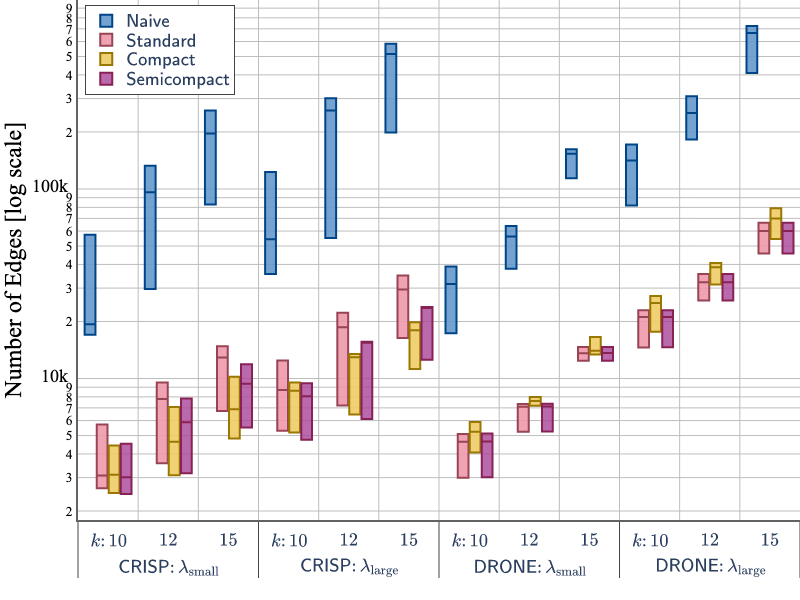}
  \end{subfigure}
  \hfill
  \begin{subfigure}{0.48\textwidth}
    \includegraphics[width=\linewidth]{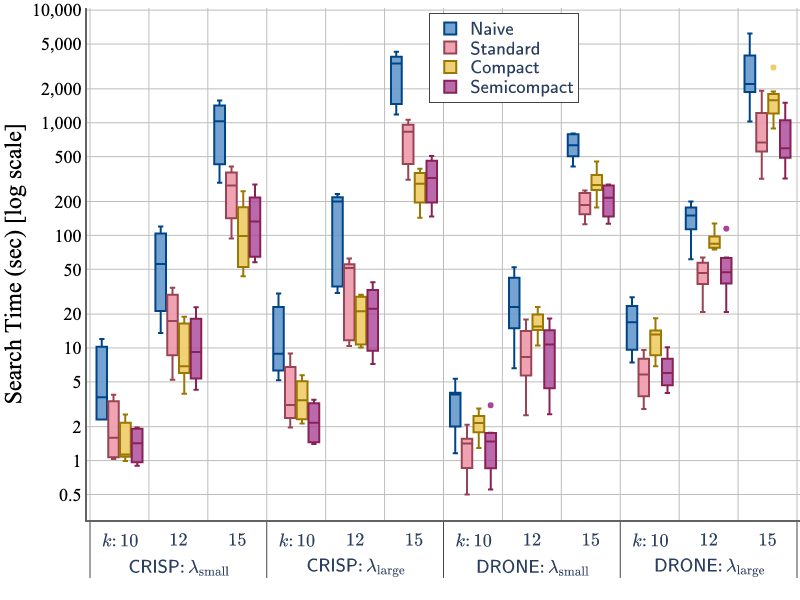}
  \end{subfigure}
  \caption{
  Edge count (left) and 
  search time (right) for each circuit type,
  tested on \CRISP and \DRONE instances
  with different $k$ values and scaling factors $\lambda$.
  Figures follow the standard box plot convention.
  %
  \csemi performs well on both instances
  (discussions in \Cref{sec:circuit-type-experiment}).}
  \label{fig:circuit}
\end{figure*}

\section{Proof of Main Theorem}\label{sec:proof-main-thm}

Now we are ready to formally state our main result.

\begin{theorem}
  \label{thm:algebra}
  If the edge weights are restricted to integers
  and there exists a solution with weight at most~$\ell \in \N$,
  then \gi{} can be solved in randomized
  $\tilde{\Oh}(2^t (\ell t^3 n^2 + t^3 \colorsetsize n))$~time
  and with~$\tilde{\Oh}(\ell t n^2 + t{\colorsetsize}n)$~space,
  with a constant success probability.
\end{theorem}

\begin{proof}
  Consider running \algipa with \ccompact, \sunified, and \rmc
  with appropriate preprocessing.
  We set scaling factor $\lambda=1$ (input weights are integral),
  and let $\theta$ be a constant that controls the overall success probability.
  The preprocessing steps described in \Cref{sec:gi-algorithm}
  take $\tilde{\Oh}(n^2)$ time to create a complete, metric graph.
  Step \ref{algipa:step:1} is optional if we know an upper bound $\ell$ for the solution weight.

  In Step \ref{algipa:step:2}, as shown by~\Cref{lem:construct-semicompact},
  \ccompact creates a circuit~$\C$ of
  $\Oh(\ell t^2n + \colorsetsize)$ nodes and $\Oh(\ell t n^2 + t \colorsetsize n)$ edges
  such that if there exists a solution walk with weight at most~$\ell$
  in $G$, then $\C$ contains a tree certificate for \MLDshort with degree
  at most $t$.
  This implies that for an output node $O_{\ell'}$ for every $\ell' \leq \ell$
  the fingerprint polynomial $P_{\C[O_{\ell'}]}(X,A)$ contains
  a multilinear monomial with coefficient $1$ if $\ell'$ is feasible.
  Note that $\C$ contains $\Oh(k)$ addition gates.

  By \Cref{lem:williams-poly-weak} and \Cref{lem:search-unified},
  we can find the optimal weight $\tilde{\ell}$ in
  $\tilde{\Oh}(2^t t (\ell tn^2 + t\colorsetsize n))$ time and
  $\tilde{\Oh}(\ell tn^2 + t\colorsetsize n + t\colorsetsize)=\tilde{\Oh}(\ell tn^2 + t\colorsetsize n)$ space
  with a constant probability.
  Lastly, in Step \ref{algipa:step:3} we use \rmc to find a solution walk with weight $\tilde{\ell}$.
  From \Cref{lem:recovery-mc}, we can find a tree certificate of $\C$ in
  $\tilde{\Oh}(2^t t\cdot t (\ell t n^2 + t \colorsetsize n))
  =2^t (\ell t^3 n^2 + t^3 \colorsetsize n)$ time with a constant probability.
  As stated in the proof of \Cref{lem:construct-compact},
  once we find a tree certificate of $\C$, we can reconstruct a walk $W$ collecting
  at least $t$ colors with weight $\tilde{\ell}$ on $G$.\looseness=-1

  Lastly, we replace each edge $uv$ in $W$
  with any of the shortest $u$-$v$ paths in the original graph for \gi{}
  to construct a solution walk $\tilde{W}$.
  Since $G$ is complete and metric, the weight of $\tilde{W}$ is also $\tilde{\ell}$,
  and since edge weights are restricted to (non-negative) integers,
  this weight is optimal in the original instance.
  Also, replacing edges in $W$ with shortest paths in the original graph is safe
  because it is enough to collect colors at the vertices that appeared in $W$.

  The algorithm fails only when either \sunified or \rmc fails.
  Since both subroutines have a constant success probability,
  the overall success probability is also constant.
\end{proof}

\begin{figure*}[t]
  \centering
  \begin{subfigure}{0.48\textwidth}
    \includegraphics[width=\linewidth]{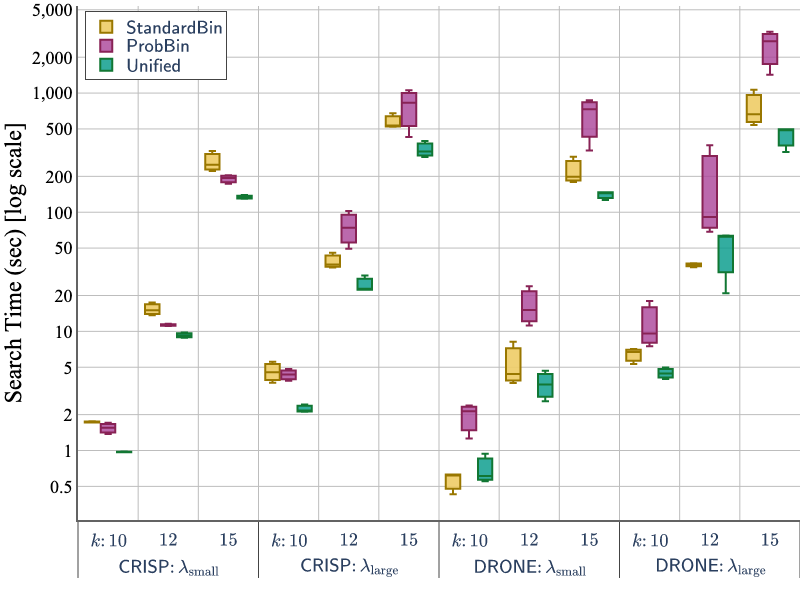}
  \end{subfigure}
  \hfill
  \begin{subfigure}{0.48\textwidth}
    \includegraphics[width=\linewidth]{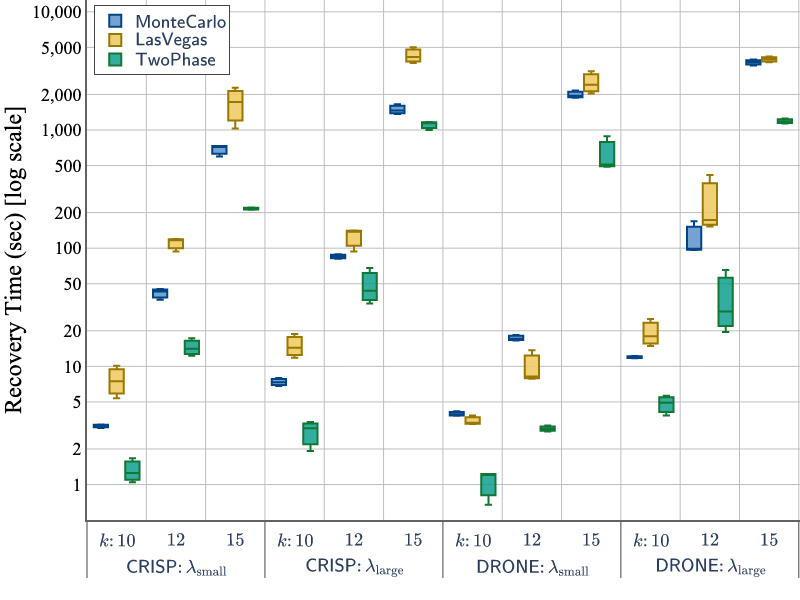}
  \end{subfigure}
  \caption{
  Evaluation of search strategies (left) and solution recovery strategies (right).
  \sunified and \rtp are clear winners (discussions in \Cref{sec:exp:search,sec:exp:recovery}).}
  \label{fig:search}
\end{figure*}

\section{Experimental Results}\label{sec:experiments}

We now describe an empirical evaluation of the trade-offs in our circuit-based approach to robotic motion planning.
We begin by summarizing the highlights.

In validation of the main thrust of our paper, we demonstrate (\Cref{sec:exp:dp}) that circuit-based implementations use less than half the memory of dynamic programming-based software
even for a moderate number of colors, and the observed trends suggest that this gap will reach several orders of magnitude for even slightly more colors.
In~\Cref{sec:circuit-type-experiment,sec:exp:search,sec:exp:recovery,sec:exp:scale,sec:exp:success}, we demonstrate that our software's performance is highly sensitive to various choices related to circuit construction, optimal weight identification, solution recovery, edge-weight scaling, and a parameter governing failure probability; these results motivate further research on the engineering of algebraic methods.
As expected, circuit-based algorithms are slower than dynamic programming,
but our software can consistently handle realistic instances in less than a day of computation time,
which may already be acceptable in some applications such as planning automated surgery, where there is a significant time delay between imaging and operation.
This is the application from which our first dataset, \CRISP{}, arises. Our second, \DRONE{}, is from a bridge-inspection scenario.\looseness=-1

Both datasets are from Fu~\emph{et al.}~\cite{fu2023asymptoticallyoptimal,fu2021computationally}.
We built RRGs (Rapidly-exploring Random Graphs \cite{karaman2011sampling}) using their software.
For each dataset,
we created roughly~$100$-vertex instances
and sampled $k \in \{10, 12, 15\}$ dispersed POIs (Points Of Interest) using an algorithm from Mizutani \emph{et al.}~\cite{mizutani2024leveraging}.
See~\Cref{appendix:preprocessing} for details.
Throughout the experiments, we set $t=k$, i.e. algorithms try to collect all colors (POIs)
in the given graph,
and parallelized using $80$ threads by default.\looseness=-1

We first verified the effect of algorithmic choices in subroutines---circuit types, search strategies,
and solution recovery strategies. 
%
We tested two scaling factors for each dataset:
$\lambda_\text{small}=50$ and $\lambda_\text{large}=100$ for \CRISP and
$\lambda_\text{small}=0.1$ and $\lambda_\text{large}=0.5$ for \DRONE.
We measured running times by averaging over three different seeds for a pseudorandom number generator;
for the circuit-type experiment we also used three different RRGs for each configuration.
For these experiments, we set the expected success probability to $90\%$.\looseness-1

Second, we tested quality-controlling parameters---scaling factors and
expected success probabilities---to observe trade-offs between running time/space usage and accuracy
for practical instances with real-valued edge weights.
After determining baseline configurations,
we compared the performance of \algipa to that of \dpipa, with various $k$ values
to verify their asymptotic running time and memory usage.
In \Cref{appendix:scalability}, we present further experimental results
on scalability with respect to graph size, and a multithreading analysis.\looseness=-1

We used C++17
and ran all experiments on Rocky Linux release 8.8
on identical hardware,
equipped with 40 physical cores (Intel(R) Xeon(R) Gold 6230 CPU @ 2.10 GHz) and 191000 MB of RAM.
Our code and data are available at \url{https://osf.io/w4gs3/?view_only=7465595847874192a17d960aa693bc72}.

\subsection{Choice of Circuit Types.}\label{sec:circuit-type-experiment}

To assess the four circuit types proposed in \Cref{sec:circuit},
we ran our algorithm with each construction, using the \sunified search strategy.
We first measured the number of edges in the constructed arithmetic circuit
as this number is a good estimator for running time and space usage.

\Cref{fig:circuit} (left) plots the edge-count of the circuit
for each instance.
By construction, \cstandard always gives a smaller circuit than \cnaive.
\ccompact has asymptotically the smallest circuit, but in some configurations,
especially in \DRONE, \ccompact results in a larger circuit than \cstandard.
We observed that \DRONE has a lower bound (relative to the optimal weight) higher than that of \CRISP
due to the fact that each POI of \DRONE belongs to fewer vertices.
\ccompact has to maintain low-weight walks that are ``below'' lower bounds,
thus creating more edges.
\csemi has about the same size as \cstandard with \DRONE
and is much smaller with \CRISP.\looseness=-1

Next, we measured the search time to obtain the optimal weight
using the search strategy \sunified.
\Cref{fig:circuit} (right) plots
the distribution of the search time\footnote{%
Circuit construction time was negligible (less than $1$ second).
} for each instance.
For both datasets, search time is closely related to the number of edges in the circuit.
With the \CRISP dataset with larger {$k$ ($\geq 12$)},
\ccompact recorded the best search time on average.
This was not true with \DRONE; for that dataset, \ccompact is slower than \cstandard and \csemi.
It is also surprising to us that in practice, \cstandard performed best on most of the \DRONE instances.
This suggests that the circuit size is not the only factor for determining the running time.\looseness=-1

\subsection{Choice of Search Strategies.}\label{sec:exp:search}

To evaluate search strategies,
we ran the algorithm with each strategy from \Cref{sec:search-strategies} with the circuit type \csemi
and measured the search time (as done in~\Cref{sec:circuit-type-experiment}).
\Cref{fig:search} (left) plots the distribution of the search time for each instance.
\sunified is the fastest with a few exceptions on \DRONE,
which matches our expectation.
\sprob is more unstable; it is faster than \sstandard with \CRISP
except using $\lambda_\text{large}$ with larger {$k$ ($\geq 12$)},
but it is the slowest among the three with \DRONE.

\begin{figure*}[t]
  \centering
  \begin{subfigure}{0.48\textwidth}
    \includegraphics[width=\linewidth]{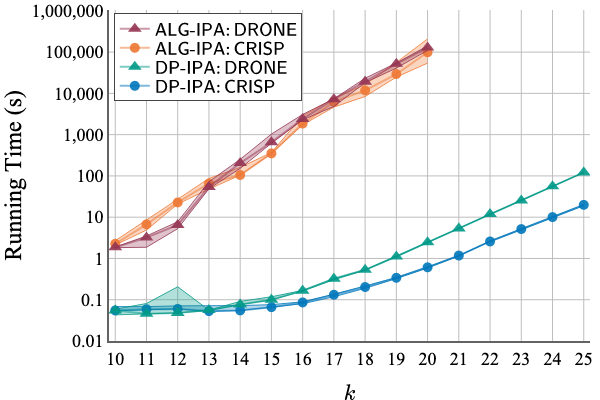}
  \end{subfigure}
  \hfill
  \begin{subfigure}{0.48\textwidth}
    \includegraphics[width=\linewidth]{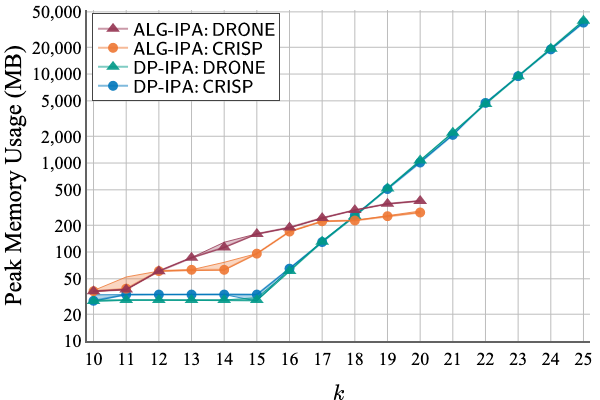}
  \end{subfigure}
  \caption{
  Overall running time (left) and peak memory usage (right) in log scale, comparing \algipa with \dpipa.
  Data points for \algipa with $k\geq 21$ are missing due to the time limit of $2$ days ($172,800$ seconds).
  \algipa is slower than \dpipa but has a clear advantage in memory usage
  (discussions in \Cref{sec:exp:dp}).}
  \label{fig:algdp}
\end{figure*}

\subsection{Choice of Solution Recovery Strategies.}\label{sec:exp:recovery}

For each recovery strategy,
we measured the time for solution recovery after finding the optimal weight,
using circuit type \csemi.
As shown in \Cref{fig:search} (right),
\rtp performed better than the others with all test instances.
\rmc was faster than \rlv with all but \DRONE with~$\lambda_\text{small}$
and smaller $k$ ($\leq 12$).
Also, notice that it is guaranteed that \rlv and \rtp always succeed.
The running time of \rmc would increase if we required a better success rate.
We conclude that \rtp has clear advantages over \rmc and \rlv for \gi,
but note \rmc and \rlv are still valuable since they apply to general circuits
and can find a tree certificate.

\subsection{Choice of Scaling Factors.}\label{sec:exp:scale}

Next, we evaluated the effect of varied scaling factors ($\lambda$)
by running our algorithm using subroutines \csemi, \sunified and \rtp
with fixed $k=15$.

We first measured how close the weight of the walk obtained by \algipa is to the optimal weight.
\Cref{fig:scale} (left) in the Appendix shows the ratio of the weight achieved by \algipa to the optimal
(the lower, the better).
This ratio was below $1.2$ for the \CRISP instance,
and with larger scaling factors (up to $150$), the ratio converges to around $1.1$.
The trend is different with \DRONE, where we obtain optimal solutions with scaling factor $1.0$.
The solution quality is as bad as $1.3$ with $\lambda=0.1$,
but is vastly improved (within $1.03$) with $\lambda = 0.25$.\looseness=-1

\Cref{fig:scale} (middle) plots the peak memory usage for \algipa.
We observe the memory usage grows almost linearly with respect to the scaling factor.
This is due to the fact that the size of an arithmetic circuit is proportional to the solution weight
in integers.
\Cref{fig:scale} (right) shows the overall running time
(including search time and solution recovery time) for each instance.
The precise running times depend on randomness,
but our experiment demonstrates that running time also increases almost linearly with the scaling factor.
Taken together, the plots in~\Cref{fig:scale} illustrate a trade-off between
fidelity (influenced by rounding errors) and computational resources (time and space). 

\subsection{Choice of Success Probability.}\label{sec:exp:success}

We then tested trade-offs between the expected success probability and the solution quality.
\Cref{fig:success} (left) in the Appendix suggests that even with the expected success probability around~$0.7$,
\algipa successfully finds optimal integer values.
Hence, we concluded that the expected success probability of $0.9$ is a valid choice.
\Cref{fig:success} (right) shows the relationship between the expected success probability and
the search time.
As expected, running times grow with success probabilities $p$,
nearly proportional to the value $-\log(1-p)$.

\subsection{Comparison to \dpipa.}\label{sec:exp:dp}

Finally, we compared overall running time and peak memory usage of \algipa with \dpipa for varied $k$ values
($10 \leq k \leq 20$ for \algipa and $10 \leq k \leq 25$ for \dpipa).
We did not run \algipa for $k \geq 21$ because each run would take more than $2$ days.
We used subroutines \csemi, \sunified, and \rtp
with scaling factor $\lambda_{\text{small}}$ and expected success probability $0.9$.

Results were straightforward;
\Cref{fig:algdp} (left) shows that \algipa is significantly slower than \dpipa,
but the growth of the running time is exponential in $k$ for both algorithms.
Also, as Theorem \ref{thm:algebra} suggests, the running time of \algipa is $\Oh(2^k k^3)$
if we fix the other parameters, which is worse than \dpipa's $\Oh(2^k)$.
However, in \Cref{fig:algdp} (right) we see a clear advantage of \algipa.
The plot verifies the exponential growth of the peak memory usage by \dpipa, with respect to $k$,
while validating that the memory usage by \algipa is polynomial in~$k$.
At $k=19$, the memory usage of \algipa is less than that of \dpipa for all instances,
and we expect that this trend will grow for any larger $k$.

\section{Conclusion}

In this paper we address the exponential space complexity of dynamic-programming algorithms for robotic motion planning by
designing a memory-efficient algorithm based on a reduction to \MLD{}. We introduce the concept of tree certificates, which
have broad implications for the recovery of multilinear monomials from arithmetic circuits, as well as several carefully designed application-specific algorithmic subroutines.
Our experiments verify that our algebraic approach is memory-efficient in practice, and provide evidence that
algebraic alternatives to dynamic programming are not as far from practicality as one might suspect.
We hope to inspire further research on both the theory and practice of these techniques; in particular, we wish to highlight
the potential of GPGPU acceleration, which has proven effective (albeit very complicated) for a related class of algebraic algorithms~\cite{kaski2018engineering}.\looseness=-1


\clearpage
\bibliographystyle{siam}
\bibliography{refs}

\clearpage
\appendix
\section{Theoretical Results}\label{sec:proofs}

\subsection{Proofs on Tree Certificates.}\label{sec:proof-tree-cert}

We start by stating the following property of a tree certificate.

\begin{proposition}\label{prp:tree-cert-1}
  Let $\ccert$ be a tree certificate without scalar inputs.
  Then, for any node $v \in V(\ccert)$,
  we have that $P_{\ccert[v]}(X, A)$ is a multilinear monomial
  (without any other terms) of coefficient~$1$.
  Moreover, every addition gate in $\ccert$ has exactly one in-neighbor.
\end{proposition}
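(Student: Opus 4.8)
The plan is to prove the statement by structural induction on the tree certificate $\ccert$, working from the sources (leaves) up to the output node. The base case is a source of $\ccert$: since $\ccert$ has no scalar inputs, every source is a variable $x \in X$, and $P_{\ccert[x]}(X,A) = x$, which is a multilinear monomial with coefficient~$1$. For the inductive step I would consider a node $v \in V(\ccert)$ and split into two cases according to whether $v$ is an addition gate or a multiplication gate, using the fact that in a tree every node has a well-defined (disjoint) set of in-neighbors whose induced subcircuits are vertex-disjoint subtrees.

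First I would handle the multiplication case. If $v$ is a multiplication gate with in-neighbors $u_1, \dots, u_d$, then by the tree property the subcircuits $\ccert[u_1], \dots, \ccert[u_d]$ are vertex-disjoint, so in particular they share no $A$-variables and no $X$-variables. By the inductive hypothesis each $P_{\ccert[u_i]}(X,A)$ is a single multilinear monomial of coefficient~$1$, and $P_{\ccert[v]}(X,A)$ is their product; disjointness of variable sets guarantees the product is again multilinear, and the coefficient remains $1$. (Here is where I would need to be slightly careful: multilinearity of the product could in principle fail if two factors shared a variable, and it is precisely the tree structure — no vertex reachable from two distinct in-neighbors of $v$ — that rules this out. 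Note also that the "same inputs" condition on multiplication gates in the certificate definition is what ensures we are not silently dropping factors.)

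Next I would handle the addition case, which also yields the "moreover" claim. Suppose $v$ is an addition gate. Each in-edge $uv$ carries a dedicated fingerprint variable $a_{uv} \in A$, and these $a_{uv}$ variables are fresh — they appear nowhere else in $\C$ (by the construction of the fingerprint polynomial) and in particular nowhere in $\ccert[u]$. So if $v$ had two or more in-neighbors $u_1, u_2, \dots$, then $P_{\ccert[v]}(X,A) = \sum_i a_{u_i v} \cdot P_{\ccert[u_i]}(X,A)$ would be a sum of at least two distinct monomials (distinct because of the distinct fresh variables $a_{u_i v}$), contradicting minimality of the certificate: we could delete all but one of the in-branches of $v$ and still retain a multilinear monomial, giving a strictly smaller sub-circuit with the required properties. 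Hence every addition gate in $\ccert$ has exactly one in-neighbor $u$, and $P_{\ccert[v]}(X,A) = a_{uv} \cdot P_{\ccert[u]}(X,A)$, which by the inductive hypothesis is a multilinear monomial of coefficient~$1$ (again using that $a_{uv}$ is fresh, so it does not already occur in $P_{\ccert[u]}$).

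The main obstacle I anticipate is getting the logical dependency between the two claims right: the "every addition gate has exactly one in-neighbor" statement is used inside the induction (it is what makes the addition case produce a monomial rather than a polynomial), so it cannot simply be proved afterwards as a corollary — it must be established hand-in-hand with the monomial claim within the same inductive argument, invoking minimality of the certificate at each addition gate. A secondary point requiring care is the precise interaction between the definition of "certificate" (minimality plus the multiplication-inputs condition) and the tree hypothesis: I would want to state explicitly at the outset that in a tree certificate the in-neighbor subcircuits of any gate are vertex-disjoint, since that disjointness is invoked repeatedly and is the crux of why no variable is ever repeated.
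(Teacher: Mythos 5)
Your proposal is correct and follows essentially the same route as the paper's proof: induction over the tree structure, with the multiplication case handled via vertex-disjointness of the in-neighbors' subtrees and the addition case handled via minimality of the certificate, establishing the single-in-neighbor claim within the same induction. Your explicit remarks about the freshness of the fingerprint variables and the intertwining of the two claims are sound elaborations of what the paper's proof does more tersely.
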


\begin{proof}
  We prove by induction on the number~$n$ of nodes in $\ccert$ that
  $P_{\ccert[v]}(X, A)$ contains exactly one monomial of coefficient~$1$ for all $v \in V(\ccert)$.
  For the base case with $n=1$, the only node is a variable node,
  which is by definition a multilinear monomial of coefficient~$1$.
  For the inductive step,
  consider $P_{\ccert[v]}(X,A)$ for an output node $v$.
  Notice that $v$ must be either an addition gate or a multiplication gate.

  If $v$ is an addition gate, we know that there exists
  an in-neighbor of $v$ whose fingerprint polynomial contains a multilinear monomial of coefficient~$1$.
  Due to the minimality of a certificate, $v$ cannot have more than one in-neighbor.
  Let $u \in N_{\ccert}^-(v)$ be $v$'s in-neighbor.
  Then, $\ccert[u]$ is a tree certificate for $u$.
  From the inductive hypothesis, $P_{\ccert[v]}(X,A) = a_{uv} \cdot P_{\ccert[u]}(X,A)$,
  which is a multilinear monomial of coefficient~$1$.

  If $v$ is a multiplication gate,
  then for every in-neighbor~$u$ of~$v$,
  $\ccert[u]$ must be a tree certificate
  because otherwise $P_{\ccert[v]}(X,A)$ cannot have a multilinear monomial.
  From the inductive hypothesis, $P_{\ccert[u]}(X,A)$ is a multilinear monomial of coefficient~$1$
  for every~$u$.
  Notice that since the underlying graph of $\ccert$ is a tree,
  for each distinct $u, u' \in N_{\ccert}^-(v)$,
  $P_{\ccert[u]}(X,A)$ and $P_{\ccert[u']}(X,A)$ do not share any variables or fingerprints.
  Hence, $P_{\ccert[v]}(X,A) = \prod_{u \in N_{\ccert}^-(v)}P_{\ccert[u]}(X,A)$
  is also a multilinear monomial of coefficient~$1$.
\end{proof}

\begin{corollary}\label{cor:tree-cert-sub}
  Let $\ccert$ be a tree certificate without scalar inputs.
  Then, for any node $v \in V(\ccert)$, $\ccert[v]$ is also a tree certificate.
\end{corollary}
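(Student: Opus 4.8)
The plan is to derive the corollary directly from \Cref{prp:tree-cert-1} by verifying that $\ccert[v]$ satisfies the defining conditions of a tree certificate for the output node $v$. First I would observe that the underlying graph of $\ccert[v]$ is a subgraph of the underlying graph of $\ccert$, which is a tree by hypothesis; since any subgraph of a tree induced by a set of nodes closed under "reaching" (which $\ccert[v]$ is, by its definition as the sub-circuit of all nodes that can reach $v$) is connected, $\ccert[v]$ is itself a tree. So the tree-shape condition is immediate. It also has no scalar inputs, since $\ccert$ has none.

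Next I would check the two remaining conditions in the definition of a certificate. For the "multilinear monomial" condition: by \Cref{prp:tree-cert-1}, $P_{\ccert[v]}(X,A)$ is a multilinear monomial of coefficient $1$, so in particular it contains a multilinear monomial. For the condition that every multiplication gate in $\ccert[v]$ takes the same inputs as in $\C$: this is where the main (minor) subtlety lies. We know $N^-_{\ccert}(u) = N^-_{\C}(u)$ for every multiplication gate $u \in V(\ccert)$; I need $N^-_{\ccert[v]}(u) = N^-_{\C}(u)$ for every multiplication gate $u \in V(\ccert[v])$. The point is that $\ccert[v]$ retains \emph{all} nodes of $\ccert$ that can reach $u$ (since reaching $u$ implies reaching $v$), hence in particular all of $u$'s in-neighbors in $\ccert$, so $N^-_{\ccert[v]}(u) = N^-_{\ccert}(u) = N^-_{\C}(u)$.

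Finally I would address minimality: $\ccert[v]$ must be a \emph{minimal} sub-circuit with output $v$ having these properties. Suppose not; then some proper sub-circuit $\ccert'$ of $\ccert[v]$ with output $v$ still contains a multilinear monomial and respects the multiplication-gate inputs. But then one could replace the part of $\ccert$ lying "below" $v$ (i.e.\ the nodes reaching $v$) by $\ccert'$, obtaining a proper sub-circuit of $\ccert$ that still has output $r$, still contains a multilinear monomial (by \Cref{prp:tree-cert-1}, the polynomial at $r$ is a product/chain over the value at $v$, so replacing $\ccert[v]$ by anything still producing a multilinear monomial at $v$ keeps a multilinear monomial at $r$), and still respects all multiplication-gate inputs—contradicting minimality of $\ccert$.

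I expect the only real obstacle is making the last paragraph fully rigorous: one must argue carefully that a multilinear monomial surviving at $v$ still survives at the true output $r$ after the substitution. \Cref{prp:tree-cert-1} makes this clean, since along the unique path from $v$ to $r$ in the tree $\ccert$ every addition gate has in-degree one and every multiplication gate multiplies $v$'s contribution by monomials on variable-disjoint subtrees, so multilinearity and coefficient $1$ are preserved.
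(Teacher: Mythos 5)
Your proposal is correct and follows essentially the same route as the paper's (much terser) proof: invoke \Cref{prp:tree-cert-1} to get that $P_{\ccert[v]}(X,A)$ is a multilinear monomial of coefficient~$1$, observe that the tree structure and multiplication-gate inputs are inherited, and derive minimality of $\ccert[v]$ from minimality of $\ccert$. Your careful substitution argument for minimality is a welcome elaboration of a step the paper dismisses with ``necessarily, $\ccert[v]$ is a certificate as it should be minimal.''
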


\begin{proof}
  From \Cref{prp:tree-cert-1}, for any node $v \in V(\ccert)$
  we have that $P_{\ccert[v]}(X,A)$ is a multilinear monomial
  of coefficient $1$.
  Necessarily, $\ccert[v]$ is a certificate as it should be minimal,
  and clearly the underlying graph of $\ccert[v]$ is a tree.
  Thus $\ccert[v]$ is a tree certificate.
\end{proof}




Here we prove \Cref{lem:tree-cert}.

\treecertlemma*

To show this, we introduce new notation and then prove a stronger lemma.
For a nonempty set of variables~$\emptyset \neq X' \subseteq X$ and
a (possibly empty) set of fingerprints~$A' \subseteq A$,
we write~$\mu(X',A')$ for~$\left(\prod_{x \in X'} x \right)\left(\prod_{a \in A'} a \right)$.
By definition, $\mu(X',A')$ is a multilinear monomial of coefficient~$1$.
Similarly, we write~$\mu(X')$ for~$\prod_{x \in X'}x$ in case fingerprints are irrelevant.
Also, we say a tree certificate $\ccert$ for $v \in V(\ccert)$ \emph{encodes} $(X',A')$
if $P_{\ccert[v]}(X,A) = \mu(X',A')$.


\begin{lemma}\label{lem:tree-cert-strong}
  Let $\C$ be a circuit without scalar inputs.
  For a node $v \in V(\C)$, the fingerprint polynomial $P_{\C[v]}(X,A)$ contains
  a monomial~$\mu(X',A')$ for some $\emptyset \neq X' \subseteq X$ and $A' \subseteq A$
  if and only if there exists a tree certificate $\ccert$ for $v$ encoding $(X',A')$.
\end{lemma}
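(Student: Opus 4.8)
The plan is to prove both directions by induction on the number of nodes in $\C[v]$ (equivalently, structural induction on the circuit rooted at $v$). The reverse direction ($\Leftarrow$) is essentially already contained in Proposition~\ref{prp:tree-cert-1}: if $\ccert$ is a tree certificate for $v$ encoding $(X',A')$, then $P_{\ccert[v]}(X,A) = \mu(X',A')$ by definition, and since $\ccert$ is a sub-circuit of $\C[v]$ obtained by deleting some in-edges at addition gates (but keeping all in-edges at multiplication gates), the monomial $\mu(X',A')$ survives as one of the terms in the expansion of $P_{\C[v]}(X,A)$ — deleting an addition in-edge only drops terms, never introduces or merges them, and expansion is monotone in the sense that every monomial of a sub-expression obtained this way appears in the full expansion. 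I would spell this out as a short monotonicity observation: for any node $w$ and any sub-circuit $\C'$ of $\C[w]$ with the same output obtained by removing in-edges only at addition gates, every monomial of $P_{\C'[w]}$ occurs in $P_{\C[w]}$ with at least the same coefficient.

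The forward direction ($\Rightarrow$) is the substantive part and is handled by the induction. Suppose $P_{\C[v]}(X,A)$ contains the monomial $\mu(X',A')$ with $X' \neq \emptyset$. Base case: $v$ is a variable node $x$, so $P_{\C[v]} = x$, forcing $X' = \{x\}$, $A' = \emptyset$, and the single-node circuit is the desired tree certificate. (Scalar inputs are excluded by hypothesis, so there is no scalar base case.) Inductive step, case $v$ is an addition gate: then $P_{\C[v]} = \sum_{u \in N^-_\C(v)} a_{uv} P_{\C[u]}$, and since coefficients of the $P_{\C[u]}$ are in $\mathbb{Z}_+$ and the fingerprint variables $a_{uv}$ are distinct across in-edges, the monomial $\mu(X',A')$ must come from a single term $a_{uv} P_{\C[u]}$ with $a_{uv} \in A'$; thus $P_{\C[u]}$ contains $\mu(X', A' \setminus \{a_{uv}\})$, and by induction there is a tree certificate $\ccert_u$ for $u$ encoding $(X', A'\setminus\{a_{uv}\})$. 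Adjoining the edge $uv$ (and node $v$) gives a tree certificate for $v$ encoding $(X',A')$; note $v$ has the single in-neighbor $u$ in $\ccert$, so the tree property is preserved.

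Case $v$ is a multiplication gate: then $P_{\C[v]} = \prod_{u \in N^-_\C(v)} P_{\C[u]}$, so $\mu(X',A')$ factors as a product of one monomial $m_u$ from each $P_{\C[u]}$. Because $\mu(X',A')$ is multilinear, the variable sets and fingerprint sets appearing in the $m_u$ must be pairwise disjoint, and their union is $(X',A')$; in particular each $m_u$ is itself of the form $\mu(X'_u, A'_u)$ with $\{X'_u\}$ and $\{A'_u\}$ partitioning $X'$ and $A'$ respectively, and crucially each $X'_u$ is nonempty (if some $X'_u = \emptyset$ then $P_{\C[u]}$ would have to produce a monomial in $A$-variables only, but every leaf below $u$ is a variable node since there are no scalars, so $P_{\C[u]}$ has positive $X$-degree — this is the one place to argue carefully). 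By induction each $u$ has a tree certificate $\ccert_u$ encoding $(X'_u, A'_u)$. I then glue these at $v$: take the disjoint union of the $\ccert_u$ together with $v$ and the edges $uv$ for all $u \in N^-_\C(v)$. This has $N^-_\ccert(v) = N^-_\C(v)$ as required for a certificate, and it is a tree because the $\ccert_u$ are vertex-disjoint trees (their leaves are distinct variable nodes, and by the disjointness of the $X'_u$ and the structure — each internal node's polynomial determines its variables — no node is shared) joined at the single new root $v$. Minimality of the resulting object as a certificate follows since removing any node would drop a required variable; so it is a genuine tree certificate encoding $(X',A')$.

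The main obstacle is the vertex-disjointness of the sub-certificates $\ccert_u$ in the multiplication case — a priori two of them could share a node of $\C$, which would break the tree structure. The resolution is that each $\ccert_u$, by Proposition~\ref{prp:tree-cert-1}, has the property that the polynomial of every one of its nodes is a single multilinear monomial, and the variable sets $X'_u$ are pairwise disjoint; a shared node would contribute its variables to two disjoint sets, a contradiction (for leaves this is immediate, and for internal nodes one propagates down to the leaves). I would also need to double-check that the constructed object is \emph{minimal} — but minimality is forced because it encodes exactly $\mu(X',A')$ with no slack, so no proper sub-circuit with the same output can still contain that monomial. Finally, Lemma~\ref{lem:tree-cert} follows as the special case $A' \subseteq A$ arbitrary, $X'$ the variable set of the multilinear monomial in question.
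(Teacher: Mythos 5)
Your forward direction has a genuine gap at exactly the crux of the lemma: the vertex-disjointness of the sub-certificates $\ccert_u$ in the multiplication case. You argue that a shared node would "contribute its variables to two disjoint sets," but an internal node $w$ of $\C$ does not have a fixed set of variables --- $P_{\C[w]}(X,A)$ is in general a sum of many monomials, and $\ccert_1[w]$ and $\ccert_2[w]$ may be \emph{different} sub-circuits of $\C[w]$ (choosing different in-edges at addition gates below $w$), encoding disjoint variable sets. The paper's own counterexample circuit from Section~\ref{sec:MLD} shows this concretely: below the final multiplication gate, the two branches admit tree certificates encoding $(\{x\},\{a_1,a_3\})$ and $(\{y\},\{a_2,a_4\})$ respectively, with disjoint variables, yet they share the gate computing $a_1x+a_2y$; gluing them produces a cycle, not a tree, and indeed no tree certificate for $xy$ exists there. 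Your argument would wrongly conclude otherwise. The tell is that your proof of the multiplication case never invokes the hypothesis that $\mu(X',A')$ appears with coefficient exactly~$1$, yet that hypothesis is indispensable (in the counterexample the coefficient is $2$). The paper's resolution is a swapping argument: take the earliest shared node $w$ in a topological order, exchange $\ccert_1[w]$ and $\ccert_2[w]$ to obtain two new tree certificates encoding crossed monomials, apply the inductive hypothesis (backward direction) to both, and conclude that the product $\prod_i P_{\C[u_i]}$ contributes $\mu(X',A')$ at least twice --- contradicting coefficient~$1$. Some version of this step is unavoidable.

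The backward direction is also slightly short of the mark: your monotonicity observation shows that $\mu(X',A')$ survives with coefficient \emph{at least}~$1$ when edges are restored, but the statement requires coefficient exactly~$1$ (that is what "contains the monomial $\mu(X',A')$" means here, and what the forward direction consumes). The paper closes this by adding back the deleted edges one at a time and noting that every edge $uw$ re-entering a node of $\ccert$ lands on an addition gate and introduces only terms divisible by $a_{uw}\notin A'$, so the coefficient of $\mu(X',A')$ never grows past~$1$. Your addition-gate case and the nonemptiness of the $X_i'$ are fine.
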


\begin{proof}
  We will show the both directions of implications.

  ($\Rightarrow$)
  For a node $v \in V(\C)$,
  assume that $P_{\C[v]}(X,A)$ contains
  a monomial~$\mu(X',A')$ for some~$\emptyset \neq X' \subseteq X$ and $A' \subseteq A$.
  We will show that $\C$ has a tree certificate~$\ccert$ for~$v$ encoding~$(X',A')$
  by induction on the number~$n$ of nodes in~$\C[v]$.
  It is trivial for the base case $n=1$:
  since there are no scalar inputs,
  we have $n=1$ if and only if $v$ is a variable node representing $x \in X$.
  We have $P_{\C[v]}(X,A)=x=\mu(\{x\},\emptyset)$,
  and $\C[v]$ encodes $(\{x\},\emptyset)$.
  For the inductive step with $n>1$, consider two cases.

  Suppose node~$v$ is an addition gate.
  Then, there exists $u \in N_{\C}^-(v)$ such that
  $P_{\C[u]}(X,A)$ contains a monomial~$\mu(X', A' \setminus \{a_{uv}\})$.
  From the inductive hypothesis, there exists a tree certificate~$\ccert$ for~$u$ encoding~$(X', A' \setminus \{a_{uv}\})$.
  We see that the circuit~$(V(\ccert) \cup \{v\}, E(\ccert) \cup \{uv\})$
  forms a tree certificate for $v$ encoding~$(X',A')$.

  Suppose node~$v$ is a multiplication gate, and let $u_1,\ldots,u_\ell$ be the in-neighbors of~$v$ in~$\C$.
  Since there are no scalar inputs, there must be a partition~$(X_1,\ldots,X_\ell)$ of~$X'$
  and a partition~$(A_1,\ldots,A_\ell)$ of~$A'$
  such that for each $1 \leq i \leq \ell$,
  $X_i \neq \emptyset$ and
  $P_{\C[u_i]}(X,A)$ contains a monomial~$\mu(X_i,A_i)$.
  From the inductive hypothesis, there exists a tree certificate~$\ccert_i$ for~$u_i$ encoding~$(X_i,A_i)$
  for each~$1 \leq i \leq \ell$.
  If tree certificates $\ccert_i$ are vertex-disjoint, then
  the circuit~$(\{v\} \cup \bigcup_{i} V(\ccert_i), \bigcup_{i} (E(\ccert_i) \cup \{u_i v\}))$
  forms a tree certificate for $v$ encoding $(X',A')$.

  Now, assume towards a contradiction that tree certificates $\ccert_1$ and $\ccert_2$
  (without loss of generality) share a node.
  Let $w \in V(\ccert_1) \cap V(\ccert_2)$ be a shared node
  first appearing in an arbitrary topological ordering of $(V(\ccert_1) \cup V(\ccert_2), E(\ccert_1) \cup E(\ccert_2))$.
  Observe that $w$ cannot be a variable node because ${X_1 \cap X_2 = \emptyset}$. \looseness-1

  From \Cref{cor:tree-cert-sub}, 
  $\ccert_1[w]$ and $\ccert_2[w]$ are tree certificates for $w$.
  Then, there exist some sets $X_1',A_1',X_2',A_2'$ such that
  $\emptyset \neq X_1' \subseteq X_1$,
  $\emptyset \neq X_2' \subseteq X_2$, $A_1' \subseteq A_1$, $A_2' \subseteq A_2$,
  $P_{\ccert_1[w]}(X,A)=\mu(X_1',A_1')$ and $P_{\ccert_2[w]}(X,A)=\mu(X_2',A_2')$.

  Consider the circuit~$\ccert_1' := ((V(\ccert_1) \setminus V(\ccert_1[w])) \cup V(\ccert_2[w]),
  (E(\ccert_1) \setminus E(\ccert_1[w])) \cup E(\ccert_2[w]))$.
  Since $w$ is the earliest ``overlapping'' node in the topological ordering,
  ${V(\ccert_1) \setminus V(\ccert_1[w])}$ and $V(\ccert_2[w])$ do not share any nodes.
  Then, $\ccert_1'$ is the tree certificate for $u_1$ encoding $((X_1 \setminus X_1') \cup X_2', (A_1 \setminus A_1') \cup A_2')$.
  Similarly, define $\ccert_2' := (V(\ccert_2) \setminus V(\ccert_2[w]) \cup V(\ccert_1[w]),
  E(\ccert_2) \setminus E(\ccert_2[w]) \cup E(\ccert_1[w]))$.
  We know that $\ccert_2'$ is the tree certificate for $u_2$ encoding $((X_2 \setminus X_2') \cup X_1', (A_2 \setminus A_2') \cup A_1')$.
  
  For convenience, let
  $\tilde{X}_1:=(X_1 \setminus X_1') \cup X_2'$,
  $\tilde{A}_1:=(A_1 \setminus A_1') \cup A_2'$,
  $\tilde{X}_2:=(X_2 \setminus X_2') \cup X_1'$, and 
  $\tilde{A}_2:=(A_2 \setminus A_2') \cup A_1'$.
  Notice that by the inductive hypothesis,
  $P_{\C[u_1]}(X,A)$ contains monomials $\mu(X_1,A_1)$
  and $\mu(\tilde{X}_1, \tilde{A}_1)$.
  Similarly,
  $P_{\C[u_2]}(X,A)$ contains monomials $\mu(X_2,A_2)$
  and $\mu(\tilde{X}_2, \tilde{A}_2)$.
  %
  Because $v$ is a multiplication gate, we have
  \begin{align*}
    &\quad P_{\C[v]}(X,A)\\
    =\ &\prod_{i=1}^{\ell}P_{\C[u_i]}(X,A)\\
    =\ &P_{\C[u_1]}(X,A) \cdot P_{\C[u_2]}(X,A) \cdot \prod_{i=3}^{\ell}P_{\C[u_i]}(X,A)\\
    =\ &(\mu(X_1,A_1) + \mu(\tilde{X}_1,\tilde{A}_1)+\ldots)\cdot\\
    & (\mu(X_2,A_2) + \mu(\tilde{X}_2,\tilde{A}_2)+\ldots)\cdot
    \prod_{i=3}^{\ell}(\mu(X_i,A_i) + \ldots)\\
    =\ &(\mu(X_1,A_1)\mu(X_2,A_2) + \mu(\tilde{X}_1,\tilde{A}_1)\mu(\tilde{X}_2,\tilde{A}_2) + \ldots)\cdot\\
    & \prod_{i=3}^{\ell}(\mu(X_i,A_i) + \ldots)\\
    =\ &(\mu(X_1,A_1)\mu(X_2,A_2) + \mu(\tilde{X}_1,\tilde{A}_1)\mu(\tilde{X}_2,\tilde{A}_2))\cdot\\
    &\prod_{i=3}^{\ell}\mu(X_i,A_i) + p(X,A),
  \end{align*}
  for some polynomial $p(X,A)$.
  This can be simplified as follows:
  \begin{align*}
     & \mu(\tilde{X}_1,\tilde{A}_1)\mu(\tilde{X}_2,\tilde{A}_2)\\
    =\ & \mu(\tilde{X}_1 \cup \tilde{X}_2, \tilde{A}_1 \cup \tilde{A}_2)\\
    =\ & \mu(((X_1 \setminus X_1') \cup X_2') \cup ((X_2 \setminus X_2') \cup X_1'),\\
    & ((A_1 \setminus A_1') \cup A_2') \cup ((A_2 \setminus A_2') \cup A_1'))\\
    =\ & \mu(((X_1 \setminus X_1') \cup X_1') \cup ((X_2 \setminus X_2') \cup X_2'),\\
    & ((A_1 \setminus A_1') \cup A_1') \cup ((A_2 \setminus A_2') \cup A_2'))\\
    =\ & \mu(X_1 \cup X_2, A_1 \cup A_2)\\
    =\ & \mu(X_1,A_1)\mu(X_2,A_2)
  \end{align*}
  \begin{align*}
    &P_{\C[v]}(X,A)\\
    =\ & 2\mu(X_1,A_1)\mu(X_2,A_2)\cdot
    \prod_{i=3}^{\ell}\mu(X_i,A_i) + p(X,A)\\
    =\ & 2\cdot \prod_{i=1}^{\ell}\mu(X_i,A_i) + p(X,A)\\
    =\ & 2\mu(\bigcup_{i=1}^\ell X_i, \bigcup_{i=1}^\ell A_i) + p(X,A)\\
    =\ & 2\mu(X',A') + p(X,A)
  \end{align*}

  This result implies
  that $P_{\C[v]}(X,A)$ contains a monomial~$\alpha \mu(X',A')$ for some integer~$\alpha \geq 2$,
  contradicting our assumption that $P_{\C[v]}(X,A)$ contains $\mu(X',A')$ as a monomial.

  ($\Leftarrow$)
  Suppose that $\C$ has a tree certificate $\ccert$ for $v \in V(\C)$ encoding $(X',A')$
  for some $\emptyset \neq X' \subseteq X$ and $A' \subseteq A$.
  We will show that $P_{\C[v]}(X,A)$ contains the monomial~$\mu(X',A')$.

  By definition, we have $P_{\ccert[v]}(X,A)=\mu(X',A')$.
  We iteratively construct ${\C}'$ as follows.
\begin{enumerate}
  \item Initially, let ${\C}' \gets (V(\C), E(\ccert))$.
  \item Add all the edges in $E(\C) \setminus E(\ccert)$ to $\C'$
  that are not pointing to $V(\ccert)$.
  At this point, we still have $P_{{\C}'[v]}(X,A)=\mu(X',A')$.
  \item Add an arbitrary edge $uw \in E(\C) \setminus E({\C}')$ pointing to $w \in V(\ccert)$.
  Here, node $w$ is an addition gate because otherwise edge $uw$ must have been included in~$\ccert$.
  Notice that newly introduced terms in $P_{{\C}'[v]}(X,A)$ have $a_{uw}$ as a factor.
  Recall that for every edge $e$ such that $a_e \in A'$,
  we have $e \in E(\ccert)$.
  Since $uv \not \in E(\ccert)$, $a_{uw} \not\in A'$,
  and $P_{{\C}'[v]}(X,A)$ still have the monomial $\mu(X',A')$.
  \item Repeat from Step 3 until we reach $\C' = \C$.
\end{enumerate}

  From the construction above, we have shown that $P_{\C[v]}(X,A)$ contains $\mu(X',A')$ as a monomial.
\end{proof}

The following proposition characterizes variable nodes in a certificate.

\begin{proposition}\label{prp:cert-var-outdeg}
  Let $\ccert$ be a certificate for node $v$ that is not a variable node.
  Then, every variable node in $\ccert$ has out-degree $1$.
\end{proposition}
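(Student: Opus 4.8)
The plan is to witness a multilinear monomial of $P_{\ccert}(X,A)$ by a \emph{parse tree}, observe that the sub\hyp{}circuit this parse tree spans is again a certificate for $v$ (hence equal to $\ccert$ by minimality), and then read off the out-degrees of the variable nodes directly. First I would set up parse trees. Since $\ccert$ is a certificate for $v$, the polynomial $P_{\ccert[v]}(X,A)$ contains some multilinear monomial $m$. Unfolding the recursion that defines $P_{\ccert[v]}(X,A)$ (a source contributes its label; an addition gate forms the $A$-weighted sum over its in-edges; a multiplication gate forms the product over its in-edges), every monomial appearing with nonzero coefficient is realized by at least one \emph{parse tree}: a rooted tree $T$ together with a map $\phi$ from $V(T)$ to the nodes of $\ccert$ sending the root to $v$, sending each leaf to a source, giving each node mapped to an addition gate exactly one child (and recording the corresponding in-edge of $\ccert$ and its $A$-variable), and giving each node mapped to a multiplication gate $w$ exactly one child per in-neighbour of $w$ in $\ccert$; the monomial realized by $T$ is the product of the source-labels at its leaves and the recorded $A$-variables. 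Because all coefficients are nonnegative, nothing cancels, so some parse tree $T$ realizes $m$.

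Next I would form $\ccert_T \subseteq \ccert$: the sub-circuit whose node set is $\phi(V(T))$ and whose edge set consists exactly of the in-edges recorded along the edges of $T$. I claim $\ccert_T$ is again a certificate for $v$. It has $v$ as output. For a multiplication gate $w$ in $\ccert_T$, any tree node mapped to $w$ has one child per in-neighbour of $w$ in $\ccert$; since $\ccert$ is a certificate these are precisely the in-neighbours of $w$ in $\C$, so $N^-_{\ccert_T}(w) = N^-_{\C}(w)$. Moreover $T$ is still a valid parse tree with respect to $\ccert_T$, with the same realized monomial, so $P_{\ccert_T[v]}(X,A)$ still contains $m$ (again with positive coefficient, by nonnegativity). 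As $\ccert_T \subseteq \ccert$ and $\ccert$ is minimal among sub-circuits of $\C$ with these properties, this forces $\ccert_T = \ccert$; in particular $\ccert$ and $\ccert_T$ have the same edges.

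Finally, let $\nu$ be any variable node of $\ccert = \ccert_T$. Variable nodes can only be images of leaves of $T$, so at least one leaf maps to $\nu$; and since $m$ is multilinear, the variable labelling $\nu$ appears at most once in $m$, so \emph{exactly one} leaf $\ell$ satisfies $\phi(\ell) = \nu$. Because $v$ is not a variable node, $\ell$ is not the root and hence has a unique parent $p$. The tree-edges recording an edge out of $\nu$ are exactly those whose lower endpoint maps to $\nu$, i.e.\ the single tree-edge $p\ell$, which records $(\nu, \phi(p))$; hence $\nu$ has out-degree exactly $1$ in $\ccert_T = \ccert$, as claimed.

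I expect the crux to be the second paragraph --- carefully verifying that the sub-circuit spanned by a parse tree still meets the definition of a certificate, namely that it preserves all in-neighbours of every multiplication gate and still produces the monomial $m$ --- after which the out-degree conclusion is immediate from multilinearity and the minimality of $\ccert$. Setting up parse trees and performing the final read-off are routine, modulo writing down the (standard) expansion of a circuit polynomial as a sum over parse trees.
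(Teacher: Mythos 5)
Your proof is correct, but it takes a genuinely different route from the paper's. The paper argues locally and by contradiction: a variable node of out-degree $0$ could be deleted, violating minimality; and a variable node $x$ of out-degree at least $2$ yields two internally disjoint $x$--$u$ paths meeting first at some node $u$, whereupon $u$ cannot be a multiplication gate (the resulting term would contain $x^2$) and, if $u$ is an addition gate, one of its in-edges is redundant --- again contradicting minimality. Your argument is global: you expand $P_{\ccert[v]}(X,A)$ as a sum over parse trees (legitimate because all scalars and coefficients are nonnegative, so nothing cancels), verify that the sub-circuit $\ccert_T$ spanned by a single parse tree realizing a multilinear monomial is itself a certificate (same output, multiplication gates keep all their inputs, the monomial survives), and invoke minimality once to get $\ccert=\ccert_T$, after which the out-degree claim falls out of multilinearity and the assumption that $v$ is not a variable node. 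Your route buys rigor and a stronger structural fact --- a minimal certificate is exactly the sub-circuit spanned by one parse tree --- and it makes precise the paper's somewhat informal step that ``multilinear monomials in $P_{\ccert[v]}(X)$ require a monomial using both paths.'' The paper's route buys brevity and avoids introducing the parse-tree formalism, which appears nowhere else in the text. If you write yours up, the only step worth spelling out fully is the verification that $\ccert_T$ is a valid sub-circuit in the paper's sense (every addition gate of $\ccert_T$ retains at least one in-edge, and every node of $\ccert_T$ still reaches $v$), which follows directly from the parse-tree structure.
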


\begin{proof}
If there is a variable node $x$ with out-degree $0$, then $x$ can be removed
and $\ccert$ is not minimal.

Assume towards a contradiction that a variable node $x$ has out-degree at least $2$.
Then, $\ccert$ contains a node $u$ with two vertex-disjoint $x$-$u$ paths.
Since $\ccert$ is minimal, multilinear monomials in $P_{\ccert[v]}(X)$
require a monomial in $P_{\ccert[u]}(X)$ using the two $x$-$u$ paths $P_1,P_2$.

First, $u$ cannot be a multiplication gate as the terms in~$P_{\ccert[u]}(X)$ using
$P_1$ and $P_2$ contain $x^2$.
Suppose $u$ is an addition gate. Then, the terms in $P_{\ccert[u]}(X)$ using
$P_1$ and $P_2$ are in the form $\alpha_1 x + \alpha_2 x$ for some monomials $\alpha_1$ and $\alpha_2$.
Then, having only one of the in-neighbors of $u$ is sufficient,
and thus $\ccert$ is not minimal, a contradiction.\looseness-1
\end{proof}

Next, we prove \Cref{lem:simple-circuit}.

\lemsimplecircuit*

\begin{proof}
  Let $\ccert$ be a certificate in $\C$.
  We prove by induction on the number $n$ of nodes in~$\ccert$.
  It is clear for the base case $n=1$.
  For the inductive step with $n>1$,
  let $v$ be an output node of~$\ccert$.
  We consider two cases (note that $v$ cannot be a variable node).

  Suppose node $v$ is an addition gate.
  Then, it must have one in-neighbor $u$ due to the minimality.
  First, we show that $\ccert[u]$ is a certificate for $u$.
  Since $P_{\ccert[v]}(X)=P_{\ccert[u]}(X)$,
  we know that $P_{\ccert[u]}(X)$ contains a multilinear monomial.
  Also for the same reason, if $\ccert[u]$ is not minimal, then $\ccert[v]$ is not minimal.
  Second, from the inductive hypothesis $\ccert[u]$ is a tree certificate,
  and adding edge $uv$ to $\ccert[u]$ does not create a cycle in the underlying graph.
  Thus, $\ccert$ is a tree certificate.

  Suppose node $v$ is a multiplication gate.
  If $v$ does not have a non-variable in-neighbor, then $\ccert$ is clearly a tree certificate.
  Assume that the in-neighbors of $v$ are
  variable nodes $X' \subseteq X$ and a non-variable node $u$.
  Since $P_{\ccert[v]}(X)=\mu(X')\cdot P_{\ccert[u]}(X)$,
  if $P_{\ccert[v]}(X)$ contains a multilinear monomial~$m_v(X)$,
  then $P_{\ccert[u]}(X)$ contains a multilinear monomial~$m_u(X):=m_v(X)/\mu(X')$.
  Here $m_u(X)$ may be a constant; recall that constant terms are also considered multilinear.

  First, we show that $\ccert[u]$ is a certificate for $u$.
  As $P_{\ccert[u]}(X)$ contains a multilinear monomial,
  it suffices to show that $\ccert[u]$ is minimal.
  Assume not.
  Then, there exists a certificate $\ccert'[u]$ as a sub-circuit of $\ccert[u]$
  such that $P_{\ccert'[u]}(X)$ contains a multilinear monomial $m_u'(X)$.
  Now, the circuit $\ccert'$ constructed from $\ccert$
  by replacing $\ccert[u]$ with $\ccert'[u]$ must have the monomial $\mu(X')\cdot m_u'(X)$.
  From \Cref{prp:cert-var-outdeg}, $m'_u(X)$ does not contain any variables from $X'$
  because for each $x \in X'$, $v$ is the only out-neighbor of $x$.
  Hence, $\mu(X')\cdot m_u'(X)$ is a multilinear monomial,
  contradicting that $\ccert$ is minimal.

  Second, from the inductive hypothesis $\ccert[u]$ is a tree certificate.
  Also, from \Cref{prp:cert-var-outdeg}, $\ccert[u]$ is disjoint from $X'$.
  Thus, $\ccert$ is a tree certificate.
\end{proof}

To extend our results to general circuits,
we first show that we can preprocess scalar nodes in linear time.

\begin{algorithm2e}[t]
  \SetAlgoLined
  \KwInput{%
    An instance $(\C, k)$ of \MLD with output nodes $O$.\\
  }
  \KwOutput{
    Sets $R_0,R_1 \subseteq O$
    such that 
    $P_{\C[r]}(X)=0$ for each $r \in R_0$, and
    $P_{\C[r]}(X)$ contains a constant term for each $r \in R_1$, 
    and an equivalent scalar-free sub-circuit $\C'$ of $\C$ containing $O \setminus \{R_0\cup R_1\}$.
  }

  \BlankLine
  Initialize $S_0$ and $S_1$ to $\emptyset$.

  \tcp{Forward traversal from inputs.}
  \For{$v \in V(\C)$ in topological ordering of $\C$}{
    \If{$v$ is a scalar input} {\label{code:prep-scalar:scalar}
      \If{$P_{\C[v]}(X)=0$}{
        Let $S_0 \gets S_0 \cup \{v\}$.\;
      }
      \uElse{
        Let $S_1 \gets S_1 \cup \{v\}$.\;
      }
    }
    \uElseIf{$v$ is an addition gate}{
      \If{$N_{\C}^-(v) \subseteq S_0$} {\label{code:prep-scalar:add0}
        Let $S_0 \gets S_0 \cup \{v\}$.\;
      }
      \uElseIf{$N_{\C}^-(v) \cap S_1 \neq \emptyset$} {\label{code:prep-scalar:add1}
        Let $S_1 \gets S_1 \cup \{v\}$.\;
      }
    }\uElseIf{$v$ is a multiplication gate}{
      \If{$N_{\C}^-(v) \cap S_0 \neq \emptyset$}{\label{code:prep-scalar:mul0}
        Let $S_0 \gets S_0 \cup \{v\}$.\;
      }
      \uElseIf{$N_{\C}^-(v) \subseteq S_1$} {\label{code:prep-scalar:mul1}
        Let $S_1 \gets S_1 \cup \{v\}$.\;
      }
    }
  }

  \KwRet{$(O \cap S_0, O \cap S_1, \C-(S_0 \cup S_1))$}\;
  \caption{\preprocessscalar}
  \label{alg:preprocess-scalar}
\end{algorithm2e}

\begin{lemma}\label{lem:preprocess-scalar}
  Given an instance $(\C,k)$ of \MLD with $m$ edges and output nodes $O$,
  there exists an $\Oh(m)$-time algorithm that
  determines
  the sets~$R_0, R_1 \subseteq O$ such that
  $P_{\C[r]}(X)=0$ for each $r \in R_0$, and
  $P_{\C[r]}(X)$ contains a constant term for each $r \in R_1$.
  The algorithm also outputs an equivalent scalar-free sub-circuit~$\C'$ of~$\C$
  containing $O \setminus \{R_0,R_1\}$.
\end{lemma}

\begin{proof}
  Consider \Cref{alg:preprocess-scalar}.
  We traverse all nodes of~$\C$ in topological ordering
  while updating two sets:
  the set $S_0$ of nodes $v$ such that $P_{\C[v]}(X)=0$
  and the set $S_1$ of nodes $v$ such that $P_{\C[v]}(X)$
  contains a constant term.
  In the end, we have $R_0 = S_0 \cap O$ and $R_1 = S_1 \cap O$.

  We first show that after the algorithm processes node $v$,
  we have $v \in S_0$ if and only if $P_{\C[v]}(X)=0$, and%
  ~$v \in S_1$ if and only if $P_{\C[v]}(X)$ contains a constant term.
  If~$v$ is an input node, there are three cases (\Cref{code:prep-scalar:scalar}).
  If~$v$ is a variable node, then $v \not\in S_0, S_1$.
  If~$v$ is a scalar input~$0$, then $v \in S_0$ and $v \not\in S_1$.
  If~$v$ is a nonzero scalar input, then $v \not\in S_0$ and $v \in S_1$.

  Suppose $v$ is an addition gate.
  $P_{\C[v]}(X)=0$ if and only if $P_{\C[u]}(X)=0$ for all in-neighbors $u$ of $v$,
  or equivalently, $N_{\C}^-(v) \subseteq S_0$ (\Cref{code:prep-scalar:add0}).
  $P_{\C[v]}(X)$ contains a constant term if and only if there exists an in-neighbor $u$ of $v$
  such that $P_{\C[u]}(X)$ contains a constant term,
  or $N_{\C}^-(v) \cap S_1 \neq \emptyset$ (\Cref{code:prep-scalar:add1}).

  Suppose $v$ is a multiplication gate.
  $P_{\C[v]}(X)=0$ if and only if there exists an in-neighbor $u$ of $v$
  such that $P_{\C[u]}(X)=0$,
  or $N_{\C}^-(v) \cap S_0 \neq \emptyset$ (\Cref{code:prep-scalar:mul0}).
  $P_{\C[v]}(X)$ contains a constant term if and only if
  $P_{\C[u]}(X)$ contains a constant term for all in-neighbors $u$ of $v$,
  or $N_{\C}^-(v) \subseteq S_1$ (\Cref{code:prep-scalar:mul1}).

  Since $R_0$ is a subset of $S_0$,
  for each $r \in R_0$, $P_{\C[r]}(X)=0$.
  Similarly, since $R_1$ is a subset of $S_1$,
  for each $r \in R_1$, $P_{\C[r]}(X)$ contains a constant term.
  Let $\C'=\C - (S_0 \cap S_1)$.
  $\C'$ is a sub-circuit of $\C$ by definition.
  \Cref{alg:preprocess-scalar} checks each edge at most once,
  and the inner loop takes $\Oh(1)$ time,
  so the running time is $\Oh(m)$.

  It remains to show that for each output node~$r$ in~$\C'$,
  $P_{\C[r]}(X)$ contains a multilinear monomial of degree at most $k$
  if and only if
  $P_{\C'[r]}(X)$ contains a multilinear monomial of degree at most $k$.
  For the sake of brevity, we write a polynomial $p(X)$ is in the class~$\Pi$
  if $p(X)$ contains a multilinear monomial of degree at most $k$.
  We proceed by induction on the nodes $v$ in $\C'$
  to show that $P_{\C[v]}(X) \in \Pi$
  if and only if $P_{\C'[v]}(X) \in \Pi$.
  For the base case, any input~$v$ in~$\C'$ is a variable node for some variable $x$.
  So, $P_{\C[v]}(X)=P_{\C'[v]}(X)=x$.
  For the inductive step for a node $v$, assume the claim holds for
  all vertices in $\C'[v]$.

  Suppose $v$ is an addition gate.
  Then, by construction,
  for each in-neighbor $u$ of $v$, $P_{\C[u]}(X)$ does not contain a constant term.
  Thus, $N_{\C}^-(v)=N_{\C'}^-(v)$.
  If $P_{\C[v]}(X) \in \Pi$,
  then there exists an in-neighbor $u$ of $v$ such that
  $P_{\C[u]}(X) \in \Pi$.
  From the inductive hypothesis,
  $P_{\C'[u]}(X) \in \Pi$,
  implying
  $P_{\C'[v]}(X) \in \Pi$.
  Conversely, if $P_{\C'[v]}(X) \in \Pi$,
  then there exists an in-neighbor $u$ of $v$ such that
  $P_{\C'[u]}(X) \in \Pi$.
  From the inductive hypothesis,
  $P_{\C[u]}(X) \in \Pi$,
  implying
  $P_{\C[v]}(X) \in \Pi$.

  Suppose $v$ is a multiplication gate.
  Then, we have $P_{\C[v]}(X) = P_{\C'[v]}(X) \cdot (p(X)+c)$ for some polynomial~$p$
  and a constant term~$c$.
  If $P_{\C'[v]}(X) \not\in \Pi$, then
  every monomial $q'(X)$ in $P_{\C'[v]}(X)$ either is not multilinear or has degree larger than $k$.
  Then, $q'(X)\cdot (p(X)+c)$ cannot create a monomial that is in $\Pi$.
  Hence, $P_{\C[v]}(X) \not\in \Pi$.
  If $P_{\C'[v]}(X) \in \Pi$,
  then there exists a monomial $q'(X)$ in $P_{\C'[v]}(X)$ that is also in $\Pi$.
  Then, $P_{\C[v]}(X)$ contains the term $q(X)=c \cdot q'(X) \in \Pi$, implying $P_{\C[v]}(X) \in \Pi$.
  This completes the proof.
\end{proof}

Next, we study a specific class of circuits that forces certificates to be tree certificates.

\begin{proposition}\label{prp:cert-internal-outdeg}
  Let $\C$ be a scalar-free circuit such that every internal node has out-degree at most $1$.
  Then, every certificate in $\C$ is a tree certificate.
\end{proposition}

\begin{proof}
  Let $\ccert$ be a certificate in $\C$ for node $v$.
  It is trivial when $v$ is a variable node.
  Otherwise, from \Cref{prp:cert-var-outdeg},
  every variable node in $\ccert$ has out-degree $1$,
  and since $\C$ is scalar-free, $\ccert$ does not contain any scalar inputs.
  Hence, every node in $\ccert$ has out-degree at most $1$.
  Then, $\ccert$ cannot create a cycle in the underlying graph,
  making $\ccert$ a tree certificate.
\end{proof}

Finally, we reprove \Cref{lem:williams} using our results.

\mldlemma*

\begin{proof}
  We preprocess scalar inputs in linear time, using \Cref{lem:preprocess-scalar}.
  For an output node $r$ with ${P_{\C[r]}(X)=0}$, clearly the polynomial at $r$
  does not contain any multilinear monomials.
  For an output node $r$ with $P_{\C[r]}(X)$ containing a constant term,
  by definition, the constant term is a multilinear monomial of degree $0$.
  The polynomial at $r$ contains a multilinear monomial of degree at most $k$.

  Let $\C$ be a scalar-free circuit with output node $r$
  (the same argument applies to circuits with multiple output nodes).
  We will show an algorithm to transform a circuit $\C$ to another circuit $\C'$
  such that $P_{\C[r]}(X)=P_{\C'[r]}(X)$,
  every internal node in $\C'$ has out-degree at most $1$,
  and the size of $\C'$ is polynomially bounded in the size of $\C$.

  Consider the following algorithm.
  We process all internal nodes $v$ except $r$ in the reversed topological ordering.
  Let $U$ be the in-neighbors of $v$ and let $W=\{w_1,\ldots,w_\ell\}$ be the out-neighbors of $v$.
  We replace $v$ with $\ell$ copies of $v$, $\{v_i \colon 1 \leq i \leq \ell\}$,
  that is, we remove $v$ and add $\ell$ new nodes with the same type as $v$
  to the current circuit.
  Then, for each $v_i$, we add edges from all of $U$ to $v_i$ and $v_i$ to $w_i$.
  Let $\C'$ be the resulting circuit after processing all internal nodes of $\C$.

  First, observe that this transformation does not change the output polynomial.
  Suppose $\C_1$ and $\C_2$ are the circuits before and after processing node $v$,
  respectively.
  For each $1 \leq i \leq \ell$, we have $P_{\C_1[v]}(X)=P_{\C_2[v_i]}(X)$
  as $N_{\C_1}^-(v)=N_{\C_2}^-(v_i)$.
  Also, $P_{\C_1[w_i]}(X)=P_{\C_2[w_i]}(X)$, which eventually leads to $P_{\C[r]}(X) = P_{\C'[r]}(X)$.
  $\C$ contains a certificate for $r$ if and only if $\C'$ contains a certificate for $r$.

  Second, after processing node $v$, $v$'s out-degree becomes $1$.
  Also, notice that the operation does not change
  the out-degrees of $v$'s descendants.
  Since we proceed in reversed topological ordering,
  every internal node in $\C'$ has out-degree at most $1$.

  Third, to argue the size of $\C'$,
  let ${n=|V(\C)|}$ and $m=|E(\C)|\leq n^2$.
  Suppose we process nodes $v_1,\ldots,v_{n'}$ in order,
  where clearly $n' \leq n$,
  and let $(\C=\C_1),\C_2,\ldots, \C_{n'}, (\C_{n'+1}=\C')$ be the sequence of circuits after processing each node.
  When processing node $v_j$ in $\C_j$, we replace $v_j$ with $\deg_{\C_j}^+(v_j)$ copies of it,
  creating $\C_{j+1}$.
  We will show that $\deg_{\C_j}^+(v_j) \leq m$ for each $j$ by a charging argument.
  Initially, we add a charge to each of $v_j$'s out-edges in $\C_j$.
  We transfer the charges in $\C_j$ to $\C_{j-1}$.
  If a charged edge is present in $\C_{j-1}$, then add a charge to the same edge in $\C_{j-1}$.
  Otherwise, the edge in $\C_j$ must be an edge from $v_j$ to a copy of $v_{j-1}$.
  Let $v_{j-1}^{(1)},\ldots,v_{j-1}^{(\ell)}$ be such copies of $v_{j-1}$.
  Then, $v_{j-1}$ must have $\ell$ out-neighbors $w_1,\ldots,w_\ell$ in~$\C_{j-1}$.
  We transfer a charge on $v_j v_{j-1}^{(i)}$ in $\C_j$
  to edge~$v_j w_i$ in $\C_{j-1}$.
  We continue this process until we transfer all charges to $\C_1$.

  Now, we show that during this process, each edge has at most one charge.
  Suppose not.
  Then, there exists a transfer of multiple charges into the same edge.
  That must involve an edge from $v_{j'}$ to $v_{j'-1}^{(i)}$ in $\C_{j'}$ for some~$i, j'$.
  But since $v_{j'-1}$ is not present in $\C_{j'}$,
  a charge on the edge from $v_{j'-1}$ to $w_i$ in $\C_{j'-1}$ must be the only charge on it,
  a contradiction.

  Hence, we can transfer all $\deg_{\C_j}^+(v_j)$ charges to distinct edges in $\C_1=\C$.
  The total number of charges is upper-bounded by $m$.
  This implies that $|V(\C_{j+1})| \leq |V(\C_j)| + m$,
  and $|V(\C')| \leq n'm \leq nm \leq n^3$.

  Putting these together,
  we can transform any circuit~$\C$ to a circuit $\C'$ in polynomial time.
  From \Cref{prp:cert-internal-outdeg}, every certificate in $\C'$ is a tree certificate.
  Also, there exists a randomized $\Oh^*(2^k)$-time and polynomial space algorithm
  to detect if $\C'$ has a tree certificate (\Cref{lem:tree-cert,lem:williams-poly-weak}),
  which is a one-sided Monte Carlo algorithm with a constant success probability.
  This completes the proof.
\end{proof}

\subsection{Algorithms for Finding Tree Certificates.}
\label{sec:tree-cert-alg}

\recoverymclemma*

\begin{proof}
  Consider \Cref{alg:mc-recovery}.
  This algorithm traverses all nodes in $\C$ from the given output node $r$
  to the variable nodes and removes nodes and edges that are not in a tree certificate.
  Whenever the algorithm sees an addition gate having a path
  to node $r$ in the current circuit, it keeps exactly one in-neighbor.
  This operation is safe due to \Cref{prp:tree-cert-1}.
  Let $\hat{\C}$ be the resulting circuit.
  Every addition gate in $\hat{\C}$ has in-degree $1$,
  and all the in-edges of a multiplication gate are kept
  if there is a path to node $r$ in $\hat{\C}$.
  Assuming that the algorithm solves \MLDshort correctly,
  $\hat{\C}$ is a tree certificate.

  The algorithm fails when \Cref{code:mc:mld} incorrectly concludes
  that $\C - A$ does not have a tree certificate.
  This happens with probability at most $(1-p)^\theta$,
  where $p$ is a constant success probability of \MLDshort (\Cref{lem:williams-poly-weak}).
  By assumption, \Cref{alg:mc-recovery} enters \Cref{code:mc:inner}
  no more than $\alpha$ times.
  Then, the overall failure probability $f(\theta,\alpha)$ is:
  $\sum_{i=0}^{\alpha - 1} (1- (1-p)^\theta)^i\cdot(1-p)^\theta
  =(1-p)^\theta \cdot \frac{1 - (1 - (1 - p)^\theta)^{\alpha}}{1 - (1 - (1 - p)^\theta)}
  =1 - (1 - (1 - p)^\theta)^{\alpha}$.
  For a fixed success probability $p'$ of the algorithm,
  we can find a value $\theta \in \Oh(\log \alpha)$ such that $f(\theta,\alpha)\leq p'$.

  Finally, the expected running time of this algorithm is asymptotically bounded by
  the running time of \Cref{code:mc:mld} as other operations can be done in $\Oh(\alpha m)$ time.
  From \Cref{lem:williams-poly-weak}, the total running time is $\Oh(\alpha \theta\cdot 2^k m)=\Oh(2^k \alpha m \log \alpha)=\tilde{\Oh}(2^k \alpha m)$
  with a constant success probability.
\end{proof}


\begin{algorithm2e}[t]
  \SetAlgoLined
  \KwInput{A recoverable circuit $\C$ with respect to $k$ and output $r$,
  and a failure count threshold $\theta$.\\
  }
  \KwOutput{A tree certificate.}


  \BlankLine
  \tcp{Reversed traversal from the output node.}
  \For{$v \in V(\C)$ in topological ordering of the reverse graph of $\C$}{
    \If{$v \neq r$ and $N^+(v)=\emptyset$}{
        \tcp{Remove unlinked nodes.}
        Let $\C \gets \C - v$.\;
    }\uElseIf{$v$ is an addition gate}{
        \tcp{Perform binary search.}
        \While{$\deg_{\C}^-(v) > 1$}{\label{code:mc:inner}
            Let $A, B $ be a balanced partition of the in-edges of $v$.\;
            \emph{Repeatedly} solve \MLDshort at most $\theta$ times
            with $(\C-A, k)$.\;\label{code:mc:mld}
            \If{$\C-A$ contains a tree certificate}{
                \tcp{Safe to remove $A$.}
                Let $\C \gets \C - A$.\;
            }\Else{
                \tcp{Should keep a vertex in $A$.}
                Let $\C \gets \C - B$.\;
            }
        }
    }
  }

  \KwRet{$\C$}\;
  \caption{\rmc}
  \label{alg:mc-recovery}
\end{algorithm2e}

\begin{algorithm2e}[t]
  \SetAlgoLined
  \KwInput{A recoverable circuit $\C$ with respect to $k$ and output $r$.\\
  }
  \KwOutput{A tree certificate.}



  \BlankLine
  \tcp{Reversed traversal from the output node.}
  \For{$v \in V(\C)$ in topological ordering of the reverse graph of $\C$}{
    \If{$v \neq r$ and $N^+(v)=\emptyset$}{
        \tcp{Remove unlinked nodes.}
        Let $\C \gets \C - v$.\;
    }\uElseIf{$v$ is an addition gate}{
        \tcp{Perform binary search.}
        \While{$\deg_{\C}^-(v) > 1$}{\label{code:lv:inner}
            Let $A, B $ be a balanced partition of the in-edges of $v$.\;

            \tcp{Alternatively set $A$ and $B$.}
            \For{$X \in [A,B,A,B,\ldots ]$}{
                Solve \MLDshort with $(\C-X, k)$.\;\label{code:lv:mld}
                \If{$\C-X$ contains a tree certificate}{
                  \tcp{Safe to remove $X$.}
                  Let $\C \gets \C - X$.\;
                  \textbf{break}\;
                }
            }

        }
    }
  }

  \KwRet{$\C$}\;
  \caption{\rlv}
  \label{alg:lv-recovery}
\end{algorithm2e}

\recoverylvlemma*

\begin{proof}
  Consider \Cref{alg:lv-recovery}.
  This algorithm is identical to \Cref{alg:mc-recovery} except
  for the inner loop starting from \Cref{code:lv:inner}.
  Now, since \Cref{code:lv:mld} is a one-sided error Monte Carlo algorithm,
  if $\C-X$ contains a tree certificate,
  then there exists a tree certificate $\ccert$ including some edge $uv \in A \cup B \setminus X$.
  The set $X$ is safe to remove,
  and with the argument in the proof of \Cref{lem:recovery-mc},
  the algorithm correctly outputs a tree certificate of $\C$.

  For the expected running time,
  note that by assumption, \Cref{alg:lv-recovery} enters \Cref{code:lv:inner} no more than $\alpha$ times.
  Again, the running time of the algorithm is asymptotically bounded by
  the running time of \Cref{code:lv:mld}.

  The expected number of executions of \Cref{code:lv:mld} is $\Oh(\alpha \log n)$ because
  we perform binary search on the $\Oh(n)$ in-edges of an addition gate $v$.
  We know that the ``correct'' edge exists in either $A$ or $B$, and
  the algorithm for \prob{Multilinear Detection} succeeds with a constant probability $p$.
  We expect to see one success for every $2/p$ runs.
  From \Cref{lem:williams-poly-weak}, the total expected running time is
  $\tilde{\Oh}(\alpha \log n \cdot \frac{2}{p} \cdot 2^k m)=\tilde{\Oh}(2^k \alpha m)$.
\end{proof}

\br
\noindent \textbf{Algorithm \simplifycircuit:}

\noindent \textit{Input:} 
A scalar-free recoverable circuit $\C$ with respect to degree $k$ and output $r$.

\noindent \textit{Output:}
A scalar-free recoverable circuit $\C'$ with respect to degree $k$ and output $r$,
where every tree certificate of~$\C'$ contains at most $(2k-1)$ addition gates.

\noindent We initialize $\C'$ to $\C$ and update $\C'$ as follows.

  \begin{enumerate}[label=(\arabic*)]
    \item Replace all in-degree-$1$ multiplication gates in $\C'$
    with addition gates.\label{alg:simplify:step1}

    \item For each addition gate $v$ in $\C'$,
    let $S_v$ be the set of variable nodes and multiplication gates
    that can reach $v$ in $\C'$, using only addition gates.\label{alg:simplify:step2}
    Replace the in-edges $\{uv\colon u \in N_\C^-(v)\}$ of $v$ with the edges
    $\{uv\colon u \in S_v\}$.

    \item Output $\C'[r]$.\label{alg:simplify:step3}
\end{enumerate}

\recoverypreprocess*

\begin{proof}
  First, it is clear to see that Step \ref{alg:simplify:step1} does not change any polynomials.
  Step \ref{alg:simplify:step2} is a technique for constructing the $\mathcal{A}$-circuit
  in \cite{koutis2016limitsapplications}.
  This removes any consecutive addition gates from $\C'$,
  and the safeness is discussed in \cite{koutis2016limitsapplications}.
  This operation does not introduce any new nodes,
  so $\C'$ is scalar-free and contains no more than $n$ nodes.
  Step \ref{alg:simplify:step3} does not affect the polynomials at output nodes.

  For the running time, the entire reachability can be tested in $\Oh(n^2)$ time.
  We process $\Oh(n)$ addition gates, and for each node,
  we remove and add $\Oh(n)$ edges;
  there are $\deg_{\C'}^-(v)$ deletions and $|S_v|$ additions,
  and those numbers are smaller than $n$.
  Hence, the overall running time is $\Oh(n^2)$.

  Now, let $\ccert'$ be a tree certificate in~$\C'$.
  Recall that for each multiplication gate $v$ in $\ccert'$,
  $P_{\ccert'[v]}(X)$ contains one multilinear monomial (\Cref{sec:proof-tree-cert}).
  And we have $P_{\ccert'[v]}(X)=\prod_{u \in N_{\ccert'}^-(v)}P_{\ccert'[u]}(X)$.
  Since $\C'$ is scalar-free and $v$ has at least $2$ in-neighbors in $\C'$,
  the number of variables in $P_{\ccert'[v]}(X)$ is strictly greater than
  the number of variables in $P_{\ccert'[u]}(X)$ for any $u \in N_{\ccert'}^-(v)$.
  Thus, $\ccert'$ contains at most $(k-1)$ multiplication gates.

  Next, we show that $\ccert'$ contains at most $(2k-1)$ addition gates, possibly including $r$.
  We prove by induction on $k$
  that $\ccert'$ contains at most $2k-2$ addition gates
  if the output node $r$ is not an addition gate.

  For the base case with $k=1$,
  $\ccert'$ contains no multiplication gates
  as $\ccert'$ may have at most $(k-1)$ multiplication gates.
  Then, the output node is a variable node, and we cannot place any addition gates,
  which satisfies $0 \leq 2k-2$.
  For the inductive step, suppose the output node is a multiplication gate
  with $\ell$ in-neighbors.
  Also, for each in-neighbor $u_i$ for ${1 \leq i \leq \ell}$,
  suppose $P_{\ccert'[u_i]}(X)$ contains $k_i$ variables.
  Notice that $\sum_{i=1}^\ell k_i = k$.
  The number of addition gates is maximized when all $u_i$ are addition gates.
  Then, from the inductive hypothesis and 
  that $\ccert$ contains no consecutive addition gates,
  the number of addition gates in $\ccert'$ is at most $\sum_{i=1}^\ell 1 + (2k_{\ell}-2)=2k-\ell$.
  This is maximized when $\ell=2$, and we obtain the desired property.
  If the output node $r$ is an addition gate, then its in-neighbor
  must be either a variable node or a multiplication gate.
  Hence, $\ccert'$ has at most $2k-2+1=2k-1$ addition gates.

  To show that $\C'$ is recoverable, let $\ccert$ be a tree certificate in $\C$.
  If $\ccert$ does not contain consecutive addition gates, then
  $\ccert$ should appear in $\C'$
  because each edge in $\ccert$ must be present in $\C'$.
  Otherwise, let $uv_1,\ldots,v_\ell$ be a path in $\ccert$
  where $u$ is not an addition gate, $v_i$ is an addition gate
  for each $1 \leq i \leq \ell$, and $v_\ell$ is either $r$ (output node)
  or an in-neighbor of a multiplication gate.
  By construction, $v_\ell$ remains in $\C'$,
  and there must be an edge $uv_\ell$ in $\C'$.
  We repeatedly replace such a path $uv_1,\ldots,v_\ell$ in $\C$ with an edge $uv_\ell$ in $\C'$
  to construct the circuit $\ccert'$ in $C'$.
  Since $\ccert$ is a tree certificate, those paths are vertex disjoint.
  So, the underlying graph of $\ccert'$ is also a tree.
  Furthermore, since every addition gate in a tree certificate has in-degree $1$,
  we have $P_{\ccert'[r]}(X) = P_{\ccert[r]}(X)$.
  $\ccert'$ is a tree certificate in~$\C'$,
  which means that $\C'$ is recoverable.
\end{proof}

\begin{figure}[t]
    \pgfdeclarelayer{bg}
    \pgfsetlayers{bg, main}

    \tikzstyle{bigblacknode} = [circle, fill=gray, text=white, draw, thick, scale=1, minimum size=0.6cm, inner sep=1.5pt]
    \tikzstyle{bigwhitenode} = [circle, fill=white, text=black, draw, thick, scale=1, minimum size=0.6cm, inner sep=1.5pt]

    \tikzstyle{blacknode} = [circle, fill=gray, draw, thick, scale=1, minimum size=0.2cm, inner sep=1.5pt]
    \tikzstyle{whitenode} = [circle, fill=white, draw, thick, scale=1, minimum size=0.2cm, inner sep=1.5pt]

    \tikzstyle{hugewhitenode} = [circle, fill=white, text=black, draw, thick, scale=1, minimum size=1.5cm, inner sep=1.5pt, font=\large]
    \tikzstyle{directed} = [color=black, arrows=- triangle 45]

    \definecolor{myblue}{RGB}{5,113,176}
    \definecolor{mypurple}{RGB}{123,50,148}
    \definecolor{myred}{RGB}{202,0,32}
    
    \tikzset{
        old inner xsep/.estore in=\oldinnerxsep,
        old inner ysep/.estore in=\oldinnerysep,
        double circle/.style 2 args={
            circle,
            old inner xsep=\pgfkeysvalueof{/pgf/inner xsep},
            old inner ysep=\pgfkeysvalueof{/pgf/inner ysep},
            /pgf/inner xsep=\oldinnerxsep+#1,
            /pgf/inner ysep=\oldinnerysep+#1,
            alias=sourcenode,
            append after command={
            let     \p1 = (sourcenode.center),
                    \p2 = (sourcenode.east),
                    \n1 = {\x2-\x1-#1-0.5*\pgflinewidth}
            in
                node [inner sep=0pt, draw, circle, minimum width=2*\n1,at=(\p1),#2] {}
            }
        },
        double circle/.default={-3pt}{black}
    }

    \centering
    \begin{minipage}[m]{\linewidth}
        \vspace{0pt}
        \centering
        \begin{tikzpicture}
            \node[bigblacknode] (x) at (0, 0) {$y$};
            \node[bigblacknode] (y) at (0, 1.6) {$x$};
            \node[bigwhitenode] (a1) at (1.6, 0) {$+$};
            \node[bigwhitenode] (a2) at (1.6, 1.6) {$+$};
            \node[bigwhitenode] (a3) at (3.2, 0.8) {$+$};
            \node[bigwhitenode] (m1) at (3.2, 1.6) {$\times$};
            \node[bigwhitenode] (m2) at (4.8, 0) {$\times$};
            \node[bigwhitenode, double circle] (out) at (6.4, 0.8) {$+$};

            \draw[directed] (y) -- (a1);
            \draw[directed] (y) -- (a2);
            \draw[directed] (x) -- (a2);
            \draw[directed] (x) -- (a3);
            \draw[line width=0.5mm,directed,draw=mypurple] (y) -- (a3);
            \draw[line width=0.5mm,directed,draw=mypurple] (x) -- (a1);
            \draw[directed] (x) -- (m1);
            \draw[line width=0.5mm,directed,draw=mypurple] (a1) -- (m2);
            \draw[directed] (a2) -- (m1);
            \draw[line width=0.5mm,directed,draw=mypurple] (a3) -- (m2);
            \draw[directed] (m1) -- (out);
            \draw[line width=0.5mm,directed,draw=mypurple] (m2) -- (out);

            \node () at (1.6, 2.1) {$x+y$};
            \node () at (3.2, 2.1) {$(x+y)y$};
            \node () at (1.6, -0.6) {$x+y$};
            \node () at (4.0, 0.8) {$x+y$};
            \node () at (4.8, -0.6) {$(x+y)^2$};
            \node () at (6.0, 1.5) {$(x+y)y+(x+y)^2$};
        \end{tikzpicture}
    \end{minipage}
    \caption{%
      The result of \simplifycircuit applied on the circuit in \Cref{fig:example-certificates}.
      A tree certificate for this circuit is shown in purple,
      but it does not suggest the tree certificate in the original circuit.
    }
    \label{fig:example-simplified}
\end{figure}

\Cref{fig:example-simplified} shows the circuit $\C'$ obtained by applying
\simplifycircuit on the circuit ($\C$) in \Cref{fig:example-certificates}.
Observe that there are no consecutive addition gates in $\C'$.
One tree certificate for $\C'$ is shown in purple, but it does not (directly) help
find the tree certificate in the original circuit (blue in \Cref{fig:example-certificates}).
Each new edge $uv$ in $\C'$ represents a path from $u$ to $v$ in $\C$.
However, there is no guarantee that such paths are vertex disjoint.

\subsection{Proofs of Circuit Construction.}

\lemconstructnaive*

\begin{proof}
  For each layer, there are $\Oh(\hi k n)$ addition gates $T_{t',v,d,c}$ and
  for each of them, there are $\Oh(kn)$ multiplication gates $r$ that link between layers.
  Hence, there are $\Oh(\hi t k^2 n^2 + k + \hi)=\Oh(\hi t k^2 n^2)$ nodes in total.
  By construction, there are $\Oh(\hi t k^2 n^2 + 2\hi t k^2 n^2 + \hi t k^2 n)=\Oh(\hi t k^2 n^2)$ edges.

  To show the correctness, suppose there is a solution walk $(s, v_1, v_2, \ldots, s)$ with weight $\ell$.
  Then, there must be a corresponding sequence $(v_1,c_1),\ldots,(v_t,c_t)$
  such that $\{v_i\}$ is an ordered (not necessarily distinct) vertex sets,
  and $\{c_i\}$ is a distinct set of colors with $c_i \in \col(v_i)$.
  Such a distinct set of colors exists because the solution walk collects at least $t$ colors.
  Let $d_i$ be the distance from $s$ to $v_i$ in this walk, i.e.
  $d_1=w(s,v_1), d_2=d_1+w(v_1,v_2), \ldots, d_i=d_{i-1}+w(v_{i-1}, v_i)$.
  Then, we construct a tree certificate as follows:
  pick $T_{i,v_i,d_i,c_i}$ for every computational layer $1 \leq i \leq t$,
  and connect $T_{t,v_t,d_t,c_t}$ to the output node~$O_\ell$.
  Observe that there is a path from $T_{1,v_1,d_1,c_1}$ to $O_\ell$ including all $T_{i,v_i,d_i,c_i}$,
  because by assumption $w(s,v_1)+w(v_1,v_2)+\ldots+ w(v_t,s)=d_t + w(v_t,s)=\ell$.
  We extend this path by adding all in-neighbors of any multiplication gates to construct $\hat{\C}$.
  This includes distinct $t$ colors $\{c_i\}$, so $\hat{\C}$ is a tree certificate for \MLDshort.

  Now, suppose there exists a tree certificate $\hat{\C}$ for \MLDshort.
  It is clear to see from construction that every monomial in $P_{\C[T_{t',v,d,c}]}(X)$ has degree $t'$
  and every monomial in $P_{\C[O_\ell]}(X)$ has degree $t$.
  Since the underlying graph of $\hat{\C}$ is a tree,
  $\hat{\C}$ includes exactly one node $T_{i,v_i,d_i,c_i}$ for each layer $1 \leq i \leq t$.
  For the same reason, the set $\{c_i\}$ is distinct.
  Also, we know that $\hat{\C}$ includes $t$ addition gates, each of which has only one in-neighbor.
  Consider a walk $(s, v_1, v_2, \ldots, v_t, s)$.
  This walk collects at least $t$ colors, and its total weight is
  $\ell$.
  Hence, this is a solution walk.
\end{proof}

\lemconstructstandard*

\begin{proof}
  For each layer, there are $\Oh(\hi kn)$ addition gates with in-degree $\Oh(kn)$
  and $\Oh(\hi kn)$ multiplication gates with in-degree $2$.
  There are $\Oh(\hi)$ output nodes with in-degree $kn$.
  Hence, in total there are $\Oh(\hi tkn)$ nodes and
  $\Oh(\hi tk^2n^2 + 2\hi t kn + \hi kn) = \Oh(\hi tk^2n^2)$ edges.

  The correctness proof is similar to that of \Cref{lem:construct-naive}.
  If there is a solution walk $(s, v_i, v_2, \ldots, s)$ with weight $\ell$,
  then there is a path including $T_{i,v_i,d_i,c_i}$ and the output node~$O_\ell$,
  as defined in the proof of \Cref{lem:construct-naive}.
  This path and the set of collected colors $\{c_i\}$ induce a tree certificate.

  If there is a tree certificate $\hat{\C}$, then $\hat{\C}$ must include
  $t$ addition gates and $T_{i,v_i,d_i,c_i}$ for each layer $1 \leq i \leq t$.
  The walk $(s, v_1, \ldots, v_t, s)$ will be a solution walk.
\end{proof}

\lemconstructcompact*

\begin{proof}
  In addition to $k$ variable nodes,
  there are $\Oh(t n)$ addition nodes $a_{t',v}$,
  $\Oh(\hi t n)$ addition nodes $T_{t',v,d}$,
  $\Oh(\hi t n)$ multiplication nodes $R_{t',v,d}$,
  and $\Oh(\hi)$ output nodes.
  The number of nodes is $\Oh(k+tn+2 \hi tn+\hi)
  =\Oh(\hi tn+k)$.
  To obtain the number of edges, we count in-degrees of those nodes.
  Each of $a_{t',v}$ has $\Oh(k)$ in-neighbors,
  each of $T_{t',v,d}$ has $2$ in-neighbors,
  and each of $R_{t',v,d}$ has $\Oh(n)$ in-neighbors.
  For the output nodes, if the search strategy is \sunified,
  there are $\Oh(\hi)$ nodes with in-degree $\Oh(n)$.
  Otherwise, there is $1$ node with in-degree $\Oh(\hi n)$.
  In either case, there will be $\Oh(\hi n)$ edges to the output.
  The total number of edges is
  $\Oh(tkn + 2\hi tn + \hi tn^2+\hi n)=\Oh(\hi tn^2+tkn)$.

  To show the correctness, suppose there is a solution walk of weight $\ell$.
  Since the instance is complete and metric, there exists a solution walk
  $W=s v_1, \ldots, v_p, s$ of weight at most $\ell$ with no repeated vertices other than $s$
  such that at every vertex $v$ in $V(W) \setminus \{s\}$ collects at least one new color.
  Then, we create a sequence $(v_1,c_1),\ldots,(v_t,c_t)$ as follows.
  First, pick exactly $t$ colors $C \subseteq \bigcup_{v \in V(W)}\col(v)$
  so that we still collect at least one new color at every vertex in $V(W) \setminus \{s\}$.
  Let $C_v \subseteq C$ be the newly collected colors at vertex $v$.
  Then, when we see a new vertex $v$ in $V(W) \setminus \{s\}$, append $\{(v, c)\colon c \in C_v\}$
  to the sequence.
  Now, we have $\{v_i\}$ as an ordered (not necessarily distinct) vertex sets,
  and $\{c_i\}$ is a distinct set of colors with $c_i \in \col(v_i)$.
  We write $d_i$ for the distance from $s$ to $v_i$ in the walk $W$,
  i.e. $d_1=w(s,v_1), d_2=d_1+w(v_1,v_2),\ldots$ with setting $w(v_i,v_i)=0$.
  By assumption, we have $d_t + w(v_t,s) \leq \ell$.

  We construct a tree certificate as follows.
  Let $S \subseteq V(\C)$ be a set of nodes such that 
  $S = \{T_{i,v_i,d_i} \colon 1 \leq i \leq t\} \cup \{R_{i,v_i,d_i}\colon 1<i\leq t\}
  \cup \{a_{i,v_i}\colon 1 \leq i \leq t\} \cup \{O_{\ell'}\}$,
  where $\ell'=d_t + w(v_t,s) \leq \ell$ for \sunified and $\ell'=\ell$ for the others.
  Let $\C' := \C[S]$ and observe that the underlying graph of $\C'$ is a tree.
  Then, we add variable nodes $\{c_i \colon 1 \leq i \leq t \}$ and
  edges $\{c_i a_{i,v_i}\colon 1 \leq i \leq t\}$ to~$\C'$.
  It is clear to see that $\C'$ contains $2t$ addition gates.
  Also, its underlying graph remains a tree because $c_i$ is distinct.
  By construction, $P_{\C'[O_{\ell'}]}(X)$ is a multilinear monomial representing
  the color set $\{c_i\}$ of size $t$.
  Observe that $\C'$ is a tree certificate for \MLDshort.

  Conversely, suppose there exists a tree certificate $\hat{\C}$ for \MLDshort
  with respect to weight $\ell$.
  Every multiplication gate in $\hat{\C}$ has the same in-neighbors as in $\C$,
  and from \Cref{prp:tree-cert-1}, every addition gate in $\hat{\C}$ has degree $1$ in $\hat{\C}$.
  This leaves us one structure:
  $t$ variable nodes $\{c_i\}$,
  their out-neighbors $\{a_{i,v_i}\}$ such that $c_i \in \col(v_i)$,
  multiplication gates $T_{i,v_i,d_i}$ in layers $1 \leq i \leq t$,
  accompanied addition gates $R_{i,v_i,d_i}$ for $i>1$,
  and the output node $O_\ell$.
  Consider a walk $(s, v_1, v_2, \ldots, v_t, s)$.
  This walk collects $t$ colors, and its total weight is at most $\ell$.
  Hence, this is a solution walk.
\end{proof}

\lemconstructsemicompact*

\begin{proof}
  We have $\Oh(tn)$ auxiliary nodes $a_v^{(i)}$ with in-degree~$\Oh(k)$.
  The first layer contains $\Oh(n)$ nodes with in-degree~$1$.
  For each layer $1<t'\leq t$, there are
  $\Oh(\hi n)$ addition gates $R_{t',v,d,1}$ with in-degree~$\Oh(tn)$,
  $\Oh(\hi tn)$ addition gates $R_{t',v,d,i}$ with $i>1$ and in-degree $1$,
  $\Oh(\hi tn)$ multiplication gates with in-degree~$2$.
  There are $\Oh(\hi)$ output nodes and $\Oh(\hi tn)$ edges to the output.
  Hence, there are $\Oh(\hi t^2n + k)$ nodes and
  $\Oh(tkn + \hi t^2n^2 + \hi t^2n + 2\hi t^2n + \hi tn) = \Oh(\hi t^2n^2 + tkn)$ edges in total.
  
  To show the correctness, suppose there is a solution walk of weight $\ell$.
  Since there exists a solution walk $W=(s, v_1, \ldots, s)$ of weight at most $\ell$
  with no repeated vertices other than $s$ such that every vertex $v$ in $V(W) \setminus \{s\}$
  collects at least one new color.
  Then, we create a sequence $(v_1,c_1,\mu_1),\ldots,(v_t,c_t,\mu_2)$ as follows.
  First, pick exactly $t$ colors $C \subseteq \bigcup_{v \in V(W)}\col(v)$
  so that we still collect at least one new color at every vertex in $V(W) \setminus \{s\}$.
  Let $C_v \subseteq C$ be the newly collected colors at vertex $v$.
  Then, when we see a new vertex $v$ in $V(W) \setminus \{s\}$,
  append $(v, c_{v,1}, 1), (v, c_{v,2},2), \ldots$
  to the sequence,
  where $C_v=\{c_{v,1}, c_{v,2},\ldots\}$.
  Now, we have $\{v_i\}$ as an ordered (not necessarily distinct) vertex sets,
  $\{c_i\}$ is a distinct set of colors with $c_i \in \col(v_i)$,
  and $\{\mu_i\}$ represents how many colors are collected at vertex $v_i$ so far.
  We write $d_i$ for the distance from $s$ to $v_i$ in the walk $W$,
  i.e., $d_1=w(s,v_1), d_2=d_1+w(v_1,v_2),\ldots, d_i = d_{i-1} + w(v_{i-1},v_i)$, with setting $w(v_i,v_i)=0$.
  By assumption, we have $d_t + w(v_t,s) \leq \ell$.

  We construct a tree certificate as follows.
  Let $S \subseteq V(\C)$ be a set of nodes such that 
  $S = \{T_{i,v_i,d_i,\mu_i} \colon 1 \leq i \leq t\} \cup \{R_{i,v_i,d_i,\mu_i}\colon 1<i\leq t\}
  \cup \{a_{v_i}^{(\mu_i)}\colon 1 \leq i \leq t\} \cup \{O_{\ell'}\}$,
  where $\ell'=d_t + w(v_t,s) \leq \ell$ for \sunified and $\ell'=\ell$ for the others.
  Let $\C' := \C[S]$ and observe that the underlying graph of $\C'$ is a tree.
  By construction, the pair $(v_i,\mu_i)$ is unique in the sequence.
  Then, we add variable nodes $\{c_i \colon 1 \leq i \leq t \}$ and
  edges $\{c_i a_{v_i}^{\mu_i}\colon 1 \leq i \leq t\}$ to $\C'$.
  It is clear to see that $\C'$ contains $2t$ addition gates.
  Also, its underlying graph remains a tree because $c_i$ is distinct.
  By construction, $P_{\C'[O_{\ell'}]}(X)$ is a multilinear monomial representing
  the color set $\{c_i\}$ of size $t$.
  $\C'$ is a tree certificate for \MLDshort.

  Conversely, suppose there exists a tree certificate $\hat{\C}$ for \MLDshort
  with respect to weight $\ell$.
  Every multiplication gate in $\hat{\C}$ has the same in-neighbors as in $\C$,
  and from \Cref{prp:tree-cert-1}, every addition gate in $\hat{\C}$ has degree $1$ in $\hat{\C}$.
  This leaves us one structure:
  $t$ variable nodes $\{c_i\}$,
  their out-neighbors $\{a_{v_i}^{(\mu_i)}\}$ such that $c_i \in \col(v_i)$,
  multiplication gates $T_{i,v_i,d_i,\mu_i}$ in the computational layers $1 \leq i \leq t$,
  accompanied addition gates $R_{i,v_i,d_i,\mu_i}$ for $i>1$,
  and the output node $O_\ell$.
  Consider a walk $(s, v_1, v_2, \ldots, v_t, s)$.
  This walk collects $t$ colors, and its total weight is at most $\ell$.
  This is a solution walk.
\end{proof}

\subsection{Proofs of Search Algorithms.}\label{sec:proof-search}

\lemSearchBS*

\begin{proof}
  We have $\theta' \in \tilde{\Oh}(\theta)$.
  It is known that binary search requires at most~$\log_2(\diff)$ evaluations,
  and each evaluation takes at most $\theta' f(\hi)$ time.
  Hence, the running time is $\Oh(\theta' f(\hi)\log(\diff))
  \subseteq \tilde{\Oh}(\theta \cdot f(\hi))$.

  The algorithm succeeds when all the $\log_2(\diff)$-many evaluations succeed.
  This probability is
  \begin{align*}
    (1-(1-p)^{\theta'})^{\log_2(\diff)} \geq 1 - (1-p)^\theta
  \end{align*}
  by our choice of $\theta'$.%
\end{proof}

\begin{algorithm2e}[ht]
  \SetAlgoLined
  \KwInput{An instance $\mathcal{I}$ of \gi{},
  bounds of the optimal weight $\lo \leq \hi$,
  a failure count threshold $\theta'$,
  and a probability $p'$.\\
  }
  \KwOutput{Optimal weight.}

  \BlankLine
  \tcp{Maintain midpoints as a stack.}
  Create an empty stack $S$.
  
  \BlankLine
  \While{$\lo < \hi$}{
    \If{$S$ is empty}{
      Push $(\lfloor \frac{\lo + \hi}{2} \rfloor, 0)$ to $S$.\;
    }
    Pop the top element $(\ell, c)$ from $S$.\;

    \BlankLine
    Create a circuit $\C$ of $\mathcal{I}$ for $\ell$.\;
    Solve \MLDshort with $(\C, k)$ and get result \textsf{out}.\;

    \BlankLine
    \If{$\textnormal{\textsf{out}} = \true$} {
      Let $\hi \gets \ell$.\;
    }
    \Else {
      \If {$c + 1 \geq \theta'$}{
        Let $\lo \gets \ell + 1$.\tcp*{reject $\ell$}
        Clear $S$.
      }
      \Else {
        Push $(\ell, c+1)$ to $S$.\;
        \If {$\ell < \hi - 1$} {
          \tcp{Randomly go \emph{higher}.}
          With probability $p'$, push $(\lfloor \frac{\ell + 1 + \hi}{2} \rfloor, 0)$ to $S$.\;
        }
      }
    }
  }
  \KwRet{$\hi$}\;
  \caption{\sprob}
  \label{alg:search-prob}
\end{algorithm2e}

\lemSearchProbBS*


\begin{proof}
  The algorithm fails when 
  for any feasible $\ell$, it observes no successes and $\theta'$ failures.
  For a fixed $\ell$, this probability is at most $(1-p)^{\theta'}$,
  and there are at most $\diff$ possible values to check.
  By the same argument for \sstandard,
  by setting $\theta' \in \tilde{\Oh}(\theta)$ such that $(1-(1-p)^{\theta'})^{\diff} \geq 1 - (1-p)^\theta$,
  we can achieve success probability $1-(1-p)^\theta$.

  To argue the running time,
  let $T(\tilde{\ell},\ell)$ be the expected number of runs of an \MLDshort solver
  for $\ell < \hi$, where the optimal weight is $\tilde{\ell}$.
  This is sufficient as \Cref{alg:search-prob} never evaluates $\hi$.

  We consider three cases.
  If $\ell$ is feasible, that is, $\ell \geq \tilde{\ell}$, then
  unless the algorithm fails, it will eventually find that $\ell$ is feasible
  because whenever the algorithm search for a higher value, $\ell$ is always in the stack.
  Hence, $T(\tilde{\ell},\ell) \leq p^{-1}$.
  Next, if $\ell = \tilde{\ell}-1$,
  then the algorithm must try $\theta'$ evaluations to conclude that $\ell$ is infeasible.
  We have $T(\tilde{\ell},\ell) = \theta'$.
  Lastly, if $\ell < \tilde{\ell}-1$,
  the algorithm moves higher with probability $p'$.
  If it goes to another infeasible value (lucky case), it will never come back to $\ell$.
  If it moves to a feasible value, it will then come back after finding that that value is feasible.
  Notice that the latter case only happens at most $\log_2 (\hi - \ell)$ times because
  when we come back from the higher part, the remaining search space will be shrunk into half.
  Hence, $T(\tilde{\ell},\ell) \leq (p')^{-1} + \log_2 (\hi - \ell)$.

  Putting these together, we sum over all possible $\tilde{\ell},\ell$,
  assuming each of $\tilde{\ell}$ appears with probability $\diff^{-1}$.
  The expected running time is
  \begin{align*}
    &\frac{f(\hi)}{\diff} \sum_{\tilde{\ell}=\lo}^{\hi}\sum_{\ell=\lo}^{\hi -1}T(\tilde{\ell},\ell)\\
    \leq \ & \frac{f(\hi)}{\diff} \left(\diff \theta' + \diff^2(p^{-1}+(p')^{-1}+\log_2(\diff))\right)\\
    \in \ & \tilde{\Oh}((\theta + \diff) \cdot f(\hi)),
  \end{align*}
  as desired.
  Note that $p$ and $p'$ are constants.
\end{proof}

\lemSearchUnified*

\begin{proof}
  Since there are at most $\theta$ circuit evaluations,
  the overall running time is bounded by $\theta \cdot f(\hi)$.
  Assuming the circuit construction is correct,
  we have $\textsf{out}(\ell)=\false$ for every $\ell < \tilde{\ell}$.
  The algorithm fails only when $\textsf{out}(\tilde{\ell})$ is evaluated to \false
  $\theta$ times consecutively.
  This happens with probability at most $(1-p)^\theta$, which completes the proof.
\end{proof}

\subsection{Proofs on Two Phase Recovery}\label{appendix:two-phase}


\lemconsistentwalk*

\begin{proof}
  Recall that we assume $\chi(s)=\emptyset$.
  Let $\tilde{W} = (s,a_1,\ldots,a_p,s)$ be an optimal walk with weight~$\ell$.
  Let $\colorset' \subseteq \colorset$ be any color set of size~$t$ that is collected in $\tilde{W}$.
  We define, for $1 \leq i \leq p$, $f_i$ as the subset of $\colorset'$ that are newly collected at $a_i$,
  that is, $f_i := (\chi(a_i) \setminus \bigcup_{j=1}^{i-1} \chi(a_j)) \cap \colorset'$.
  By definition, $\bigcup_{i=1}^p f_i = \colorset'$.
  We construct the walk $W_1$ by repeatedly applying the following operation:
  if $f_i=\emptyset$ for some $i>0$, replace $(a_{i-1},a_i,a_{i+1})$ with $(a_{i-1},a_{i+1})$.
  Since $G$ is complete and metric,
  there must be the edge $a_{i-1} a_{i+1}$.
  This operation never increases the weight and collects the same color set.
  Hence, $W_1$ is also an optimal walk with weight $\ell$.
  The operation always makes a shorter walk, so it terminates.
  Also, $W_1$ does not contain any repeated vertices except the terminal $s$.

  Next, we construct the walk $W_2$ by applying the following operation to $W_1$:
  for each $i$, duplicate vertex~$a_i$ $(|f_i|-1)$ times.
  Here, \emph{duplicate} means replacing one occurrence of $(a_i)$ in a walk with $(a_i,a_i)$.
  Again, this operation does not add extra weights, so $W_2$ is also an optimal walk.

  Notice that the internal vertices in $W_2$ begin with $|f_1|$ copies of $a_1$,
  and in general,
  $|f_i|$ copies of~$a_i$, followed by $|f_{i+1}|$ copies of~$a_{i+1}$.
  Since $\sum_{i=1}^p |f_i|=|\colorset'|=t$, $W_2$ contains $t$ internal vertices.
  Let $W_2=(s,v_1,\ldots,v_t,s)$.
  Then, $v_i=a_j$ if and only if $\sum_{k=1}^{j-1}|f_k| < i \leq \sum_{k=1}^{j}|f_k|$.

  Now, we define the ordered color set $C$ by concatenating $f_1,\ldots,f_p$ in order
  (the ordering of each $f_i$ may be arbitrary).
  By definition, $C$ contains $t$ distinct colors, and let $C=\{c_1,\ldots,c_t\}$.

  Observe the correspondence between vertices~$v_i$ in~$W_2$ and colors~$c_i$ in~$C$.
  Color $c_i$ is in $f_j$ if ${\sum_{k=1}^{j-1}|f_k| < i \leq \sum_{k=1}^{j}|f_k|}$.
  Hence, if $v_i = a_j$, then ${c_i \in f_j \subseteq \chi(a_j) = \chi(v_i)}$.
  This proves that $W_2$ is a walk of weight $\ell$ consistent with $C$.
\end{proof}

\subsubsection{Finding Optimal Color Order.}

\begin{algorithm2e}[t]
  \SetAlgoLined
  \KwInput{%
    A circuit $\C$ and a set $S$ of edges to invalidate.\\
  }
  \KwOutput{An updated circuit.}

  \BlankLine
  
  \tcp{Forward traversal from input nodes.}
  \For{$v \in V(\C)$ in topological ordering of $\C$}{
    Let $E^-(v)$ be the set of the in-edges of $v$.\;
    Let $E^+(v)$ be the set of the out-edges of $v$.\;

    \If{$v$ is an addition gate}{
      \If{$E^-(v) \subseteq S$} {\label{code:invalidate:add}
        \tcp{All inputs are invalid, so $v$ is invalid.}
        Let $S \gets S \cup E^+(v) \cup \{v\}$.\;
      }
    }\uElseIf{$v$ is a multiplication gate}{
      \If{$E^-(v) \cap S \neq \emptyset$} {\label{code:invalidate:mul}
        \tcp{One invalid in-edge makes $v$ invalid.}
        Let $S \gets S \cup E^-(v) \cup E^+(v) \cup \{v\}$.\;
      }
    }
  }

  \KwRet{$\C - S$}\;
  \caption{\invalidateedges}
  \label{alg:invalidate-edges}
\end{algorithm2e}

Before proving \Cref{lem:tp-color-recovery}, we introduce a new operation,
\emph{invalidating edges}, in a circuit, detailed in \Cref{alg:invalidate-edges}.
We need this operation, instead of removing edges, to make sure that
every multiplication gate with an invalid in-edge is also invalid,
and every addition gate with no in-edges is invalid.
We first show that the algorithm returns a valid sub-circuit.

\begin{lemma}\label{lem:invalidate-valid}
  \invalidateedges returns a sub-circuit~$\C'$ of~$\C$,
  where for each internal node $v$ in $\C$,
  $v$ either does not exist or has at least one in-neighbor in $\C'$.
\end{lemma}

\begin{proof}
  Suppose there exists a node $v$ in $\C'$
  such that $N_{\C}^-(v)\neq \emptyset$ and $N_{\C'}^-(v) = \emptyset$.
  Then, all the in-neighbors of $v$ must be invalidated,
  and the algorithm invalidates node $v$ as well, a contradiction.
\end{proof}

Invalidating edges maintains the following properties.

\begin{lemma}\label{lem:invalidate-cert}
  For a circuit $\C$ and edges $S \subseteq E(\C)$,
  let $\C'$ be the circuit after invalidating $S$.
  If there is a tree circuit $\hat{\C}$ in $\C$ such that $S \cap E(\hat{\C})=\emptyset$,
  then $\hat{\C}$ is also a sub-circuit of $\C'$.
\end{lemma}

\begin{proof}
  From \Cref{lem:invalidate-valid}, $\C'$ is a sub-circuit of $\C$.
  Assume towards a contradiction that there exists an edge~$uv \in E(\ccert)$
  such that $uv \not\in \C'$.
  By definition, any edge in $\ccert$ must not be in $S$.
  We choose $uv$ such that $u$ is a source in $\ccert - E(\C')$.
  That is, there is no $w \in V(\ccert)$ such that $wu \in E(\ccert) \setminus E(\C')$.
  Since $\ccert$ is a DAG, such an edge must exist.
  Now, consider the following cases.

  First, suppose $u$ is an addition gate with the only in-neighbor $w$ in $\ccert$,
  where we know $wu \not\in S$.
  Then, all of $u$'s in-edges in $\C$ must be invalidated (\Cref{code:invalidate:add}).
  We have $wu \not\in \C'$,
  which violates our choice of $uv$.

  Next, suppose $u$ is a multiplication gate.
  Then, by the definition of a tree certificate,
  all of $u$'s in-neighbors in $\C$ must be included in $\ccert$.
  Hence, $wu \not\in S$ for every $w \in N_{\C}^-(u)$.
  However, there must be an in-edge~$wu$ of $u$ that was invalidated before $uv$,
  and thus $wu \not\in \C'$.
  This again contradicts our choice of $uv$.

  Therefore, $u$ must be an input node.
  And since $uv \not\in S$, $v$ must be a multiplication gate,
  and for every in-neighbor $w$ of $v$ in $\C$, $wv$ must be invalidated
  and also present in $\hat{\C}$.
  Since $S \cap E(\ccert) = \emptyset$,
  for each $w \in N_{\C}^-(v)$,
  we have $wv \not\in S$, and
  from the previous arguments, $w$ is an input node.
  Then, \Cref{alg:invalidate-edges} cannot invalidate any $wv$,
  a contradiction.
\end{proof}

The following characterizes the degrees of monomials
after we invalidate edges.

\begin{lemma}\label{lem:invalidate-degree}
  For a circuit $\C$ and edges $S \subseteq E(\C)$,
  let $\C'$ be the circuit after invalidating $S$.
  For each node~$v$ in~$\C'$,
  let $D_v$ and $D'_v$ be the set of the degrees of all monomials in
  $P_{\C[v]}(X)$ and $P_{\C'[v]}(X)$, respectively.
  Then, $D'_v \subseteq D_v$.
\end{lemma}

\begin{proof}
  Assume towards a contradiction that
  there exists a node~$v$ in~$\C'$ such that
  $D'_v \not\subseteq D_v$.
  We choose $v$ as such a node that is the earliest in some topological ordering.
  Trivially, $v$ cannot be an input node because $P_{\C[v]}(X) = P_{\C'[v]}(X)$.
  Since $v$ is the earliest node such that $D'_v \not\subseteq D_v$ in topological ordering,
  for each in-neighbor $u$ of $v$ in $\C'$, we have $D'_u \subseteq D_u$.

  Suppose $v$ is an addition gate.
  From \Cref{lem:invalidate-valid}, $v$ has at least one in-neighbor.
  Notice that $D_v = \bigcup_{u \in N_{\C}^-(v)} D_u$.
  Then, we have $D'_v = \bigcup_{u \in N_{\C'}^-(v)} D'_u \subseteq \bigcup_{u \in N_{\C'}^-(v)} D_u \subseteq D_v$,
  a contradiction.

  Now, suppose $v$ is a multiplication gate.
  If there is an in-neighbor $u$ of $v$ in $\C$ such that edge $uv$ is invalidated,
  then node $v$ must be invalidated by \Cref{code:invalidate:mul},
  contradicting that $v$ is in $\C'$.
  Hence, $N_{\C}^-(v)=N_{\C'}^-(v)$.
  Let $d$ be a degree in $D'_v \setminus D_v$.
  Notice that $d$ is the sum over $\{ d_u \}$ for each $u \in N_{\C'}^-(v)=N_{\C}^-(v)$
  such that $d_u \in D'_u$.
  However, since $D'_u \subseteq D_u$, $d$ must be included in~$D_v$,
  a contradiction.
\end{proof}

Now we are ready to prove \Cref{lem:tp-color-recovery}.

\lemtpcolorrecovery*

\begin{algorithm2e}[t]
  \SetAlgoLined
  \KwInput{%
    An instance $(G=(V,E),\colorset,w,\chi,s,t)$ of \gi
    and its recoverable circuit $\C$ with respect to output $r$ and objective $\ell$.\\
  }
  \KwOutput{A color ordering $C$.}

  \BlankLine
  Initialize $C$ to an empty sequence.\;
  Let $R \gets \colorset$. \tcp*{Stores remaining colors.}
  \For{$i \gets t$ \KwSty{down to} $1$}{
    Identify or create the color control nodes $Y$ for layer $i$.\;\label{code:co:y}
    Let $Z \gets R$. \tcp*{Candidates for layer $i$.}
    Let $\C' \gets \C$.\;

    \tcp{Binary-search the color for layer $i$.}
    \While{$|Z| > 1$}{
      Let $A, B$ be a balanced partition of $Z$.\;
      \For{$X \in [A,B,A,B,\ldots]$}{
        Let $E'$ be the set of the edges from the variable nodes for $X$ to $Y$.\;
        Let $\C' \gets \invalidateedges(\C,E')$.\;

        \If{$N_{\C'}^-(r)=\emptyset$}{
          \tcp{Output node is invalid.}
          \KwSty{continue}\;
        }
        Solve \MLDshort with $(\C', k)$.\;\label{code:tp:mld}
        \If{$\C'$ contains a tree certificate}{
          \tcp{Safe to remove $X$.}
          Let $Z \gets Z - X$.\;
          \KwSty{break}\;
        }
      }
    }
    \tcp{Now, $Z$ contains only one color.}
    Let $Z = \{c\}$.\;
    Insert $c$ into the front of $C$.\;
    Let $R \gets R \setminus \{c\}$.\;

    Let $x_c$ be the variable node for $c$.\;
    Let $E'' \gets \{x_c w\colon w \in N^+(x_c) \setminus Y \}$.\;\label{code:co:e}
    Let $\C \gets \invalidateedges(\C',E'')$.\;
  }

  \KwRet{$C$}\;
  \caption{\rco}
  \label{alg:color-order-recovery}
\end{algorithm2e}

\begin{proof}
  Consider \Cref{alg:color-order-recovery}, describing a high-level procedure
  of our color order recovery.
  It uses \Cref{alg:invalidate-edges} as a subroutine.
  We need to \emph{invalidate} edges instead of removing them
  in order to maintain the property that every monomial of any output of layer $i$ has degree $i$
  (\Cref{lem:invalidate-degree}).

  Specifics of \Cref{code:co:y}, which determines the color control nodes $Y$,
  depend on the type of circuit construction.
  For \cnaive, $Y$ is the set of receivers at layer $i$,
  except for $i=1$, where the transmitters at layer $1$ becomes~$Y$.
  For \cstandard, $Y$ is the set of transmitters at layer $i$.
  For \ccompact, as shown in \Cref{fig:tp-circuit},
  $Y$ is the auxiliary nodes for layer $i$.
  Therefore, for \cnaive, \cstandard, and \ccompact,
  we do not modify the circuit $\C$.
  Let $\tilde{\C}=\C$.

  For \csemi, we construct a new circuit~$\tilde{\C}$ as follows.
  First, create $Y = \{b_v^{(j)}\}$ as a copy of $a_v^{(j)}$
  for each vertex $v$ and multiplicity $j$.
  The in-neighbors of $b_v^{(j)}$ are the same as
  the in-neighbors of $a_v^{(j)}$.
  For each $v$ and $j$,
  let $T'(v,j)$ be the set of the transmitters $T_{i,v,d,j}$ at layer $i$ for any weight $d$.
  We remove the edges from $a_v^{(j)}$ to $T'(v,j)$ and
  add new edges from $b_v^{(j)}$ to $T'(v,j)$.
  Observe that if $\C$ has a tree certificate
  including $a_v^{(j)}$ and $T_{i,v,d,j}$, then
  $\tilde{\C}$ must have a tree certificate
  including $b_v^{(j)}$ and $T_{i,v,d,j}$,
  and vise versa.
  Hence, $\C$ contains a tree certificate
  if and only if $\tilde{\C}$ also contains a tree certificate
  for any construction type.
  Also, since every monomial at layer $i$ has degree $i$,
  a tree certificate in $\tilde{\C}$ must contain
  one multiplication gate at layer $i$,
  one node from $Y$, and one variable connected to $Y$.

  Once the algorithm finds a color $c$ for layer $i$,
  \Cref{code:co:e} invalidates the edges from the variable node $x_c$
  for color~$c$ to the layers up to $i-1$ for reducing the circuit size.
  This does not affect the correctness
  because any tree certificate in $\hat{\C}$ with $x_c$
  must include an edge from $x_c$ to a node in $Y$,
  and then the other edges from $x_c$ are never used.
  Invalidating unused edges does not affect the existence of a tree certificate (\Cref{lem:invalidate-cert}).

  The analysis of running time is very similar to that for \rlv.
  Inside the outer loop, any modification of the circuit can be done in $O(m)$.
  The overall running time is dominated by the running time of \Cref{code:tp:mld},
  and the expected number of executions of \Cref{code:tp:mld} is $\Oh(t \log k)$
  as we binary-search a color for each layer from $|\colorset|=k$ colors.
  Notice that this value is smaller than that of \rlv ($\Oh(t \log n)$) if $k < n$.
  Each execution of \Cref{code:tp:mld} takes $\tilde{\Oh}(2^t m)$ time,
  so the total expected running time is $\tilde{\Oh}(t \log k \cdot 2^t m) = \tilde{\Oh}(2^t tm)$.
\end{proof}

\begin{figure*}[h]
    \pgfdeclarelayer{bg}
    \pgfsetlayers{bg, main}

    \tikzstyle{bigblacknode} = [circle, fill=gray, text=white, draw, thick, scale=1, minimum size=0.6cm, inner sep=1.5pt]
    \tikzstyle{bigwhitenode} = [circle, fill=white, text=black, draw, thick, scale=1, minimum size=0.6cm, inner sep=1.5pt]

    \tikzstyle{blacknode} = [circle, fill=gray, text=white, draw, thick, scale=1, minimum size=0.2cm, inner sep=1.5pt]
    \tikzstyle{whitenode} = [circle, fill=white, draw, thick, scale=1, minimum size=0.2cm, inner sep=0pt]

    \tikzstyle{hugewhitenode} = [circle, fill=white, text=black, draw, thick, scale=1, minimum size=1.5cm, inner sep=1.5pt, font=\large]
    \tikzstyle{directed} = [color=gray, arrows=- triangle 45]
    \tikzstyle{thickedge} = [color=black, line width=0.8mm, arrows=-{Latex[length=2mm,width=3mm]}]

    \definecolor{mygreen}{RGB}{0,136,55}
    \definecolor{myblue}{RGB}{5,113,176}
    \definecolor{mypurple}{RGB}{123,50,148}
    \definecolor{myred}{RGB}{202,0,32}
    \definecolor{myredlight}{RGB}{255,204,204}

    \tikzmath{\offset = 0.2;}
    \tikzmath{\layershrink = 0.3;}

    \tikzset{
        old inner xsep/.estore in=\oldinnerxsep,
        old inner ysep/.estore in=\oldinnerysep,
        double circle/.style 2 args={
            circle,
            old inner xsep=\pgfkeysvalueof{/pgf/inner xsep},
            old inner ysep=\pgfkeysvalueof{/pgf/inner ysep},
            /pgf/inner xsep=\oldinnerxsep+#1,
            /pgf/inner ysep=\oldinnerysep+#1,
            alias=sourcenode,
            append after command={
            let     \p1 = (sourcenode.center),
                    \p2 = (sourcenode.east),
                    \n1 = {\x2-\x1-#1-0.5*\pgflinewidth}
            in
                node [inner sep=0pt, draw, circle, minimum width=2*\n1,at=(\p1),#2] {}
            }
        },
        double circle/.default={-3pt}{black}
    }

    \begin{minipage}[m]{.98\linewidth}
        \centering
        \begin{tikzpicture}
          \begin{pgfonlayer}{main}
            \node[bigblacknode] (s) at (-8.5, 8-2) {$s$};
            \node[bigwhitenode] (u) at (-7, 8-2) {$u$};
            \node[bigwhitenode] (v) at (-7, 6.5-2) {$v$};
            \node[bigwhitenode] (w) at (-8.5, 6.5-2) {$w$};
            \draw (s) -- (u) node[midway, above] {$2$};
            \draw (s) -- (v) node[midway, xshift=-10, yshift=15] {$4$};
            \draw (s) -- (w) node[midway, left, xshift=2] {$3$};
            \draw (u) -- (v) node[midway, right, xshift=-2] {$2$};
            \draw (u) -- (w) node[midway, xshift=-10, yshift=-4]{$2$};
            \draw (v) -- (w) node[midway, below]{$1$};
            \draw (u) node[above, yshift=6] {$\{c_1,c_2\}$};
            \draw (w) node[below, yshift=-5] {$\{c_3\}$};
            \draw (v) node[below, yshift=-5] {$\{c_2,c_3\}$};
            \node[] () at (-7.6, 3.3) {Graph instance $G$};

            \node[] () at (-5.8 + 3 * \layershrink, 3) {$2$};
            \node[] () at (-5.8 + 3 * \layershrink, 4) {$3$};
            \node[] () at (-5.8 + 3 * \layershrink, 5) {$4$};
            \node[] () at (-5.8 + 3 * \layershrink, 6) {$5$};
            \node[] () at (-5.8 + 3 * \layershrink, 7) {$6$};
            \node[] () at (-5.8 + 3 * \layershrink, 8) {$7$};

            \draw[dashed, gray] (-5.8 + 3 * \layershrink, 2.5) -- (7.4, 2.5);
            \draw[dashed, gray] (-5.8 + 3 * \layershrink, 3.5) -- (7.4, 3.5);
            \draw[dashed, gray] (-5.8 + 3 * \layershrink, 4.5) -- (7.4, 4.5);
            \draw[dashed, gray] (-5.8 + 3 * \layershrink, 5.5) -- (7.4, 5.5);
            \draw[dashed, gray] (-5.8 + 3 * \layershrink, 6.5) -- (7.4, 6.5);
            \draw[dashed, gray] (-5.8 + 3 * \layershrink, 7.5) -- (7.4, 7.5);
            \draw[dashed, gray] (-5.8 + 3 * \layershrink, 8.5) -- (7.4, 8.5);

            \draw[dashed, gray] (-4.5 + 2 * \layershrink, 1) -- (-4.5 + 2 * \layershrink, 9);
            \draw[dashed, gray] (-3.5 + \layershrink, 1) -- (-3.5 + \layershrink, 9);
            \draw[dashed, gray] (-0.5, 1) -- (-0.5, 9);
            \draw[dashed, gray] (+0.5, 1) -- (+0.5, 9);
            \draw[dashed, gray] (+3.5, 1) -- (+3.5, 9);
            \draw[dashed, gray] (+4.5, 1) -- (+4.5, 9);

            \node[blacknode] (c1) at (-1, 0) {$x_{c_1}$};
            \node[blacknode] (c2) at (0, 0) {$x_{c_2}$};
            \node[blacknode] (c3) at (1, 0) {$x_{c_3}$};

            \node[whitenode, draw=mygreen] (a11) at (-5 + 2.5 * \layershrink, 1.5) {$+$};
            \node[whitenode, draw=mygreen] (a12) at (-4 + 1.5 * \layershrink, 1.5) {$+$};
            \node[whitenode, draw=mygreen] (a13) at (-3 + 0.5 * \layershrink, 1.5) {$+$};
            \node[whitenode, draw=mygreen] (a21) at (-1, 1.5) {$+$};
            \node[whitenode, draw=mygreen] (a22) at (+0, 1.5) {$+$};
            \node[whitenode, draw=mygreen] (a23) at (+1, 1.5) {$+$};
            \node[whitenode, draw=mygreen] (a31) at (3, 1.5) {$+$};
            \node[whitenode, draw=mygreen] (a32) at (4, 1.5) {$+$};
            \node[whitenode, draw=mygreen] (a33) at (5, 1.5) {$+$};

            \node[whitenode, draw=myblue] (y112) at (-5 + 2.5 * \layershrink, 3) {$\times$};
            \node[whitenode, draw=myblue] (y124) at (-4 + 1.5 * \layershrink, 5) {$\times$};
            \node[whitenode, draw=myblue] (y133) at (-3 + 0.5 * \layershrink, 4) {$\times$};

            \node[whitenode, draw=mypurple] (z212) at (-1 - \offset, 3 - \offset) {$+$};
            \node[whitenode, draw=myblue] (y212) at (-1 + \offset, 3 + \offset) {$\times$};
            \node[whitenode, draw=mypurple] (z215) at (-1 - \offset, 6 - \offset) {$+$};
            \node[whitenode, draw=myblue] (y215) at (-1 + \offset, 6 + \offset) {$\times$};
            \node[whitenode, draw=mypurple] (z216) at (-1 - \offset, 7 - \offset) {$+$};
            \node[whitenode, draw=myblue] (y216) at (-1 + \offset, 7 + \offset) {$\times$};

            \node[whitenode, draw=mypurple] (z224) at ( 0 - \offset, 5 - \offset) {$+$};
            \node[whitenode, draw=myblue] (y224) at ( 0 + \offset, 5 + \offset) {$\times$};

            \node[whitenode, draw=mypurple] (z233) at ( 1 - \offset, 4 - \offset) {$+$};
            \node[whitenode, draw=myblue] (y233) at (1  + \offset, 4 + \offset) {$\times$};
            \node[whitenode, draw=mypurple] (z234) at ( 1 - \offset, 5 - \offset) {$+$};
            \node[whitenode, draw=myblue] (y234) at (1  + \offset, 5 + \offset) {$\times$};
            \node[whitenode, draw=mypurple] (z235) at ( 1 - \offset, 6 - \offset) {$+$};
            \node[whitenode, draw=myblue] (y235) at (1  + \offset, 6 + \offset) {$\times$};

            \node[whitenode, draw=mypurple] (z312) at (3 - \offset, 3 - \offset) {$+$};
            \node[whitenode, draw=myblue] (y312) at (3 + \offset, 3 + \offset) {$\times$};
            \node[whitenode, draw=mypurple] (z315) at (3 - \offset, 6 - \offset) {$+$};
            \node[whitenode, draw=myblue] (y315) at (3 + \offset, 6 + \offset) {$\times$};
            \node[whitenode, draw=mypurple] (z316) at (3 - \offset, 7 - \offset) {$+$};
            \node[whitenode, draw=myblue] (y316) at (3 + \offset, 7 + \offset) {$\times$};
            \node[whitenode, draw=mypurple] (z317) at (3 - \offset, 8 - \offset) {$+$};
            \node[whitenode, draw=myblue] (y317) at (3 + \offset, 8 + \offset) {$\times$};

            \node[whitenode, draw=mypurple] (z324) at (4 - \offset, 5 - \offset) {$+$};
            \node[whitenode, draw=myblue] (y324) at (4 + \offset, 5 + \offset) {$\times$};
            \node[whitenode, draw=mypurple] (z325) at (4 - \offset, 6 - \offset) {$+$};
            \node[whitenode, draw=myblue] (y325) at (4 + \offset, 6 + \offset) {$\times$};
            \node[whitenode, draw=mypurple] (z326) at (4 - \offset, 7 - \offset) {$+$};
            \node[whitenode, draw=myblue] (y326) at (4 + \offset, 7 + \offset) {$\times$};
            \node[whitenode, draw=mypurple] (z327) at (4 - \offset, 8 - \offset) {$+$};
            \node[whitenode, draw=myblue] (y327) at (4 + \offset, 8 + \offset) {$\times$};

            \node[whitenode, draw=mypurple] (z333) at (5 - \offset, 4 - \offset) {$+$};
            \node[whitenode, draw=myblue] (y333) at (5 + \offset, 4 + \offset) {$\times$};
            \node[whitenode, draw=mypurple] (z334) at (5 - \offset, 5 - \offset) {$+$};
            \node[whitenode, draw=myblue] (y334) at (5 + \offset, 5 + \offset) {$\times$};
            \node[whitenode, draw=mypurple] (z335) at (5 - \offset, 6 - \offset) {$+$};
            \node[whitenode, draw=myblue] (y335) at (5 + \offset, 6 + \offset) {$\times$};
            \node[whitenode, draw=mypurple] (z337) at (5 - \offset, 8 - \offset) {$+$};
            \node[whitenode, draw=myblue] (y337) at (5 + \offset, 8 + \offset) {$\times$};

            \node[bigwhitenode,double circle] (r7) at (6.5, 8) {$+$};

            \draw[directed,thickedge] (c1) -- (a11);
            \draw[directed,thickedge] (c2) -- (a21);
            \draw[directed,thickedge] (c3) -- (a33);
            \draw[directed,thickedge] (a11) -- (y112);
            \draw[directed,thickedge] (a21) -- (y212);
            \draw[directed,thickedge] (y112) -- (z212);
            \draw[directed,thickedge] (y212) -- (z334);
            \draw[directed,thickedge] (z212) -- (y212);
            \draw[directed,thickedge] (z334) -- (y334);
            \draw[directed,thickedge] (a33) -- (y334);
            \draw[directed,thickedge] (y334) -- (r7);
          \end{pgfonlayer}

          \begin{pgfonlayer}{bg}
            \draw[fill=myredlight] (+2.5, 2.5) rectangle ++ (1.0, 6.0);
            
            \draw[directed] (c1) -- (a21);
            \draw[directed, thickedge, dashed, draw=myred] (c1) -- (a31);
            
            \draw[directed] (c2) -- (a11);
            \draw[directed, thickedge, dashed, draw=myred] (c2) -- (a31);

            \draw[directed] (c2) -- (a12);
            \draw[directed] (c2) -- (a22);
            \draw[directed, thickedge, dashed, draw=myred] (c2) -- (a32);

            \draw[directed] (c3) -- (a12);
            \draw[directed] (c3) -- (a22);
            \draw[directed] (c3) -- (a32);

            \draw[directed] (c3) -- (a13);
            \draw[directed] (c3) -- (a23);

            \draw[directed] (a12) -- (y124);
            \draw[directed] (a13) -- (y133);

            \draw[directed] (a21) -- (y215);
            \draw[directed] (a21) -- (y216);
            \draw[directed] (a22) -- (y224);
            \draw[directed] (a23) -- (y233);
            \draw[directed] (a23) -- (y234);
            \draw[directed] (a23) -- (y235);

            \draw[directed, thickedge, dashed, draw=myred] (a31) -- (y312);
            \draw[directed, thickedge, dashed, draw=myred] (a31) -- (y315);
            \draw[directed, thickedge, dashed, draw=myred] (a31) -- (y316);
            \draw[directed, thickedge, dashed, draw=myred] (a31) -- (y317);
            \draw[directed] (a32) -- (y324);
            \draw[directed] (a32) -- (y325);
            \draw[directed] (a32) -- (y326);
            \draw[directed] (a32) -- (y327);
            \draw[directed] (a33) -- (y333);
            \draw[directed] (a33) -- (y335);
            \draw[directed] (a33) -- (y337);

            \draw[directed] (y112) -- (z224);
            \draw[directed] (y112) -- (z234);

            \draw[directed] (y124) -- (z216);
            \draw[directed] (y124) -- (z224);
            \draw[directed] (y124) -- (z235);

            \draw[directed] (y133) -- (z215);
            \draw[directed] (y133) -- (z224);
            \draw[directed] (y133) -- (z233);

            \draw[directed] (z215) -- (y215);
            \draw[directed] (z216) -- (y216);
            \draw[directed] (z224) -- (y224);
            \draw[directed] (z233) -- (y233);
            \draw[directed] (z234) -- (y234);
            \draw[directed] (z235) -- (y235);

            \draw[directed] (y212) -- (z312);
            \draw[directed] (y212) -- (z324);

            \draw[directed] (y215) -- (z315);
            \draw[directed] (y215) -- (z327);
            \draw[directed] (y215) -- (z337);
            \draw[directed] (y216) -- (z316);

            \draw[directed] (y224) -- (z316);
            \draw[directed] (y224) -- (z324);
            \draw[directed] (y224) -- (z335);

            \draw[directed] (y233) -- (z315);
            \draw[directed] (y233) -- (z324);
            \draw[directed] (y233) -- (z333);
            \draw[directed] (y234) -- (z316);
            \draw[directed] (y234) -- (z325);
            \draw[directed] (y234) -- (z334);
            \draw[directed] (y235) -- (z317);
            \draw[directed] (y235) -- (z326);
            \draw[directed] (y235) -- (z335);

            \draw[directed] (z312) -- (y312);
            \draw[directed] (z315) -- (y315);
            \draw[directed] (z316) -- (y316);
            \draw[directed] (z317) -- (y317);
            \draw[directed] (z324) -- (y324);
            \draw[directed] (z325) -- (y325);
            \draw[directed] (z326) -- (y326);
            \draw[directed] (z327) -- (y327);
            \draw[directed] (z333) -- (y333);
            \draw[directed] (z335) -- (y335);
            \draw[directed] (z337) -- (y337);

            \draw[directed, thickedge, dashed, draw=myred] (y315) -- (r7);

          \draw[rounded corners, gray] (-5.6 + 3 * \layershrink, 2.4) rectangle ++ (3.2 - 3 * \layershrink, 6.2);
          \draw[rounded corners, gray] (-1.6, 2.4) rectangle ++ (3.2, 6.2);
          \draw[rounded corners, gray] (2.4, 2.4) rectangle ++ (3.2, 6.2);
          \node[] at (-4 + 1.5 * \layershrink, 9.3) {Layer $1$};
          \node[] at ( 0, 9.3) {Layer $2$};
          \node[] at ( 4, 9.3) {Layer $3$};
          \node[] at ( 6.5, 9.2) {Output};
          \node[] at ( 6.5, 8.8) {nodes};
          \node[] at (-5 + 2.5 * \layershrink, 8.8) {$u$};
          \node[] at (-4 + 1.5 * \layershrink, 8.8) {$v$};
          \node[] at (-3 + 0.5 * \layershrink, 8.8) {$w$};
          \node[] at (-1, 8.8) {$u$};
          \node[] at ( 0, 8.8) {$v$};
          \node[] at ( 1, 8.8) {$w$};
          \node[] at ( 3, 8.8) {$u$};
          \node[] at ( 4, 8.8) {$v$};
          \node[] at ( 5, 8.8) {$w$};

          \node[rotate=90] at (-6.1 + 3 * \layershrink - 0.1, 5.5) {walk weight from $s$};

          \draw[rounded corners, mygreen] (-5.6 + 3 * \layershrink, 1) rectangle ++ (3.2 - 3 * \layershrink, 1);
          \draw[rounded corners, mygreen] (-1.6, 1) rectangle ++ (3.2, 1);
          \draw[rounded corners, mygreen] (2.4, 1) rectangle ++ (3.2, 1);
          \draw[black] (2.2, 0.8) rectangle ++ (3.6, 1.4);
          \node[] at ( 3.4, 1.8) {\color{mygreen} $a_{3,u}$};
          \node[] at ( 4.4, 1.8) {\color{mygreen} $a_{3,v}$};
          \node[] at ( 5.4, 1.8) {\color{mygreen} $a_{3,w}$};
          \node[] at ( -1, 7.5 + \offset) {\color{myblue}$T_{2,u,6}$};
          \node[] at (-1.5, 7.0 + \offset) {\color{mypurple}$R_{2,u,6}$};
          \node[] at (5.0, 0.6) {Color control nodes $Y$};

          \node[] () at (-5.0, 0.7) {Auxiliary nodes};
          \end{pgfonlayer}

        \end{tikzpicture}
    \end{minipage}
    \caption{%
      An example of the color-guessing step in \rco, with a circuit constructed by \ccompact.
      The input instance is the same as \Cref{fig:construction}.
      Here, we guess the color used in layer $3$ to be $c_3$.
      For \ccompact, the color control nodes $Y$ are the auxiliary nodes at layer $3$.
      We invalidate all edges from the variable nodes for the other colors $Y$,
      depicted as dashed lines in red.
      Notice that since $a_{3,u}$ has no input from $x_{c_3}$,
      all multiplication gates for vertex $u$ in layer $3$ are also invalidated
      (shaded in pink).
      }

  \label{fig:tp-circuit}
\end{figure*}

\subsubsection{Reconstructing Walk from Color Order.}

\lemtpwalkrecovery*

\begin{proof}
  We construct an auxiliary \emph{directed} graph $G'=(V',E')$ with edge weights $w':E' \to \R_{\geq 0}$ as follows.
  $V'$ consists of $t$ layers,
  where layer $i$ consists of vertices $U_i = \{u_{i,v} \colon v \in \chi^{-1}(c_i)\}$,
  which are copies of the vertices in $V$ having color $c_i$.
  We add two more vertices, $s$ for the source and its copy $s'$ for the sink.
  We connect these to internal vertices by $N^+(s)=U_1$ and $N^-(s')=U_t$
  and set edge weights to $w'(s,u_{1,v})=w(s,v)$ and $w'(u_{t,v},s')=w(v,s)$.
  For $1 \leq i \leq t$, we add edges from all of $U_i$ to all of $U_{i+1}$.
  Edge weights are set to $w'(u_{i,v},u_{i+1,v'}) = w(v,v')$,
  where we define $w(v,v)=0$.
  We will show that a shortest path from $s$ to $s'$ in $G'$ corresponds to
  a minimum-weight walk $W=(s,v_1, \ldots, v_t, s)$ in $G$
  such that $c_i \in \chi(v_i)$ for each $1 \leq i \leq t$.
  From \Cref{lem:consistent-walk}, there must be an optimal walk with $t$
  internal vertices, with possibly repeated vertices.

  First, observe that there is a one-to-one mapping
  from a path $P=(s, u_{1,v_1}, \ldots, u_{t,v_t}, s')$ in $G'$
  to a walk $W=(s, v_1, \ldots, v_t, s)$ in $G$ that is consistent with the given color order.
  By construction, $v_i \in \chi^{-1}(c_i)$ for every~$i$, and hence $c_i \in \chi(v_i)$.
  Also, from
  $w(s, v_1) + w(v_t,s) + \sum_{i=1}^{t-1} w(v_i, v_{i+1})
  =w'(s,u_{1,v_1}) + w'(u_{t,v_t}, s') + \sum_{i=1}^{t-1} w'(u_{i,v_i}, v_{i+1,v_{i+1}})$,
  the weight of $P$ is equal to the weight of $W$.

  Suppose $\tilde{P}=(s, a_1, \ldots, a_p, s')$ be a shortest path from $s$ to $s'$ in $G'$.
  By construction, $a_1 \in U_1$, $a_k \in U_t$,
  and each of the other edges increases the layer index by one.
  So, $\tilde{P}$ must be in the form of $(s, u_{1,v_1}, \ldots, u_{t,v_t}, s')$.
  This implies that $P$ has the minimum weight, then $W$ also has the minimum weight.

  For the running time, constructing $G'$ and running Dijkstra's algorithm on $G'$ take
  $\Oh(|E'| + |V'|\log |V'|)$,
  which is $\tilde{\Oh}(tn^2)$ since $|V'|\leq tn+2$ and $|E'|\leq (t-1)n^2+2n$.
\end{proof}

%

\begin{figure*}[t]
  \centering
  \begin{subfigure}{0.48\textwidth}
    \includegraphics[width=\linewidth]{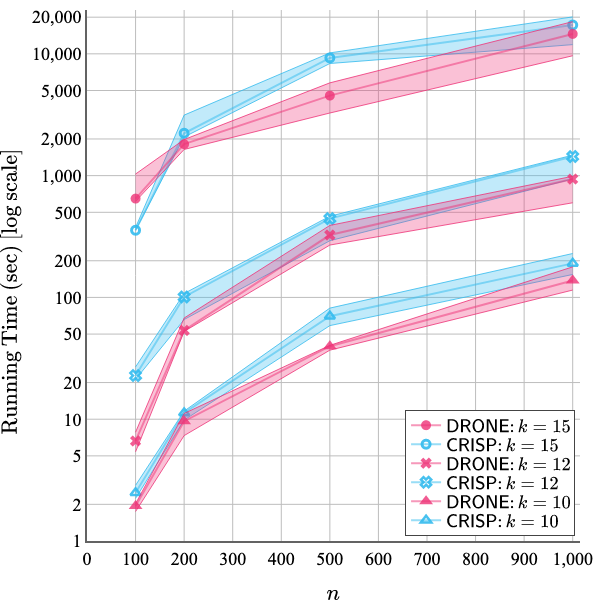}
  \end{subfigure}
  \hfill
  \begin{subfigure}{0.48\textwidth}
    \includegraphics[width=\linewidth]{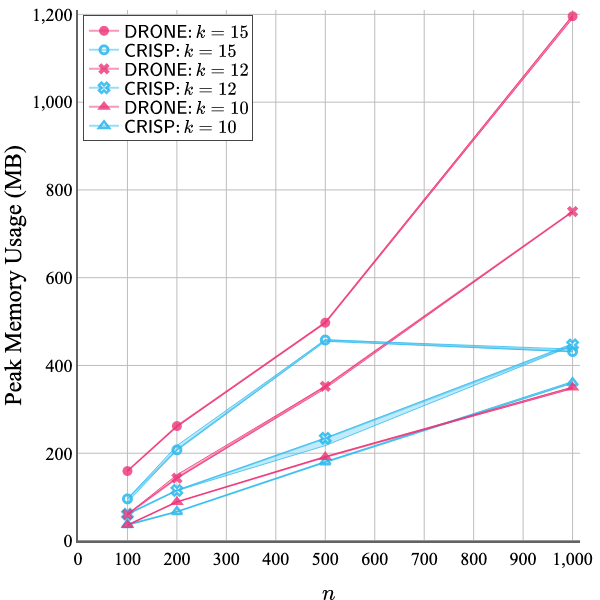}
  \end{subfigure}
  \caption{
  Overall running time (left) and peak memory usage (right) for different graph sizes and $k$ values.
  The median values are drawn with lines and markers,
  and the minimum and maximum values are shown as envelopes
  (same for \Cref{fig:multithread}).}
  \label{fig:large}
\end{figure*}

\begin{figure*}[t]
  \centering

  \begin{subfigure}{0.32\textwidth}
    \includegraphics[width=\linewidth]{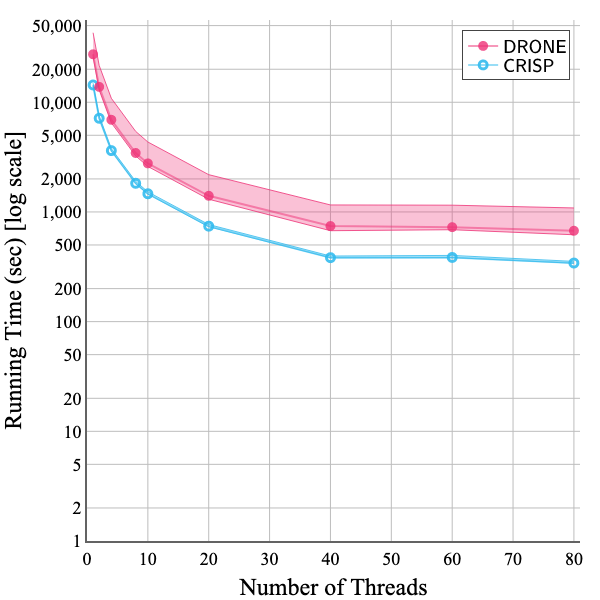}
  \end{subfigure}
  \hfill
  \begin{subfigure}{0.32\textwidth}
    \includegraphics[width=\linewidth]{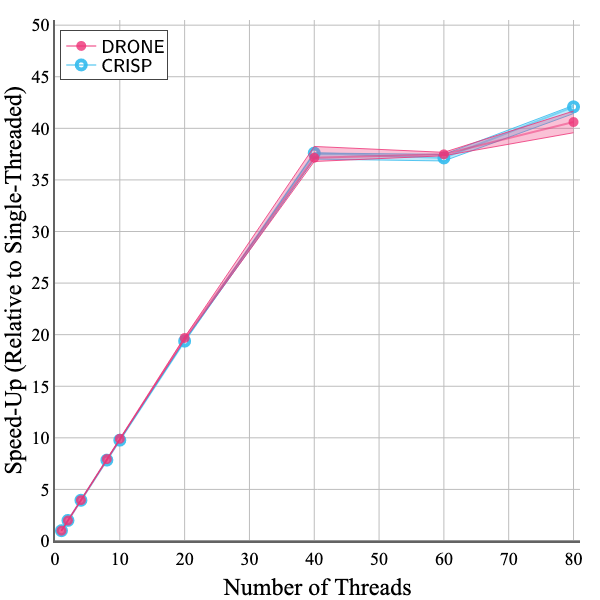}
  \end{subfigure}
  \hfill
  \begin{subfigure}{0.32\textwidth}
    \includegraphics[width=\linewidth]{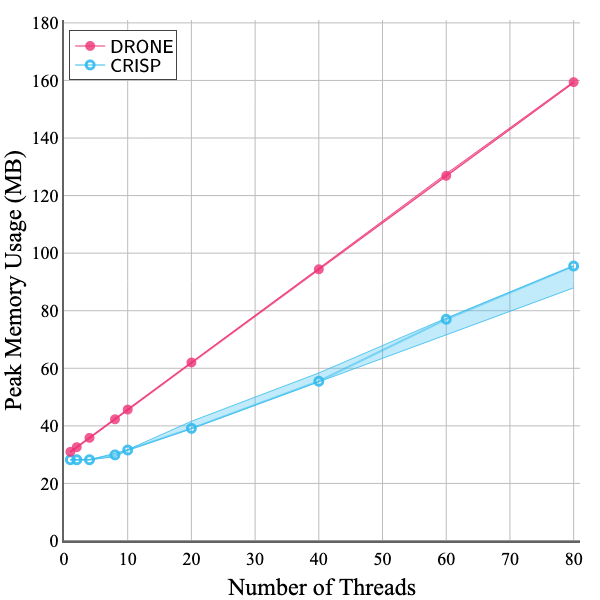}
  \end{subfigure}
  \caption{
  The running time (left),
  the speed-up compared to single-threaded (middle),
  and the peak memory usage (right) of \algipa
  for different numbers of threads.}
  \label{fig:multithread}
\end{figure*}

\section{Experiments on Scalability}\label{appendix:scalability}

In addition to the experiments in the main paper,
we ran experiments on scalability, specifically with respect to different graph sizes and numbers of threads.

\subsection{Scalability with Respect to Graph Size.}\label{appendix:scalability-large}

To see the scalability for $n$ values (the number of vertices),
we ran \algipa with our default subroutines
(\csemi, \sunified, \rtp), $\lambda_\text{small}$, success probability $0.9$, $k=10,12,15$, and varied $n$
from $100$ to $1000$, which matches the graph size of the experiments in~\cite{fu2023asymptoticallyoptimal}.

\Cref{fig:large} (left) shows the overall running time for each configuration.
It is clear to see that for every instance, the running time grows polynomially with respect to $n$.
\Cref{fig:large} (right) plots the peak memory usage for the same experiment.
We observe almost-linear relationship between $n$ and memory usage,
with an exception with \CRISP, $n=1000$, and $k=15$.
This is due to a very low solution value triggered by the small scaling factor
(recall that the circuit size is sensitive to the solution value).

\subsection{Multithreading Analysis.}

Finally, we show results of multithreading experiments for \algipa, visualized in \Cref{fig:multithread}.
We tested with $n=100$, $k=15$, and different numbers of threads from $1$ (single-threaded) to $80$.
Other configurations are the same as those in \Cref{appendix:scalability-large}.

The left and middle figures depict running times, absolute and relative (to single-threaded).
We see a linear speed-up up to $40$ threads, which is the number of physical cores of our hardware.
Notice that for both datasets, up to that point, the speed-up is nearly ideal, achieving 
the speed-up of around factor-$38$ for $40$ threads,
in contract to around $17$ for \dpipa \cite{mizutani2024leveraging}.
The speed-up degrades beyond $40$ threads, but with $80$ threads, the speed-up reaches $40$.

The trend of the peak memory usage (right) is also straightforward;
the memory usage grows linearly to the number of threads,
independent of the number of physical cores.

\begin{figure*}[t]
  \centering

  \begin{subfigure}{0.32\textwidth}
    \includegraphics[width=\linewidth]{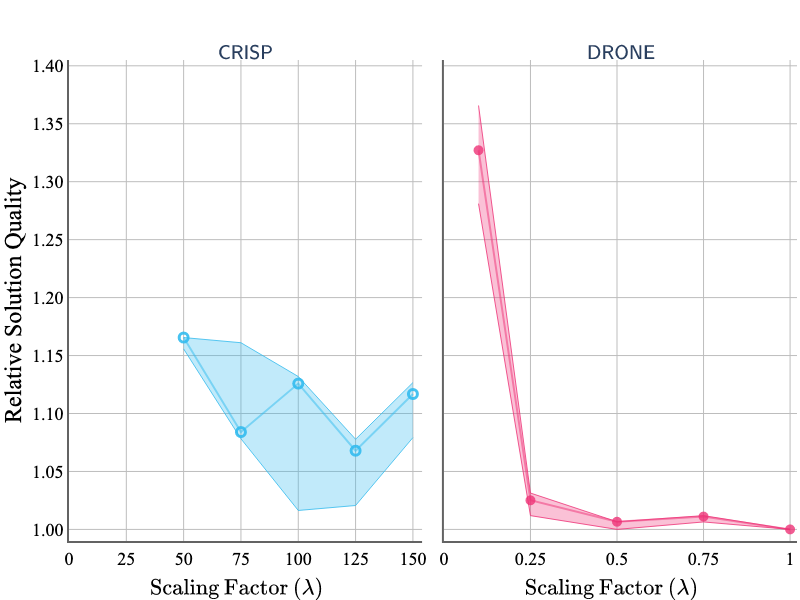}
  \end{subfigure}
  \hfill
  \begin{subfigure}{0.32\textwidth}
    \includegraphics[width=\linewidth]{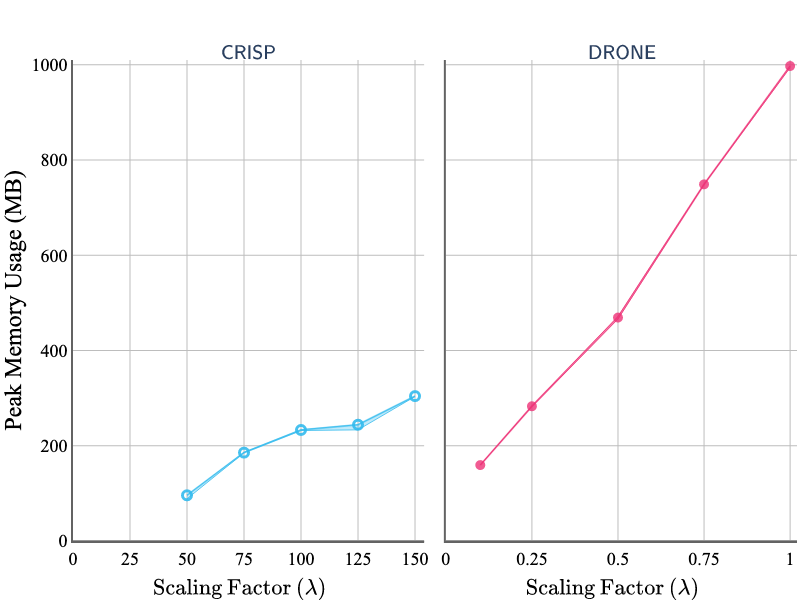}
  \end{subfigure}
  \hfill
  \begin{subfigure}{0.32\textwidth}
    \includegraphics[width=\linewidth]{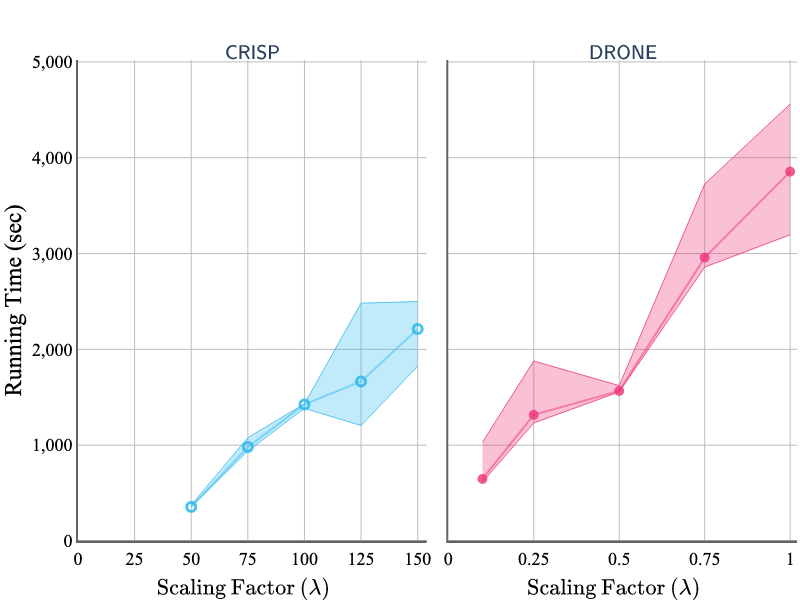}
  \end{subfigure}
  \caption{
  The ratio of the weight obtained by \algipa to the optimal weight (left),
  the peak memory usage (middle),
  and the overall running time (right) for different scaling factors.
  The median values are drawn with lines and markers,
  and the minimum and maximum values are shown as envelopes
  (same for \Cref{fig:success,fig:algdp}).
  Larger scaling factors improve solution quality
  at the cost of increased memory usage and running time
  (discussions in \Cref{sec:exp:scale})}
  \label{fig:scale}
\end{figure*}

\begin{figure*}[t]
  \centering
  \begin{subfigure}{0.48\textwidth}
    \includegraphics[width=\linewidth]{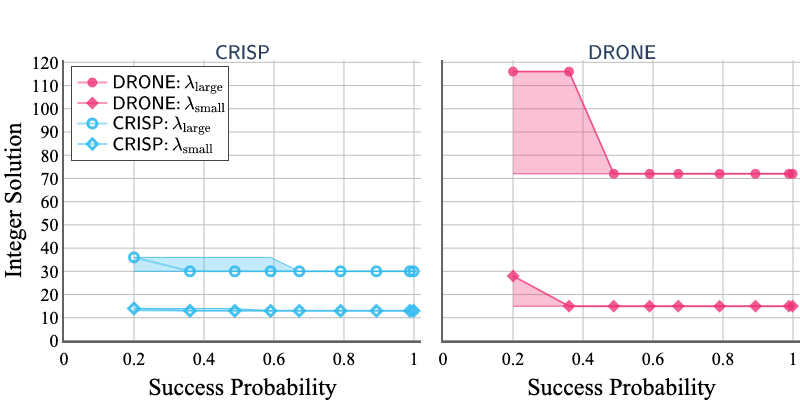}
  \end{subfigure}
  \hfill
  \begin{subfigure}{0.48\textwidth}
    \includegraphics[width=\linewidth]{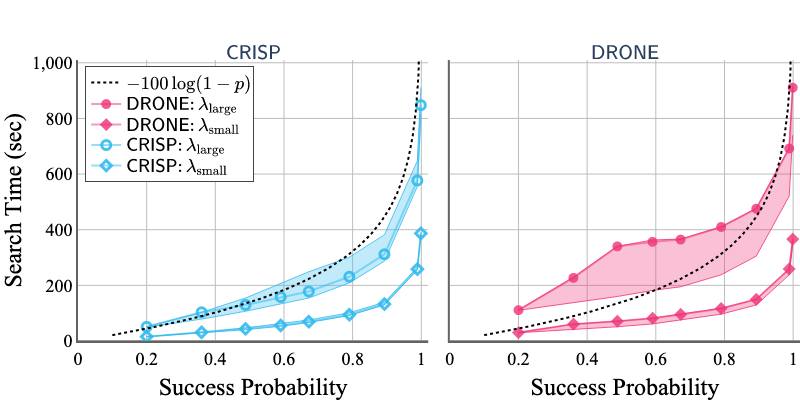}
  \end{subfigure}
  \caption{
  Obtained integer solutions (left) and search times (right) for varied success probabilities.
  Our algorithm finds optimal integer solution even with the expected success probability around $0.7$
  (discussions in \Cref{sec:exp:success}).}
  \label{fig:success}
\end{figure*}

\section{Preprocessing Results}\label{appendix:preprocessing}

We present results of preprocessing on our \gi{} instances. Because $k$ (the number of colors present in the graph)
is small, this preprocessing is quite effective. In particular, removing colorless vertices significantly reduces
the size of the graphs.
\Cref{table:preprocessing} shows statistics of the instances used for \Cref{appendix:scalability-large}.

\begin{table*}[h]
    \caption{\label{table:preprocessing}
    Instance sizes before and after preprocessing.
    $n_{\text{build}}$ is the parameter given to the software of Fu \emph{et al.}~\cite{fu2021computationally}
    during instance construction, and can be thought of as a ``target'' number of vertices for the constructed graph.
    $k$ is the number of colors remaining after performing color reduction.
    $n$ and $m$ are the number of vertices and edges in the color-reduced instance,
    while $n'$ and $m'$ are the same statistics after preprocessing.
    Note that we require a complete graph, $m' = \binom{n'}{2}$.
    }
    \centering
    \begin{tabular}{l | r | r | r r | r r}
      Dataset &$n_\text{build}$ &$k$ &$n$ &$m$ &$n'$ &$m'$ \\
      \hline
      \multirow{12}*{\CRISP} &\multirow{3}*{100}  &10 &\multirow{3}*{113} &\multirow{3}*{951}  &20  &190\\
                            &                    &12 &                   &                    &24  &276\\
                            &                    &15 &                   &                    &25  &300\\
                            \cline{2-7}
                            &\multirow{3}*{200}  &10 &\multirow{3}*{209} &\multirow{3}*{2132} &35  &595\\
                            &                    &12 &                   &                    &40  &780\\
                            &                    &15 &                   &                    &46  &1035\\
                            \cline{2-7}
                            &\multirow{3}*{500}  &10 &\multirow{3}*{502} &\multirow{3}*{6602} &76  &2850\\
                            &                    &12 &                   &                    &77  &2926\\
                            &                    &15 &                   &                    &91  &4095\\
                            \cline{2-7}
                            &\multirow{3}*{1000} &10 &\multirow{3}*{1006}&\multirow{3}*{15727}&145 &10440\\
                            &                    &12 &                   &                    &150 &11175\\
                            &                    &15 &                   &                    &170 &14365\\
      \hline
      \multirow{12}*{\DRONE} &\multirow{3}*{100}  &10 &\multirow{3}*{119} &\multirow{3}*{878}  &16  &120\\
                            &                    &12 &                   &                    &19  &171\\
                            &                    &15 &                   &                    &23  &253\\
                            \cline{2-7}
                            &\multirow{3}*{200}  &10 &\multirow{3}*{209} &\multirow{3}*{2043} &30  &435\\
                            &                    &12 &                   &                    &33  &528\\
                            &                    &15 &                   &                    &43  &903\\
                            \cline{2-7}
                            &\multirow{3}*{500}  &10 &\multirow{3}*{501} &\multirow{3}*{6806} &65  &2080\\
                            &                    &12 &                   &                    &79  &3081\\
                            &                    &15 &                   &                    &97  &4656\\
                            \cline{2-7}
                            &\multirow{3}*{1000} &10 &\multirow{3}*{1002}&\multirow{3}*{16227}&107 &5671\\
                            &                    &12 &                   &                    &143 &10153\\
                            &                    &15 &                   &                    &176 &15400\\
    \end{tabular}
    \vspace{1pt}
    \vspace{-2.5em}
\end{table*}

\end{document}